\documentclass[11pt]{article} 

\usepackage[letterpaper,margin=1in]{geometry}

\usepackage[T1]{fontenc}
\usepackage{microtype}
\usepackage{wrapfig}
\usepackage{times}
\usepackage{algorithm}
\usepackage{algorithmic}
\usepackage{multirow}
\usepackage{diagbox,xcolor}
\usepackage{titlesec}
\titleformat*{\paragraph}{\bfseries}

\usepackage{amsthm,amsmath,amssymb,amsfonts,amssymb,mathtools}
\usepackage{amsmath,amssymb,amsfonts,amssymb,mathtools}
\usepackage{xcolor}
\usepackage{empheq}
\usepackage{dsfont}
\usepackage{mathrsfs}
\usepackage{graphicx}
\usepackage{enumitem}
\usepackage{verbatim}
\usepackage{aliascnt} 
\usepackage{xspace}
\usepackage{booktabs}
\usepackage{bbm}

% \ifred
% \newcommand{\new}[1]{\textcolor{red}{#1}}
% \else
% \newcommand{\new}[1]{#1}
% \fi

\usepackage{caption}
\usepackage{tikz}
\usetikzlibrary{patterns}
\usetikzlibrary{arrows,shapes,automata,backgrounds,petri,positioning}
\usetikzlibrary{shadows}
\usetikzlibrary{calc}
\usetikzlibrary{spy}
\usetikzlibrary{angles, quotes}
% \usetikzlibrary{external}
%\tikzexternalize[prefix=tikz/]
%\tikzexternalize[prefix=tikz/,optimize command away=\includepdf]
\usetikzlibrary{matrix}
\usepackage{pgf,pgfplots}
\usepackage{pgfmath,pgffor}
\pgfplotsset{compat=1.17}

\usepackage{framed}
 % \usepackage[ruled, vlined]{algorithm2e}
% \usepackage[noend]{algpseudocode}
% \SetAlCapFnt{\normalfont\small}\SetAlCapNameFnt{\unskip\itshape\small}

\definecolor[named]{ACMBlue}{cmyk}{1,0.1,0,0.1}
\definecolor[named]{ACMYellow}{cmyk}{0,0.16,1,0}
\definecolor[named]{ACMOrange}{cmyk}{0,0.42,1,0.01}
\definecolor[named]{ACMRed}{cmyk}{0,0.90,0.86,0}
\definecolor[named]{ACMLightBlue}{cmyk}{0.49,0.01,0,0}
\definecolor[named]{ACMGreen}{cmyk}{0.20,0,1,0.19}
\definecolor[named]{ACMPurple}{cmyk}{0.55,1,0,0.15}
\definecolor[named]{ACMDarkBlue}{cmyk}{1,0.58,0,0.21}

\usepackage[colorlinks,citecolor=blue,linkcolor=magenta,bookmarks=true]{hyperref}
\usepackage[nameinlink]{cleveref}
\crefname{ineq}{Inequality}{Inequality}
\creflabelformat{ineq}{#2{\upshape(#1)}#3}
\crefname{sub}{Subsection}{Subsection}
\creflabelformat{Subsection}{#2{\upshape(#1)}#3}
\crefname{sdp}{SDP}{SDP}
\creflabelformat{sdp}{#2{\upshape(#1)}#3}
\crefname{lp}{LP}{LP}
\creflabelformat{lp}{#2{\upshape(#1)}#3}
\usepackage{aliascnt}
\usepackage{cleveref}
\crefname{ineq}{Inequality}{Inequality}
\creflabelformat{ineq}{#2{\upshape(#1)}#3}
\crefname{sub}{Subsection}{Subsection}
\creflabelformat{Subsection}{#2{\upshape(#1)}#3}
\crefname{sdp}{SDP}{SDP}
\creflabelformat{sdp}{#2{\upshape(#1)}#3}
\crefname{lp}{LP}{LP}
\creflabelformat{lp}{#2{\upshape(#1)}#3}

\makeatletter

\makeatother

\newtheorem{theorem}{Theorem}[section]

\newtheorem{lemma}[theorem]{Lemma}
\newtheorem{informal theorem}[theorem]{Theorem (informal statement)}

\newtheorem{corollary}[theorem]{Corollary}
\newtheorem{claim}[theorem]{Claim}
\newtheorem{fact}[theorem]{Fact}

\newtheorem{remark}[theorem]{Remark}

\newtheorem{definition}[theorem]{Definition}

\newcommand{\mgfi}{\exp\left(|A_i\cdot x+h_ix_i|^2/B\beta^2\right)}

\newcommand\norm[1]{\left\| #1 \right\|}

\renewcommand\vec[1]{\mathbf{#1}}

\DeclareMathOperator*{\E}{\mathbb{E}}

\def\multichoose#1#2{\ensuremath{\left(\kern-.3em\left(\genfrac{}{}{0pt}{}{#1}{#2}\right)\kern-.3em\right)}}

\newcommand{\di}{\partial_{i}}

\newcommand{\R}{\mathbb{R}}

\newcommand{\poly}{\mathrm{poly}}

\newcommand{\cube}[1]{\{\pm 1\}^{#1}}
\newcommand{\cevent}{\mathcal{E}}
\newcommand{\scevent}{\mathcal{E}_S}

\newcommand{\sign}{\mathrm{sign}}

\newcommand{\Ind}{\mathds{1}}
\newcommand{\1}{\Ind}
\newcommand{\boldone}{\mathbf{1}}

\newcommand{\x}{\vec x}

\newcommand{\citet}{\cite}
\newcommand{\citep}{\cite}

\def\tv{\textnormal{d}_{\mathsf{TV}}}
\def\kl{\textnormal{d}_{\mathsf{KL}}}
\newcommand{\Gmrf}{\dpsi_{G,\beta,t}}
\newcommand{\Rademacher}{\text{Rad}{(1/2)}}

\newcommand{\Gauss}{\mathcal{N}}

\newcommand{\ind}{\mathbbm{1}}
\newcommand{\dmrf}{D_{\psi}}

\newcommand{\dhamming}{d_{H}}
\newcommand{\csmooth}{$C$-\textit{smooth }}
\newcommand{\dpsi}{\mathcal{D}}
\newcommand{\parity}[1]{\chi_{#1}}
\newcommand{\sigmoid}{\sigma}
\newcommand{\diffsigmoid}{\left(\sigma(p(X))-\sigma(2\di \psi(X))\right)^2}

\newcommand{\diffpoly}{|p(X)-2\di \psi(X)|}
\title{Learning the Sherrington-Kirkpatrick Model \\ Even at Low Temperature}
\author{Gautam Chandrasekaran\thanks{\texttt{gautamc@cs.utexas.edu}. Supported by the NSF AI Institute for Foundations of Machine Learning (IFML).}\\
UT Austin
\and
Adam R. Klivans\thanks{\texttt{klivans@cs.utexas.edu}. Supported by the NSF AI Institute for Foundations of Machine Learning (IFML).}\\
UT Austin
}
\date{}
\begin{document}

\maketitle

\begin{abstract}

We consider the fundamental problem of learning the parameters of an undirected graphical model or Markov Random Field (MRF) in the setting where the edge weights are chosen at random.  For Ising models, we show that a multiplicative-weight update algorithm due to Klivans and Meka learns the parameters in polynomial time for any inverse temperature $\beta \leq \sqrt{\log n}$. 

This immediately yields an algorithm for learning the Sherrington-Kirkpatrick (SK) model beyond the high-temperature regime of $\beta < 1$.  Prior work breaks down at $\beta = 1$ and requires heavy machinery from statistical physics or functional inequalities.  In contrast, our analysis is relatively simple and uses only subgaussian concentration.

Our results extend to MRFs of higher order (such as pure $p$-spin models), where even results in the high-temperature regime were not known.

\end{abstract}
\newpage
\section{Introduction}
 In this work, we revisit the problem of structure learning and parameter recovery in undirected graphical models or Markov Random Fields (MRFs) over a binary alphabet. This class of distributions plays a significant role in various fields, including statistical physics, mathematics, and theoretical computer science.
 %with various application in statistical physics, mathematics and theoretical computer science (\todo{CITE}). 
Perhaps the simplest and most well-studied class of graphical models is the Ising model. An Ising model $D_{A,h}$ is a distribution over $\cube{n}$ given by the factorization
\[
 \Pr_{X\sim D_{A,h}}[X=x]\propto \exp\Biggr(\sum_{i<j}A_{ij}x_ix_j+\sum_{i=1}^{n}h_ix_i\Biggr).
 \]
 Here $A$ is called the interaction interaction matrix and $h$ is the external field. The dependency graph of $D_{A,h}$ is the set of edges $(i,j)$ for which $A_{ij}$ is non zero. 
%Formally, an undirected graphical model $\D$ with dependency graph $G$ is a probability distribution over $\cube{n}$ such that every for $X\sim D$, $X_{i}$ is conditionally independent of the remaining coordinates of $X$ when the conditioning is  on the $(X_{j}\mid (i,j)\text{ is an edge in $G$})$. By the celebrated Hammersley-Clifford Theorem \cite{hammersley-clifford}, every distribution satisfying the above property (with the additional assumption that the density is positive everywhere) has a density function of the following form:
% \[
% \Pr_{X\sim D}[X=x]\propto \exp\left(\sum_{S\in C_{t}(G)}\psi_S(x)\right)=\exp\left(\psi(x)\right)
% \]
% where $t\in [n]$, $C_t(G)$ is the set of cliques of $G$ of size at most $t$ and $\psi_S$ are functions that only depend on the coordinates of $x$ in $S$. Any distribution of the above form is called a $t$-MRF. $\psi$ is a degree $t$ polynomial and called the factorization polynomial of $D$. The famous Ising model (\todo{CITE}) corresponds to the case when $t=2$.

The {\em structure learning} problem is to recover the underlying dependency graph $G$ given only iid samples from $D_{A,h}$.  We refer to the more difficult problem of recovering the edges of $G$ and their associated weights as {\em parameter recovery}.  The most commonly studied complexity measure for structure learning is the {\em width} of the Ising model denoted $\lambda(A,h) = \max_{i} (\sum_j |A_{ij}| + |h_i|$).  Klivans and Meka \cite{sparsitron} obtained the first algorithm with nearly optimal sample complexity and running time for learning Ising models (see also \cite{Bresler15,VuffrayMLC16,hamilton2017information}).  Their multiplicative-weight update algorithm {\em Sparsitron} uses $N = \exp(O(\lambda))\log n$ samples, runs in time $O(n^{2} N)$, and additionally recovers the parameters of the graph.

\subsection{Beyond Worst-Case Bounds and the SK Model}

Although the above bounds are best possible for all bounded-width Ising models \cite{santhanam2012information,sparsitron}, it is natural to ask whether the exponential dependence on $\lambda$ can be avoided for natural special cases of distributions.   A prominent example is the extensively studied \textit{Sherrington-Kirkpatrick} (SK) model.  The SK model is the Ising model with factorization %$\psi(x)=
$\frac{\beta}{\sqrt{n}}\sum_{i<j}A_{ij}x_ix_j+\sum_{i=1}^{n}h_ix_i$ where the entries of $A$ and $h$ are sampled iid from $\Gauss(0,1)$. % and In this work we only study gaussian external fields, that is $h_i\sim \Gauss(\mu,\sigma^2)$ for constant $\mu$ AND $\sigma$. 
We refer to this distribution as $\mathsf{SK}(\beta)$. 

When $\beta<1$, the SK model is said to be in the high-temperature regime, where various desirable properties such as replica symmetry hold \cite{talagrand2010mean}. The SK model experiences a phase transition at $\beta=1$, and $\beta\geq 1$ is known as the low temperature regime (see \cite{talagrand2003spin,panchenko2013sherrington} for a comprehensive discussion). Note that the underlying graph here is almost surely the complete graph, and so the structure learning problem is trivial. The parameter recovery problem, however, is still interesting. The problem had been first studied under the name \textit{spin glass inversion}, and heuristic methods were proposed in \cite{bachschmid2017statistical,mezard2009constraint}. Clearly, the width of this model is $O(\beta\sqrt{n})$ and thus the analysis from \cite{sparsitron} only guarantees a sample complexity and running time that is superpolynomial in $n$.

 For the easier problem (implied by parameter recovery) of recovering a distribution that is close in TV distance, \cite{anari2024universality} obtained a polynomial time algorithm for the limited portion of the high-temperature regime ($\beta<\frac{1}{4}$). %\footnote{In fact, they can handle an arbitrary external field, which is not captured by \cite{gaitonde2024unified} or our work.}. 
 They rely on the \textit{approximate tensorization of entropy} of these models at high temperatures, which have many implications including the polynomial-time mixing of the Glauber dynamics of these distributions. Furthermore, their work relies on powerful results in functional inequalities \cite{bauerschmidt2019very,eldan2022spectral,adhikari2024spectral} that provably do not hold for $\beta>\frac{1}{4}$. 

To overcome the above barrier at $\beta=\frac{1}{4}$ and to attain parameter recovery, a very nice recent work of Gaitonde and Mossel \cite{gaitonde2024unified} takes a different approach. %They observe that strong notions such as approximate tensorization of entropy are not required for parameter recovery. 
They first prove that bounding the operator norm of the covariance matrix of the SK model suffices for parameter recovery. %They do so by arguing that the analysis of \cite{sparsitron} still goes through when some properties are satisfied with constant probability, rather than with probability 1 (we expand on this in \Cref{sec:ising}). 
They then apply a recent deep theorem on the boundedness of the operator norm of the covariance matrix of the SK model \cite{el2024bounds,brennecke2023operator,brennecke2024two} for the entire high-temperature regime $\beta<1$.  The boundedness of the operator norm provably does not hold for $\beta\geq 1$, and hence their techniques cannot work in the low-temperature regime. This suggests the existence of a barrier for efficient learnability beyond the high-temperature regime.

%Using this, they argue that the algorithm of \cite{sparsitron} run with $\poly(n)$ samples can recover parameters of the SK model for all constant $\beta<1$. 
% They do not consider the case of arbitrary graphs and it is not clear if their techniques extend to that case. Their work was the state of the art for this problem prior to this work.
\subsection{Our Results}
We show that, surprisingly, there is no such barrier. We prove that Sparsitron run with $\exp(O(\beta^2+\beta\sqrt{\log n}))\cdot \poly(n)$ samples recovers the parameters of the SK model. In particular, for $\beta\leq \sqrt{\log n}$, the algorithm runs in polynomial time.  In doing so, we subsume and improve the corresponding result from \cite{gaitonde2024unified}.
\begin{theorem}[\Cref{thm:sk_param}, Informal]
    \label{thm:sk_model_theorem_informal}
    With probability $1-O(1/n)$ over $D_{A,h}\sim \mathsf{SK}(\beta)$, there exists an algorithm that takes $N=\exp(O(\beta^2+\beta\sqrt{\log n}))\cdot \poly(n,1/\epsilon)$ iid samples from $D_{A,h}$ and recovers a matrix $\widehat{A}$ and vector $\hat{h}$ such that $\norm{A-\widehat{A}}_{\infty}\leq \epsilon$ and $\norm{h-\widehat{h}}_{\infty}\leq \epsilon$. The running time is $O(n^2\cdot N)$.
\end{theorem}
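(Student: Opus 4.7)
My plan is to run the Sparsitron algorithm of \cite{sparsitron} essentially unchanged and refine its analysis so that the dependence on the worst-case width $\lambda(A,h)=\Theta(\beta\sqrt{n})$ is replaced by an \emph{effective width} determined by the typical magnitude of the local energies $A_i\cdot x_{-i}+h_i$ under $x\sim D_{A,h}$. The exponential-in-$\lambda$ factor in the Klivans--Meka sample complexity enters because $\|A_i\|_1\le\lambda$ uniformly upper-bounds $|A_i\cdot x_{-i}|$ for $x\in\{\pm 1\}^n$; if instead one can guarantee $|A_i\cdot x_{-i}+h_i|\le R$ with overwhelming probability under $D_{A,h}$, then on this high-probability event the multiplicative-weight regret analysis can be rerun with $R$ in place of $\lambda$, giving sample complexity $\exp(O(R))\cdot\poly(n,1/\epsilon)$. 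The bulk of the remaining argument is then to prove that $R=O(\beta\sqrt{\log n})$ suffices.

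The central technical step is a subgaussian MGF bound: with probability $1-O(1/n)$ over $(A,h)\sim\mathsf{SK}(\beta)$, for every $i\in[n]$,
\[
\E_{x\sim D_{A,h}}\!\left[\exp\!\left(\frac{|A_i\cdot x+h_i x_i|^2}{B\beta^2}\right)\right]\le 2
\]
for an absolute constant $B>0$. Markov's inequality converts this into the tail bound $\Pr_{x\sim D_{A,h}}[|A_i\cdot x+h_i x_i|>t\beta]\le 2\exp(-t^2/B)$, and a union bound over the $n$ coordinates and $N$ training samples then ensures that every local energy seen by Sparsitron is at most $O(\beta\sqrt{\log(nN/\epsilon)})$ in magnitude, except on a failure event of probability $1/\poly(n,1/\epsilon)$. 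Plugging this effective width into the refined Sparsitron analysis, and paying an additional $\exp(O(\beta^2))$ factor to absorb the residual subgaussian tail and the cost of truncating the rare bad samples, yields the claimed sample complexity $\exp(O(\beta^2+\beta\sqrt{\log n}))\cdot\poly(n,1/\epsilon)$. The $O(n^2N)$ running time is inherited directly from \cite{sparsitron}.

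The main obstacle is establishing the MGF bound in the low-temperature regime $\beta\ge 1$, where every previously used tool---operator-norm bounds on $\cov(x)$ \cite{el2024bounds,brennecke2023operator,gaitonde2024unified} and spectral-gap / approximate-tensorization estimates \cite{eldan2022spectral,anari2024universality}---provably breaks down. My plan is to bypass these functional-inequality tools and exploit the Gaussianity of $(A,h)$ directly. The guiding observation is that, conditional on $x$ alone, $A_i\cdot x+h_i x_i$ is Gaussian of variance at most $\beta^2(\|x_{-i}\|_2^2+1)/n\le\beta^2$, so each local energy is $O(\beta)$-subgaussian over the randomness of $(A_i,h_i)$ with $x$ held fixed. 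The difficulty is that under $D_{A,h}$ the configuration $x$ is not independent of the row $(A_i,h_i)$. I plan to decouple this by conditioning on the remaining rows $A_{-i}$---which alone carry the Gibbs interaction of $x_{-i}$---and then performing a Gaussian integration-by-parts in $(A_i,h_i)$ to swap the expectations over couplings and configurations. The resulting tilted Gaussian integral should be controllable by a chi-squared argument costing at most the $\exp(O(\beta^2))$ factor that appears in the final sample complexity.
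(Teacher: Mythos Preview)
Your core technical idea---the MGF bound on the local energies $A_i\cdot X+h_iX_i$ under the Gibbs measure, obtained by decoupling the $i$th row from the remaining parameters and then using that $A_i\cdot x+h_i$ is $O(\beta)$-subgaussian for fixed $x$---is exactly what the paper does (\Cref{claim:MGF_high_probability}). The paper's implementation of the decoupling is concrete: it lower-bounds the partition function by $\frac{1}{2}\sum_x\exp(g_{(A,h)_{-i}}(x))$ (via $e^t+e^{-t}\ge 1$), which removes $(A_i,h_i)$ from the denominator and reduces the computation to $\E_{A_i,h_i}[\exp(x_i(A_i\cdot x+h_i)+(A_i\cdot x+h_i)^2/B\beta^2)]$, handled by Cauchy--Schwarz and subgaussian moment bounds. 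This is your ``Gaussian integration-by-parts / chi-squared'' step made precise. One correction: the argument only yields $\E_{A,h}\E_X[\exp(\cdot)]\le\exp(O(\beta^2))$, so after Markov over $(A,h)$ the pointwise bound is $n^2\exp(O(\beta^2))$, not $2$. This is why the $\beta^2$ term survives in the final effective width $C=O(\beta^2+\beta\sqrt{\log n})$.

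There is a genuine confusion in how you plan to \emph{use} the bound. The exponential-in-$\lambda$ factor in \cite{sparsitron} does not sit in the multiplicative-weight regret analysis---that analysis contributes only the polynomial factor $\lambda^2=\|A_i\|_1^2=O(\beta^2 n\log n)$ to the sample complexity and cannot be rerun with $R$ in place of $\lambda$ (the algorithm must still search over the full $\ell_1$ ball of radius $\|A_i\|_1$). The exponential enters only in the \emph{reduction} from small squared-sigmoid loss to $\|A_i-\widehat A_i\|_\infty\le\epsilon$, via the anti-Lipschitzness of $\sigma$ and the unbiasedness of the conditional law. The paper handles this reduction not by truncating samples or modifying Sparsitron, but by invoking Gaitonde--Mossel's \Cref{lem:gaitonde_recovery} as a black box: once you know $\Pr_{X\sim D_{A,h}}[|A_i\cdot X+h_i|\le C]\ge 3/4$ for all $i$ (which is precisely what the tail bound from the MGF gives), that lemma converts squared loss $\exp(-O(C))\epsilon^2$ into $\ell_\infty$ parameter error $\epsilon$. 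So the pipeline is: run Sparsitron unchanged with $\lambda=\|A_i\|_1$ and target error $\exp(-O(C))\epsilon^2$, then apply \Cref{lem:gaitonde_recovery}. Your union-bound-over-$N$-samples and truncation steps are unnecessary and, as written, risk a circularity between the sample size $N$ and the effective width $R=O(\beta\sqrt{\log(nN/\epsilon)})$.
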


As a corollary of the above theorem, we can output an Ising model that is close in TV distance to the ground truth.
\begin{corollary}[\Cref{clry:sk_tv}]
    With probability $1-O(1/n)$ over $D_{A,h}\sim \mathsf{SK}(\beta)$, there exists an algorithm that takes $\exp(O(\beta^2+\beta\sqrt{\log n}))\cdot \poly(n,1/\epsilon)$ iid samples from $D_{A,h}$ and outputs an Ising model $D_{\widehat{A},\widehat{h}}$ such that $\tv(D_{\widehat{A},\widehat{h}},D_{A,h})\leq \epsilon$. The running time is $O(n^2\cdot N)$.
\end{corollary}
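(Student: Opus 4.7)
The plan is to derive the corollary as a black-box reduction from \Cref{thm:sk_param}: run the parameter-recovery algorithm at a polynomially smaller target accuracy and then convert the $\ell_\infty$ parameter error into TV distance via a pointwise bound on the log-density ratio.

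Concretely, I first invoke \Cref{thm:sk_param} with target precision $\epsilon' \eqdef \epsilon/(Cn^2)$ for a sufficiently large absolute constant $C$, obtaining $(\widehat A,\widehat h)$ with $\|A-\widehat A\|_\infty, \|h-\widehat h\|_\infty \leq \epsilon'$. This step inflates the sample complexity by only a $\poly(n)$ factor, which is absorbed into the $\poly(n,1/\epsilon)$ term in the corollary's bound, since \Cref{thm:sk_param} depends polynomially on $1/\epsilon$.

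The core step is a standard Ising-model calculation. Setting $\Delta(x) \eqdef \sum_{i<j}(A_{ij}-\widehat A_{ij})x_ix_j + \sum_i (h_i-\widehat h_i)x_i$, the triangle inequality gives $|\Delta(x)| \leq n^2\epsilon'$ uniformly over $x \in \cube{n}$, since $|x_i|=1$. Expanding
\[
\frac{D_{A,h}(x)}{D_{\widehat A,\widehat h}(x)} \;=\; \frac{Z(\widehat A,\widehat h)}{Z(A,h)}\,\exp(\Delta(x))
\]
and using $Z(\widehat A,\widehat h)/Z(A,h) = \E_{x\sim D_{A,h}}[\exp(-\Delta(x))] \in [e^{-n^2\epsilon'},e^{n^2\epsilon'}]$, the density ratio lies within a factor $e^{2n^2\epsilon'}$ of $1$ at every $x$. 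Choosing $C$ large enough that $e^{2n^2\epsilon'} \leq 1+\epsilon$ then yields $\tv(D_{A,h},D_{\widehat A,\widehat h}) \leq \epsilon$.

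I do not foresee a genuine obstacle: all of the difficulty is already contained in \Cref{thm:sk_param}, and the reduction above is a routine pointwise comparison that is oblivious to the SK-specific randomness of $(A,h)$. The only bookkeeping step is verifying that substituting $\epsilon' = \epsilon/n^2$ into the theorem's sample complexity preserves the corollary's claimed form, which is immediate from the polynomial dependence on $1/\epsilon$.
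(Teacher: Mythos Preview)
Your proposal is correct and follows essentially the same route as the paper: invoke parameter recovery at accuracy $\Theta(\epsilon/n^2)$ and then convert to TV via a pointwise bound on the log-density ratio (the paper packages this step as \Cref{lem:param_rec_tv_distance}, passing through KL and Pinsker, but the content is identical). One minor bookkeeping point: the formal \Cref{thm:sk_param} only guarantees $\|h-\widehat h\|_\infty \leq n\epsilon'$ rather than $\epsilon'$, but this is harmless since the external-field contribution to $\Delta(x)$ is then at most $n\cdot n\epsilon' = n^2\epsilon'$, which is still absorbed by your choice of $C$.
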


%The dependence on $\beta$ in the above bound is strictly better than the corresponding result of \cite{gaitonde2024unified}, even in the high-temperature regime, as their bound scales with $\exp(\frac{1}{1-\beta})$. %which becomes arbitrarily large as $\beta$ approaches one. In our bound, the increase in running time is much more gradual.
We note that for $\beta<1$, the running time of \cite{gaitonde2024unified} is $\exp(O(1/(1-\beta)))\cdot \poly(n)$. For $\beta>1-o(1/\log n)$, this running time becomes super polynomial. Thus, our running time is better even in the high temperature regime for $\beta>1-o(1/\log n)$. A feature of our proof is that the Gaussian external field case does not require any additional work, whereas the corresponding proof of \cite{gaitonde2024unified} relies on a highly technical argument that does not apply to the full high-temperature regime. Finally, we note that the analysis of \cite{gaitonde2024unified} only implies learnability with probability $1-O(1/\log n)$ over the choice $D_{A,h}\sim \mathsf{SK}(\beta)$. On the other hand, our algorithm works with probability $1-O(1/n)$. 

In fact, our techniques solve a more general problem: given samples from an Ising model with arbitrary dependency graph $G$ and random weights, recover the graph and the parameters of the model. It is not clear if the techniques from prior work \cite{anari2024universality,gaitonde2024unified} solve this problem, even in the high-temperature regime. Formally, we work in the following model:

\begin{definition}[Random Ising Model]
    \label{defn:random_ising_model}
    Let $G$ be a graph on $n$ vertices with maximum degree $d$ and let $\beta>0$. We define  the distribution $\dpsi_{G,\beta}$ over Ising models as follows: a distribution $D\sim \dpsi_{G,\beta}$
    has weight matrix $A$ such that 
    \[
    A_{ij}=\begin{cases}
        \frac{\beta}{\sqrt{d+1}}\cdot \Gauss(0,1) \text{ (or $\frac{\beta}{\sqrt{d+1}}\cdot\Rademacher$)}& \text{ if $(i,j)$ is an edge in $G$}\\
        0 &\text{ otherwise}
    \end{cases}
    \] where each entry is sampled independently. Each entry of the external field $h$ is either $0$ or sampled from $\Gauss(0,1)$ (or $\Rademacher$). The parameter $\beta$ is called the inverse temperature. 
\end{definition}

Note that in the above definition, the graph $G$ is arbitrary and only the weights of the interaction are random. Some variants of the above definition have been studied under the names \textit{diluted spin glasses}~\cite{mezard1987spin,mezard2009information}, or the \textit{Edwards-Anderson} (EA) model \cite{edwards1975theory}.  When $G$ is instantiated to be the complete graph, the above definition corresponds to $\mathsf{SK}(\beta)$. We obtain the following theorem for parameter recovery. 

\begin{theorem}[\Cref{thm:random_ising_param}, Informal]
    \label{thm:random_ising_theorem_informal}
    With probability $1-O(1/n)$ over $D_{A,h}\sim \dpsi_{G,\beta}$ (with Gaussian weights), there exists an algorithm that takes $N=\exp(O(\beta^2+\beta\sqrt{\log n}))\cdot \poly(d,\log n,1/\epsilon)$ iid samples from $D_{A,h}$ and recovers a matrix $\widehat{A}$ and vector $\hat{h}$ such that $\norm{A-\widehat{A}}_{\infty}\leq \epsilon$ and $\norm{h-\widehat{h}}_{\infty}\leq n\epsilon$. The running time is $O(n^2\cdot N)$.
\end{theorem}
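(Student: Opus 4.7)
The plan is to invoke the Sparsitron algorithm of \cite{sparsitron} once per vertex: for each $i\in[n]$ we view the conditional law
\[
\Pr[X_i=1\mid X_{[n]\setminus\{i\}}] = \sigma\bigl(2(\langle A_{i,:},X\rangle + h_i)\bigr)
\]
as an $\ell_1$-constrained logistic-regression problem with unknown weight $(A_{i,:},h_i)$ and run the multiplicative-weight learner. Concatenating and symmetrizing the rows gives $(\widehat A,\widehat h)$, and the $\ell_\infty$ recovery guarantee follows from a union bound over the $n$ per-vertex problems.

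Out of the box, Klivans-Meka gives sample complexity $\exp(O(\lambda))\cdot\poly(\log n,1/\epsilon)$, where $\lambda=\max_i(\|A_{i,:}\|_1+|h_i|)$. For Gaussian weights of standard deviation $\beta/\sqrt{d+1}$ this $\ell_1$ width is $\Theta(\beta\sqrt d)$ with high probability, which is far too large. The improvement I would push through comes from re-examining the step in the Sparsitron analysis that produces the $e^{O(\lambda)}$ factor, namely the conversion of small log-loss into small squared error $(\hat w\cdot X - w^*\cdot X)^2$. Using $\sigma'(z)\geq e^{-|z|}/4$ together with Cauchy--Schwarz, the $e^\lambda$ factor can be replaced by a bound on the MGF
\[
M_i := \E_{X\sim D_{A,h}}\bigl[\exp\bigl(C|\langle A_{i,:},X\rangle+h_i|\bigr)\bigr]
\]
for a small constant $C$. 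The central technical claim is then that $M_i\leq \exp(O(\beta^2+\beta\sqrt{\log n}))$ uniformly in $i$, with probability $1-O(1/n)$ over $(A,h)\sim\dpsi_{G,\beta}$.

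To prove the MGF bound I would use the Gibbs identity
\[
\E_{X\sim D_{A,h}}[\exp(s\langle A_{i,:},X\rangle)] = \frac{Z(A,h+sA_{i,:})}{Z(A,h)},
\]
so that the quantity of interest is $\log Z(A,h+sA_{i,:})-\log Z(A,h)$ viewed as a function of $A_{i,:}$. Conditioning on the sub-$\sigma$-algebra generated by $h$ and by all entries of $A$ off the $i$-th row, the vector $A_{i,:}$ is an independent Gaussian with coordinate standard deviation $\beta/\sqrt{d+1}$. The gradient of this log-ratio in $A_{i,:}$ has entries of the form $s\E_{D_{A,h+sA_{i,:}}}[X_j]$, each bounded by $s$ in absolute value, so the function is $s\sqrt d$-Lipschitz in $\ell_2$. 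Gaussian Lipschitz concentration then delivers subgaussian deviations of size $O(s\beta\sqrt{\log n})$ (after a union bound over $i\in[n]$), while $\|A_{i,:}\|_2=O(\beta)$ from a chi-square tail together with a second-order Taylor expansion bounds the conditional mean by $O(s^2\beta^2)$. Taking $s=O(1)$ gives the claim.

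Plugging the MGF bound into the Klivans--Meka regret analysis replaces the $e^{O(\lambda)}$ factor by $\exp(O(\beta^2+\beta\sqrt{\log n}))$, producing per-vertex sample complexity $\exp(O(\beta^2+\beta\sqrt{\log n}))\cdot\poly(d,\log n,1/\epsilon)$ and total runtime $O(n^2 N)$. The main obstacle is the MGF lemma itself: because $X\sim D_{A,h}$ is correlated with $A_{i,:}$ through the Gibbs measure, no plain Gaussian tail bound on $\langle A_{i,:},X\rangle$ is directly available, and the entire improvement over \cite{gaitonde2024unified} rests on the Lipschitz-in-$A_{i,:}$ observation combined with Gaussian concentration, keeping the proof within the reach of elementary subgaussian methods and sidestepping the operator-norm estimates of \cite{el2024bounds,brennecke2023operator}.
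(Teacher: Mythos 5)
Your high-level plan matches the paper exactly: for each node $i$ run Sparsitron on the conditional law $\sigma(2(A_i\cdot X+h_i))$, replace the worst-case $e^{O(\lambda)}$ loss factor with a quantity controlled by the law of $A_i\cdot X+h_i$ under $D_{A,h}$ (this is precisely what \Cref{lem:gaitonde_recovery}, imported from Gaitonde--Mossel, provides), and reduce the whole problem to showing that $A_i\cdot X + h_iX_i$ is $O(\beta)$-subgaussian under $D_{A,h}$ with high probability over the draw of $(A,h)$. That is the same decomposition as the paper's, and the endgame (apply \Cref{thm:sparsitron} with squared-loss threshold $\exp(-O(C))\epsilon^2$, union bound over $i$) is also the same.

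The gap is in your proposed proof of the MGF bound. You write the Gibbs identity $\E_{D_{A,h}}[\exp(s A_i\cdot X)] = Z(A,h+sA_{i,:})/Z(A,h)$ and claim the log-ratio is $O(s\sqrt d)$-Lipschitz in $A_{i,:}$ because its gradient entries are ``of the form $s\E_{D_{A,h+sA_{i,:}}}[X_j]$''. That omits the contribution of $A_{i,:}$ through the quadratic part of \emph{both} partition functions: the true gradient entry is
\[
\partial_{A_{ij}}\bigl[\log Z(A,h+sA_{i,:})-\log Z(A,h)\bigr]
 = \E_{D_{A,h+sA_{i,:}}}[X_iX_j] + s\,\E_{D_{A,h+sA_{i,:}}}[X_j] - \E_{D_{A,h}}[X_iX_j],
\]
and the two $\E[X_iX_j]$ terms are taken over \emph{different} measures and do not cancel. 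To leading order in $s$ their difference is $s\cdot\mathrm{Cov}_{D_{A,h}}(X_iX_j,\,A_i\cdot X)$, which is a susceptibility-type quantity that can be as large as $\|A_i\|_1=\Theta(\beta\sqrt{d\log n})$ and in fact is exactly the kind of object that blows up in the low-temperature regime you are trying to reach. So the claimed $O(s\sqrt d)$ Lipschitz constant does not hold, and Gaussian Lipschitz concentration does not close the argument. The paper avoids this entirely: rather than concentrating a Lipschitz functional of $A_{i,:}$, it computes $\E_{A,h}\E_{X\sim D_{A,h}}[\exp(|A_i\cdot X+h_iX_i|^2/B\beta^2)]$ in closed form, lower-bounding the partition function by $\tfrac12\sum_x\exp(g_{(A,h)_{-i}}(x))$ via the elementary inequality $e^t+e^{-t}\geq 1$ — a quantity independent of $(A_i,h_i)$ — so that the inner expectation over $(A_i,h_i)$ becomes a pointwise subgaussian MGF computation for fixed $x$ (Cauchy--Schwarz plus \Cref{fact:subgaussian}). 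Markov's inequality then converts the average bound into a with-high-probability statement (\Cref{claim:MGF_high_probability}), and the tail bound of \Cref{lem:high_probability_small_correlation} follows. Your write-up would need to replace the Lipschitz-concentration step with something of this flavour (or find a genuinely different way around the partition-function dependence on $A_{i,:}$) before the MGF lemma is proved.
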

Note that if one is only interested in parameter recovery for the interaction matrix $A$, then our sample complexity is sub-polynomial in $n$ for bounded degree graphs.
In fact, when both the random interaction matrix and external field have rademacher entries, we can recover the parameters exactly (by rounding) with sample complexity sub-polynomial in $n$ and polynomial in $1/\beta$. 

\begin{theorem}[\Cref{thm:random_ising_param_rademacher}]
\label{thm:random_ising_rad_informal}
    With probability $1-O(1/n)$ over $D_{A,h}\sim \dpsi_{G,\beta}$ (with rademacher weights), there exists an algorithm that takes $N=\exp(O(\beta^2+\beta\sqrt{\log n}))\cdot O(d\log(n/\beta)/\beta^2)$ samples and recovers matrix $A$ and external field $h$ exactly. The running time is $O(n^2\cdot N)$. 
\end{theorem}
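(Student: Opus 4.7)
The plan is to reduce exact recovery to approximate $\ell_\infty$ recovery, as guaranteed by Theorem \ref{thm:random_ising_theorem_informal}, and then round, exploiting the discrete structure of the Rademacher case. Every nonzero entry of $A$ equals $\pm \beta/\sqrt{d+1}$ and every entry of $h$ lies in $\{-1,0,+1\}$, so the minimum gap between admissible parameter values is $\beta/\sqrt{d+1}$ for $A$ and $1$ for $h$. Any estimator whose $\ell_\infty$ error is strictly below half the gap recovers the true parameters exactly once each coordinate is rounded to its nearest admissible value.

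Concretely, I would first invoke the Rademacher analog of Theorem \ref{thm:random_ising_theorem_informal} with target accuracy $\epsilon_A = \beta/(4\sqrt{d+1})$ for the interaction matrix and $\epsilon_h = 1/4$ for the external field, and then output the matrix $\widehat{A}$ and vector $\widehat{h}$ obtained by rounding each coordinate of the Sparsitron estimate to the nearest element of $\{-\beta/\sqrt{d+1},0,\beta/\sqrt{d+1}\}$ and $\{-1,0,+1\}$, respectively. The per-row Sparsitron sample complexity for $\ell_\infty$ error $\epsilon$ and per-entry failure probability $\delta$ scales as $\log(n/\delta)/\epsilon^2$, and is multiplied by the subgaussian concentration factor $\exp(O(\beta^2+\beta\sqrt{\log n}))$ that is the central contribution of this paper. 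Union bounding over the $O(nd)$ nonzero entries of $A$ at failure probability $O(1/n^2)$ and substituting $\epsilon=\epsilon_A$ produces the claimed bound $\exp(O(\beta^2+\beta\sqrt{\log n}))\cdot O(d\log(n/\beta)/\beta^2)$; the $\log(n/\beta)$ absorbs the union bound together with the logarithmic dependence on $1/\beta$ coming from the choice of $\epsilon_A$.

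The main obstacle is the external field. Theorem \ref{thm:random_ising_theorem_informal} only states $\|\widehat{h}-h\|_\infty\leq n\epsilon$, and naively plugging $n\epsilon<1/2$ into the general bound would force $\epsilon=\Theta(1/n)$ and ruin the sub-polynomial dependence on $n$. I would address this by noting that the extra factor of $n$ in the $h$-guarantee of Theorem \ref{thm:random_ising_theorem_informal} comes from a reduction that estimates $h_i$ through an auxiliary row whose width is inflated; for the Rademacher model it suffices to estimate each $h_i$ directly, for instance by appending a constant coordinate to $x$ and running Sparsitron on the augmented feature, which yields $\ell_\infty$ error $\epsilon_h$ at cost $\log(n)/\epsilon_h^2$ per entry with the same subgaussian prefactor. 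Since $\epsilon_h$ is a universal constant, the cost of estimating $h$ is dominated by that of estimating $A$, and rounding then completes exact recovery. The principal technical point to verify is that the subgaussian tail bound on the conditional marginals established earlier for the Rademacher version of $\dpsi_{G,\beta}$ does supply this per-entry $\ell_\infty$ guarantee for $h$; once this is in hand, the remainder of the proof is the rounding and union-bound calculation above.
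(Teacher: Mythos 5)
Your reduction to approximate recovery plus rounding is the right high-level strategy, and your handling of $A$ is essentially correct: the $A_{ij}$ are recovered to within half the gap $\beta/\sqrt{d+1}$ and then rounded. But your treatment of the external field has a genuine gap, and you do not supply the idea the paper uses to close it.

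You correctly observe that the general guarantee $\|\widehat{h}-h\|_\infty \le n\epsilon$ would force $\epsilon = \Theta(1/n)$. Your fix — ``estimate each $h_i$ directly, for instance by appending a constant coordinate to $x$ and running Sparsitron on the augmented feature'' — does not actually avoid the factor of $n$. Appending a constant coordinate is precisely how Sparsitron already fits the intercept; the $n$-blowup is not an artifact of an ``auxiliary row whose width is inflated,'' it comes from the recovery step that converts small squared sigmoid loss into coordinate-wise bounds. Writing $r(x) = (w - 2A_i)\cdot x + (w_0 - 2h_i)$, the anticoncentration argument of \Cref{lem:struct_recovery_polynomial} and \Cref{claim:rec_max_monomial} gives a clean, $n$-free bound only for \emph{maximal} monomials of $r$. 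The constant monomial $\emptyset$ is maximal only if $w_j = 2A_{ij}$ for every $j$; as long as $w$ carries any residual error, $\emptyset$ is non-maximal, and the accumulated bound on its coefficient picks up a factor on the order of the number of higher monomials (\Cref{lem:polynomial_recovery}), which is where $n$ enters. Asserting a per-entry $\log(n)/\epsilon_h^2$ cost for $h_i$ is exactly the ``principal technical point to verify'' that you flag, and it is false as stated.

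The paper's actual argument introduces a modified Sparsitron (\Cref{thm:sparsitron_known_params}) that learns a sigmoid of $w\cdot X + g(X)$ where $g$ is a \emph{known} function, and then peels off coefficients degree by degree using fresh samples at each level. For the Ising case: first run Sparsitron to learn the degree-$1$ part of $2\partial_i\psi$, observe that the $A_{ij}$ coefficients are maximal monomials of the residual and so are recovered to within $\beta/(2\sqrt{d+1})$, and round them to recover $A$ \emph{exactly}. Then set $g(x) = 2\widehat{A}_i\cdot x = 2A_i\cdot x$ (now known exactly) and run the modified Sparsitron to learn the single remaining constant $2h_i$; since $A$ has been subtracted off exactly, $\emptyset$ \emph{is} a maximal monomial of the residual, so $h_i$ is recovered to within $1/2$ and rounded. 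This two-stage peel, enabled by the known-offset Sparsitron, is the step missing from your proposal.
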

 The structure learning algorithm is immediate: as long as each non-zero edge has value at least $\eta$, run algorithm from \Cref{thm:random_ising_theorem_informal} with $\epsilon=\eta/2$ and add all edges $(i,j)$ with $\widehat{A}_{ij}\geq \eta/2$ to the graph.  In the Gaussian case, via standard Gaussian anticoncentration, with high probability, we have $\eta\geq \Omega(\beta/n^3)$ and thus we need polynomial sample complexity to identify these edges. In the rademacher case, since we exactly recover $A$, we just look at the non zero entries of $A$ to recover the graph.
\begin{corollary}
    With probability $1-O(1/n)$ over $D_{A,h}\sim \dpsi_{G,\beta}$ (with rademacher weights), there exists an algorithm that takes $N=\exp(O(\beta^2+\beta\sqrt{\log n}))\cdot O(d\log(n/\beta)/{\beta^2})$ iid samples from $D_{A,h}$ and recovers the graph $G$.
\end{corollary}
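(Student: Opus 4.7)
The plan is straightforward: invoke \Cref{thm:random_ising_param_rademacher} to recover the interaction matrix $A$ exactly, and then read off the graph from its support.

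First, I run the algorithm guaranteed by \Cref{thm:random_ising_param_rademacher} with sample complexity $N=\exp(O(\beta^2+\beta\sqrt{\log n}))\cdot O(d\log(n/\beta)/\beta^2)$. With probability $1-O(1/n)$ over the draw of $D_{A,h}\sim \dpsi_{G,\beta}$ together with the samples, this procedure returns a matrix $\widehat{A}$ satisfying $\widehat{A}=A$ entrywise.

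Next, I exploit the fact that under the Rademacher variant of \Cref{defn:random_ising_model}, every active edge weight $A_{ij}$ lies in $\{\pm\beta/\sqrt{d+1}\}$, which is strictly nonzero. Hence $A_{ij}\neq 0$ if and only if $(i,j)$ is an edge of $G$. The deterministic post-processing step ``output the edge set $\widehat{E}=\{(i,j):\widehat{A}_{ij}\neq 0\}$'' therefore yields $\widehat{E}=E(G)$ on the success event of \Cref{thm:random_ising_param_rademacher}. The sample complexity, running time, and $1-O(1/n)$ success probability are inherited verbatim from that theorem.

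There is no genuine obstacle here: unlike the Gaussian setting, where Gaussian anticoncentration only rules out edge weights down to order $\Omega(\beta/n^3)$ and so forces an extra $\poly(n)$ factor in samples to distinguish truly zero weights from small nonzero ones, the Rademacher case supplies a fixed nonzero magnitude per active edge, so exact recovery of $A$ determines the graph by inspection. The only real work is to quote \Cref{thm:random_ising_param_rademacher}.
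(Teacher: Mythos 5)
Your proposal is correct and is essentially the same argument the paper gives: invoke \Cref{thm:random_ising_param_rademacher} to recover $A$ exactly, then read off the graph as the support of $\widehat{A}$, using the fact that Rademacher-weighted edges have fixed nonzero magnitude $\beta/\sqrt{d+1}$. The paper states this in a single sentence before the corollary (``In the rademacher case, since we exactly recover $A$, we just look at the non zero entries of $A$ to recover the graph''), and your write-up faithfully unpacks that reasoning.
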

\begin{corollary}
    With probability $1-O(1/n)$ over $D_{A,h}\sim \dpsi_{G,\beta}$ (with Gaussian weights), there exists an algorithm that takes $N=\exp(O(\beta^2+\beta\sqrt{\log n}))\cdot \poly(n,\log n,1/\beta)$ iid samples from $D_{A,h}$ and recovers the graph $G$.
\end{corollary}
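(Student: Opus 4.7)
The plan is to reduce graph recovery to parameter recovery via \Cref{thm:random_ising_theorem_informal}, combined with a Gaussian anti-concentration bound that guarantees every true edge has weight bounded away from zero by at least an inverse-polynomial margin. The overall algorithm is simply: run the parameter recovery procedure with a sufficiently small target accuracy $\epsilon$, and threshold the returned matrix $\widehat A$ at level $\epsilon$ to obtain the edge set.

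First I would establish an anti-concentration lower bound on the non-zero entries of $A$. Since each non-zero $A_{ij}$ equals $(\beta/\sqrt{d+1})\cdot g$ with $g \sim \mathcal{N}(0,1)$ independent across edges, standard Gaussian anti-concentration gives
\[
\Pr\bigl[|A_{ij}| \leq \eta\bigr] \;\leq\; \frac{\eta\sqrt{d+1}}{\beta}\cdot\sqrt{\tfrac{2}{\pi}}.
\]
Taking a union bound over the at most $\binom{n}{2}$ potential edges and choosing $\eta = c\beta/n^{3}$ for a small enough constant $c$ (using $d \leq n$) ensures that, with probability at least $1 - O(1/n)$ over the draw $D_{A,h} \sim \dpsi_{G,\beta}$, every edge of $G$ satisfies $|A_{ij}| \geq \eta = \Omega(\beta/n^3)$.

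Next I would condition on this anti-concentration event and on the success event of \Cref{thm:random_ising_theorem_informal}, both of which hold with probability $1-O(1/n)$. Running the parameter recovery algorithm with accuracy parameter $\epsilon = \eta/2 = \Theta(\beta/n^3)$ yields $\widehat A$ with $\|A - \widehat A\|_\infty \leq \eta/2$, at sample cost
\[
N \;=\; \exp\!\bigl(O(\beta^2 + \beta\sqrt{\log n})\bigr) \cdot \poly(d,\log n, 1/\epsilon) \;=\; \exp\!\bigl(O(\beta^2 + \beta\sqrt{\log n})\bigr) \cdot \poly(n,\log n,1/\beta),
\]
as desired. Define the estimated graph $\widehat G$ by including the edge $(i,j)$ iff $|\widehat A_{ij}| \geq \eta/2$.

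Finally I would verify exact recovery of the edge set. If $(i,j) \in E(G)$, then $|A_{ij}| \geq \eta$ by the anti-concentration bound and $|\widehat A_{ij}| \geq \eta - \eta/2 = \eta/2$, so the edge is included. If $(i,j) \notin E(G)$, then $A_{ij} = 0$ and $|\widehat A_{ij}| \leq \eta/2$, so the edge is excluded. Hence $\widehat G = G$ exactly. There is no real obstacle here beyond budgeting probabilities: the main subtlety is that the anti-concentration event must hold simultaneously with the parameter recovery success event, but since both are $1 - O(1/n)$, a union bound preserves the failure probability claimed in the statement.
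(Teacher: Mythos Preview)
Your proposal is correct and follows essentially the same approach as the paper: the paper's argument (stated in the paragraph immediately preceding the corollary) is exactly to invoke Gaussian anti-concentration to get $\eta \geq \Omega(\beta/n^3)$ with high probability, run the parameter recovery algorithm of \Cref{thm:random_ising_theorem_informal} with $\epsilon = \eta/2$, and threshold $\widehat{A}$ at $\eta/2$. Your write-up spells out the union-bound arithmetic and the two-sided thresholding verification a bit more explicitly, but the route is identical.
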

We also obtain closeness in TV distance as a direct corollary.
\begin{corollary}
     With probability $1-O(1/n)$ over $D_{A,h}\sim \dpsi_{G,\beta}$ (with Gaussian weights), there exists an algorithm that takes $\exp(O(\beta^2+\beta\sqrt{\log n}))\cdot \poly(n,1/\epsilon)$ iid samples from $D_{A,h}$ and outputs an Ising model $D_{\widehat{A},\widehat{h}}$ such that $\tv(D_{\widehat{A},\widehat{h}},D_{A,h})\leq \epsilon$. The running time is $O(n^2\cdot N)$.
\end{corollary}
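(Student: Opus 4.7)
The plan is to reduce this corollary directly to \Cref{thm:random_ising_theorem_informal} by showing that $\ell_\infty$-close parameters yield TV-close Ising distributions, at the cost of a polynomial factor in $n$ that can be absorbed into the final $\poly(n,1/\epsilon)$ sample bound.

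First I would run the algorithm from \Cref{thm:random_ising_theorem_informal} with target accuracy $\epsilon' \eqdef \epsilon/(C n^2)$ for a sufficiently large absolute constant $C$, to obtain estimates $\widehat{A},\widehat{h}$ satisfying $\|A-\widehat{A}\|_\infty \leq \epsilon'$ and $\|h-\widehat{h}\|_\infty \leq n\epsilon'$. Substituting this $\epsilon'$ into the sample bound of \Cref{thm:random_ising_theorem_informal} yields exactly $\exp(O(\beta^2+\beta\sqrt{\log n}))\cdot \poly(n,1/\epsilon)$ samples.

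The core step is to bound $\tv(D_{\widehat{A},\widehat{h}}, D_{A,h})$ in terms of $\epsilon'$. Writing $\Delta E(x) \eqdef \sum_{i<j}(A_{ij}-\widehat{A}_{ij})x_ix_j+\sum_i(h_i-\widehat{h}_i)x_i$ for the difference of the unnormalized log-densities, one has the pointwise bound
\[
|\Delta E(x)| \leq \binom{n}{2}\|A-\widehat{A}\|_\infty + n\|h-\widehat{h}\|_\infty \leq O(n^2\epsilon').
\]
Denoting the partition functions by $Z,\widehat{Z}$, we get $\log(Z/\widehat{Z}) = \log\E_{X\sim D_{\widehat{A},\widehat{h}}}[\exp(\Delta E(X))]$, so $|\log(Z/\widehat{Z})| \leq \max_x|\Delta E(x)| = O(n^2\epsilon')$ as well. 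Hence $|\log(D_{A,h}(x)/D_{\widehat{A},\widehat{h}}(x))| \leq O(n^2\epsilon')$ uniformly in $x$, which via the elementary inequality $|e^t - 1| \leq 2|t|$ for $|t|\leq 1$ (and our choice of $C$ making this quantity at most $1$) yields $\tv(D_{A,h}, D_{\widehat{A},\widehat{h}}) \leq O(n^2\epsilon') \leq \epsilon$.

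There is essentially no obstacle here: the only mild subtlety is the asymmetric guarantee on $h$ in \Cref{thm:random_ising_theorem_informal} (where $\|h-\widehat h\|_\infty$ is only controlled up to a factor of $n$), but since we are already losing an $n^2$ factor on the quadratic part, absorbing the extra $n$ from the linear part causes no change in the order of $\epsilon'$. The probability bound $1-O(1/n)$ and the runtime $O(n^2 N)$ are inherited directly from \Cref{thm:random_ising_theorem_informal}.
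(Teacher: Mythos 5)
Your proof is correct and reaches the same conclusion, but it is not quite the same argument as the paper's. Both proofs start from the same observation: $\ell_\infty$-close parameters give pointwise-close unnormalized log-densities (by at most $O(n^2\epsilon')$), which in turn bounds the partition-function ratio and hence the density ratio uniformly by $\exp(O(n^2\epsilon'))$. The paper (\Cref{lem:param_rec_tv_distance}) uses this density-ratio bound to first conclude $\kl(D_{\psi},D_{\tilde\psi})\leq 2\cdot O(n^2\epsilon')$, and then invokes Pinsker's inequality to get $\tv \leq \sqrt{O(n^2\epsilon')}$; consequently, their \Cref{clry:random_ising_tv} sets the parameter-recovery error to roughly $\epsilon^2/n^2$ and pays $1/\epsilon^8$ in the sample complexity. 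You instead linearize the density ratio directly ($|e^t-1|\leq 2|t|$ for $|t|\leq 1$) and bound $\tv \leq O(n^2\epsilon')$ without ever passing through KL or Pinsker. This is more elementary and quantitatively a bit sharper: you only need $\epsilon' = \Theta(\epsilon/n^2)$ rather than $\Theta(\epsilon^2/n^2)$, saving a factor of $\epsilon^4$ in the sample count. What the paper's route buys in exchange is that it also delivers a KL-divergence bound (stated explicitly in \Cref{clry:random_ising_tv}), which your argument does not produce as a byproduct. Since the corollary as stated only asks for $\poly(n,1/\epsilon)$ and a TV bound, both approaches are perfectly valid here, and yours is the tighter one for the TV conclusion alone.
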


%We remark that in the case of 
\subsection{Our Results: Markov Random Fields} We now move on to the more general problem of parameter recovery and structure learning in higher order Markov Random Fields (MRFs) with random weights. We first give a formal definition of an MRF.
\begin{definition}[Markov Random Field]
    A distribution $\dmrf$ is a $t$-wise Markov Random Field (MRF) with dependency graph $G$ if 
    \begin{equation}
        \label{eqn:mrf_density}
        \Pr_{X\sim \dmrf}[X=x]\propto \exp\bigr(\psi(x)\bigr)=\exp\Biggr(\sum_{S\in C_{t}(G)}f_{S}(x)\Biggr)
    \end{equation}
    where $C_t(G)$ are the cliques of $G$ of size at most $t$ and $f_S$ are arbitrary functions on the variables of $S$. The function $\psi(\x)$ is a degree $t$ polynomial called  the factorization polynomial of $\dmrf$. 
\end{definition}
Note that Ising models are the special case when $t=2$. The dependency graph can also be defined as the following: the set of neighbors of vertex $v$ is the minimal set $S$ such that $X_v$ and $X_{[n]\setminus S\cup\{v\}}$ are independent conditioned on the variables in $S$. Furthermore, every distribution with such a dependency structure can be expressed as an MRF \cite{hammersley-clifford}. Now, the width of the model $\lambda(\psi)=\max_{i}\norm{\di\psi}_1$ where $\norm{p}_1$ for a polynomial $p$ is the sum of the absolute values of its coefficients. $\di\psi$ is the partial derivative of $\psi$ with respect to the index $i$. Similar to the case of the Ising model, Klivans and Meka \cite{sparsitron} give an algorithm that uses $N=\exp(O(\lambda))\log n$ samples, runs in time $O(n^t\cdot N)$ and recovers the graph $G$, see also (\cite{hamilton2017information}). 

The first MRF we study is a generalization of the SK model called a pure $t$-spin model that is also widely studied (see \cite{talagrand2010mean,panchenko2013sherrington} for a discussion). The problem of parameter recovery in these models for the full high-temperature region was stated as an open problem in \cite{gaitonde2024unified} (they state a more general problem for mixed $t$-spin models, but even the special case of pure $t$-spin models is open and suffers the same technical barriers). This is the MRF with factorization $\psi(x)=\frac{\beta}{n^{(t-1)/2}}\sum_{\alpha\in [n]^t}\widehat{\psi}_{\alpha}\prod_{j=1}^{t}x_{\alpha_j}$ where $\widehat{\psi}_{\alpha}$ are sample iid from $\Gauss(0,1)$ for every multi-index $\alpha\in [n]^t$. Note that $\lambda(\psi)=\Omega(n^{(t-1)/2})$ and hence the running time of Sparsitron is again exponentially large. 

For a limited range of the high-temperature regime, the results of \cite{anari2024universality} do solve the weaker problem of recovering a distribution that is close in total variation distance. However, their techniques will provably not work for the entire high-temperature range, as these models experience the property of \textit{shattering} in the high-temperature regime, and this provably rules out fast mixing and entropy factorization \cite{arous2024shattering,alaoui2023shattering,gamarnik2023shattering}. 

Observing this barrier, \cite{gaitonde2024unified} conjectured that boundedness of certain moment matrices are necessary for learning in the entire high-temperature regime.
%it might be the case that a reduction to the property boundedness of moment matrices is necessary for the learning problem in the high temperature regime. Note that moment bounds are not known for these models. 
We prove that this is not the case. 
We show that it is possible to recover parameters even in the low temperature regime where the aforementioned moment matrices are provably unbounded \cite{el2024bounds}. We now state our theorem on parameter recovery in pure $t$-spin models.

\begin{theorem}[\Cref{thm:pure_tspin_recovery}, Informal]
\label{thm:pure_tspin_recovery_informal}
    Let $\dpsi$ be a pure $t$-spin model with inverse temperature $\beta$. Then, we have that with probability at least $1-O(1/n^t)$ over $\dmrf\sim \dpsi$, there exists an algorithm that draws $N=\exp(O(\beta^2t^2+\beta t\sqrt{t\log n}))\cdot \poly(n^t,1/\epsilon)$ samples and runs in time $O(N\cdot n^t)$ that outputs a $t$-MRF $D_{\tilde{\psi}}$ such that (1) $\norm{\psi-\tilde{\psi}}_1\leq \epsilon$, (2) $\tv(D_{\psi},D_{\tilde{\psi}})\leq {\epsilon}$.
\end{theorem}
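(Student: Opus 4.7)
The plan is to mirror the Ising analysis (\Cref{thm:random_ising_theorem_informal}) in the MRF setting, invoking the MRF version of Sparsitron from \cite{sparsitron,hamilton2017information} but replacing the worst-case width $\lambda(\psi) = \Omega(n^{(t-1)/2})$ in the sample complexity by an ``effective width'' derived from sub-Gaussian concentration of the partial derivatives $\partial_i \psi(x)$ under $x \sim D_\psi$, exploiting the random Gaussian coefficients.

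The central technical step is to show that with probability $1 - O(1/n^t)$ over the random coefficients $\{\widehat{\psi}_\alpha\}$, one has $\E_{x \sim D_\psi}[\exp(|\partial_i \psi(x)|^2 / K^2)] \leq 2$ simultaneously for all $i \in [n]$, where $K = O(\beta t + \beta \sqrt{t \log n})$. The starting observation is that for each fixed $x \in \cube{n}$, $\partial_i \psi(x)$ is a linear combination of the i.i.d.\ Gaussians $\widehat{\psi}_\alpha$, hence centered Gaussian; a short variance computation gives $\var[\partial_i \psi(x)] = O(\beta^2 t)$. I would upgrade this pointwise Gaussianity to control under the data distribution $D_\psi$ by uniformly controlling the coefficients of $\partial_i \psi$ (viewed as a polynomial of degree $t-1$ in $x_{[n] \setminus \{i\}}$) via a union bound over the $O(n^{t-1})$ monomials, combined with hypercontractivity / Hoeffding-type concentration for low-degree polynomials on the cube. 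The $\sqrt{t \log n}$ factor in $K$ should come from the $\log(n^{t-1})$ cost of that union bound and the $\beta t$ from the dependence on polynomial degree.

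With the sub-Gaussian parameter $K$ in hand, the remainder is a mostly black-box application of Sparsitron. The MRF-Sparsitron algorithm learns each $\partial_i \psi$ via multiplicative-weight-update regression on the $O(n^{t-1})$ monomial features, and the Klivans--Meka analysis only uses $\lambda$ through a moment-generating-function bound on the predictor $\partial_i \psi(x)$ under the data distribution; once the MGF bound from the previous paragraph is in hand, the sample complexity $\exp(O(\lambda))$ can be replaced by $\exp(O(K^2))$ (this is exactly how \Cref{thm:random_ising_theorem_informal} handles SK/random-Ising). This yields $\ell_1$-recovery of each $\partial_i \psi$ to error $\epsilon'$ using $N = \exp(O(K^2)) \cdot \poly(n^t, 1/\epsilon')$ samples.

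Finally I would pass from partial-derivative recovery to recovery of $\psi$ itself: each coefficient $\widehat{\psi}_\alpha$ appears (with a known multiplicative factor) in $\partial_i \psi$ for every $i$ in the support of $\alpha$, so averaging over such $i$ and rescaling gives $\|\psi - \tilde{\psi}\|_1 \leq \epsilon$ for an appropriate $\epsilon' = \Theta(\epsilon / t)$. Closeness in TV distance then follows from the standard bound $\kl(D_\psi \| D_{\tilde{\psi}}) \leq 2 \|\psi - \tilde{\psi}\|_\infty$ together with Pinsker. The main obstacle is the sub-Gaussian concentration step: a naive union bound over all $x \in \cube{n}$ would cost $\exp(\Theta(n))$ and is useless, so the argument must exploit that only the $O(n^{t-1})$ coefficients of $\partial_i \psi$ need be uniformly controlled rather than all $2^n$ values.
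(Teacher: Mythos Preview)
Your proposal correctly identifies the target of the key technical step---a sub-Gaussian/MGF bound on $\partial_i\psi(X)$ under $X\sim D_\psi$---but the method you sketch for obtaining it does not work, and this is precisely the place where the paper's argument is nontrivial.

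The difficulty is that $D_\psi$ and the coefficients $\{\widehat\psi_\alpha\}$ are \emph{coupled}: once you condition on a high-probability event for the coefficients (your ``uniform control via union bound over $O(n^{t-1})$ monomials''), $\partial_i\psi$ becomes a fixed degree-$(t-1)$ polynomial and there is no remaining Gaussianity to exploit. Hypercontractivity/Hoeffding-type bounds for low-degree polynomials on the cube give tail control only under the \emph{uniform} (or product) measure, not under $D_\psi$; transferring from uniform to $D_\psi$ via a density-ratio argument costs $\exp(\sup_x\psi(x)-\inf_x\psi(x))$, which is $\exp(\Theta(\beta\sqrt{n}))$ for the SK-type scaling and destroys the bound. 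So ``controlling the $O(n^{t-1})$ coefficients'' does not sidestep the $\exp(\Theta(n))$ cost you flag at the end---it only relocates it.

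What the paper does instead is compute the \emph{double} expectation $\E_{\psi}\E_{X\sim D_\psi}[\exp((\psi^S(X))^2/B)]$ directly, without first conditioning on the coefficients. The trick (\Cref{claim:mrf_mgf}, mirroring \Cref{claim:MGF_high_probability} for Ising) is to split $\psi=\psi^{-S}+\tfrac12\psi^S$ into parts with independent coefficients, then lower-bound the partition function $Z_\psi\geq \tfrac12\sum_x \exp(\psi^{-S}(x))$ using $e^t+e^{-t}\geq 1$. This removes $\psi^S$ from the denominator, so the inner expectation over $\psi^S$ becomes a pointwise sub-Gaussian MGF computation (Cauchy--Schwarz plus \Cref{fact:subgaussian}), and the outer $\psi^{-S}$-expectation telescopes. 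Markov's inequality then gives the high-probability statement over the coefficients. Note also that the paper needs this for \emph{all} $\psi^S$ with $|S|\leq t$ (the $C$-smooth condition, \Cref{defn:csmooth}), not just $|S|=1$: the polynomial-recovery step (\Cref{lem:struct_recovery_polynomial}, \Cref{lem:polynomial_recovery}) requires conditioning on events about neighbors at Hamming distance up to $t$, which is why merely bounding $|\partial_i\psi(X)|$ would not suffice even if your argument for it went through.
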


Again, our techniques solve the following more general problem: given iid samples from an MRF with dependency graph $G$ and random coefficients, find $G$ and recover the coefficients. Formally, we work in the following model:
\begin{definition}[Random MRF]
\label{defn:random_MRF}
    Let $G$ be a graph on $n$ vertices with maximum degree $d$. Let $\beta>0$ be the inverse temperature and $t>0$ be the degree of the MRF. The $t$-MRF $\psi$ sampled from the distribution over MRFs 
 $\dpsi_{G,\beta,t}$ is the MRF with factorization polynomial $\psi$ defined as 
 \[
 \psi(x)\coloneq \frac{\beta}{(d+1)^{(t-1)/2}}\sum_{S\in C_{t}(G)}\widehat{\psi}(S)\cdot\prod_{i\in S}x_i
 \] where $\{\widehat{\psi}(S)\}_{S\in C_{t}(G)}$ are sampled iid from $\Gauss(0,1)$ (or $\Rademacher$).
\end{definition}

We obtain the following theorems on parameter recovery and structure learning in these models.

\begin{theorem}[\Cref{thm:mrf_tv_recovery}, Informal]
    With probability $1-O(1/n^t)$ over $\dmrf\sim D_{G,\beta,t}$ (with Gaussian weights), there exists an algorithm that draws $N=\exp(O(\beta^2t+\beta t\sqrt{\log n}))\cdot \poly(n^t,1/\epsilon)$ iid samples from $\dmrf$ and runs in time $O(N\cdot n^t)$ that outputs a $t$-MRF $D_{\tilde{\psi}}$ such that (1) $\norm{\psi-\tilde{\psi}}_1\leq \epsilon$, (2) $\tv(D_{\psi},D_{\tilde{\psi}})\leq {\epsilon}$.
\end{theorem}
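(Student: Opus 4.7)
My plan is to refine the Sparsitron algorithm \cite{sparsitron} so that its sample complexity depends on a \emph{typical} bound $M$ on $|\di \psi(X)|$ rather than on the worst-case width $\lambda = \max_i \|\di \psi\|_1$. The architecture is: (i) reduce TV-distance closeness to $\ell_1$-closeness of the factorization polynomial via standard MRF estimates; (ii) decompose $\psi$-recovery into separately recovering each $\di \psi$ by running Sparsitron on the per-coordinate logistic regression $\Pr[X_i = 1 \mid X_{\sim i}] = \sigma(2\di \psi(X))$; and (iii) union-bound over $i \in [n]$. The refinement replaces the $e^{O(\lambda)}$ loss of strong convexity at the optimum (which stems from $\sigma'(\di \psi(X)) \gtrsim e^{-\lambda}$ worst-case) with $e^{O(M)}$, at the cost of an additive $O(1/n^t)$ error term, provided $|\di \psi(X)| \leq M$ holds with probability at least $1 - 1/n^t$ under the joint law of $(W, X \sim D_{\psi(W)})$.

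The technical heart is therefore the tail bound. For any \emph{fixed} $x \in \cube{n}$, $\di \psi(W)(x)$ is a sum of at most $td^{t-1}$ independent mean-zero Gaussians each of variance $\beta^2/(d+1)^{t-1}$, and is therefore Gaussian in $W$ with variance at most $O(t\beta^2)$. Combining this Gaussian concentration at level $M = \Theta(\beta t\sqrt{\log n})$ with the $e^{O(M)}$ Sparsitron cost yields the target complexity $\exp(O(\beta^2 t + \beta t\sqrt{\log n})) \cdot \poly(n^t, 1/\epsilon)$ after balancing $M$ against the sample size. To lift the per-$x$ bound to the joint distribution, I would exploit the sign-flip symmetry of the Gaussian weights: for every $T \subseteq [n]$ the map $\widehat\psi(S) \mapsto (-1)^{|S \cap T|}\widehat\psi(S)$ preserves the law of $W$ and acts on $x$ by flipping the coordinates in $T$, so the \emph{marginal} distribution of $X$ under the joint is uniform on $\cube{n}$.

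The main obstacle is converting this marginal uniformity into the joint tail bound, since $X$ and $W$ are correlated through the tilt $e^{\psi(W)(X)}/Z(W)$: bounding this tilt by $1$ and union-bounding over $x$ wastes a factor of $2^n$ and recovers only the worst-case width $\beta\sqrt{n}$ rather than the target $\beta t\sqrt{\log n}$. The natural way forward is to split the weights as $W = (W^{(i)}, W^{(-i)})$ according to whether the underlying clique contains the index $i$, use the independence of these two blocks to integrate out $W^{(i)}$ via the Gaussian MGF of $\di \psi$ conditional on $(W^{(-i)}, X_{\sim i})$, and bound the leftover ratio via the factorization $e^{\psi(x)}/Z(W) = e^{\psi^{-i}(x)} e^{x_i \di \psi(x)}/Z(W)$ together with the lower bound $Z(W) \geq \sum_{x_{\sim i}} e^{\psi^{-i}(x_{\sim i})} \cdot 2\cosh(\di \psi(x_{\sim i}))$ obtained by summing over $x_i \in \{\pm 1\}$. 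Once this decomposition produces a bounded joint MGF, the refined Sparsitron analysis and the TV/$\ell_1$ reduction in step (i) are essentially routine.
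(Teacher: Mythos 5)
Your overall plan — run Sparsitron per coordinate, replace the worst-case width $e^{O(\lambda)}$ factor by a typical bound $e^{O(M)}$ at the cost of a small failure probability, prove the tail bound via a joint MGF computed by splitting $W$ into the block of cliques containing $i$ versus those not, and lower-bounding $Z(W)$ in a way that depends only on the second block — is essentially the paper's argument. The paper writes the decomposition as $\psi = \psi^{-S} + \psi^S/2$, lower-bounds $Z_\psi \geq \tfrac12 \sum_x e^{\psi^{-S}(x)}$ by pairing $x$ with $x^S$ (your $2\cosh$ bound), and then bounds $\E_{\psi^S}\bigl[e^{\psi^S(x)/2}\,e^{(\psi^S(x))^2/B}\bigr]$ via Cauchy--Schwarz and subgaussianity (\Cref{claim:mrf_mgf}). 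Your sign-flip-symmetry observation is a nice alternative way to think about the marginal of $X$, but the paper does not need it once the MGF is computed directly.

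The genuine gap is that you control only $|\partial_i\psi(X)|$, i.e.\ the discrete derivative $\psi^{\{i\}}$, and treat the subsequent polynomial-coefficient recovery as ``routine.'' It is not: to extract $\widehat{\psi}(S)$ for $|S|$ up to $t$ from a $p_i$ with small squared sigmoid loss, the paper must show that the conditional distribution of $X_S$ given $X_{[n]\setminus S}$ is spread out (an ``effective unbiasedness''), because that conditional is exactly what the anticoncentration step in Lemma~\ref{lem:polynomial_recovery} acts on. The relevant density ratios are $\exp(\psi(x) - \psi(x^T)) = \exp(\psi^T(x))$ for sets $T \subseteq S$ of size up to $t-1$, not just $T = \{i\}$. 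This is why the paper introduces the $C$-smooth condition ($|\psi(x)-\psi(y)| \le C$ for \emph{all} $y$ at Hamming distance $\le t$, Definition~\ref{defn:csmooth}) rather than just bounding the single derivative, and why Theorem~\ref{thm:subgauss_to_smooth} union-bounds the MGF estimate over all $(n+1)^t$ sets $S$. Controlling only $\psi^{\{i\}}$ leaves the unbiasedness factor in the recovery step at its worst-case value $e^{O(\lambda)}$, which is exactly the dependence you set out to avoid. Relatedly, your target level $M = \Theta(\beta t\sqrt{\log n})$ misses the $\beta^2 t$ term that arises from the tilt $e^{\psi^S/2}$ in the MGF (the $\exp(C^2\lambda^2)$ factor from Cauchy--Schwarz), which is why the paper's smoothness constant is $O(\beta^2 t + \beta t\sqrt{\log n})$ rather than $O(\beta t\sqrt{\log n})$. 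The fix is exactly the generalization you already have in mind for $S=\{i\}$: carry out the same $W^{(S)}$--vs--$W^{(-S)}$ decomposition for all $|S| \le t$ and union bound, then prove a $C$-smooth analogue of the Klivans--Meka unbiasedness-based coefficient recovery (the paper's Lemmas~\ref{lem:struct_recovery_polynomial} and~\ref{lem:polynomial_recovery}).
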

\begin{corollary}[\Cref{thm:structure_learning_random_mrf}, Informal]
    With probability $1-O(1/n^t)$ over $\dmrf\sim D_{G,\beta,t}$ (with Gaussian weights), there exists an algorithm that draws $N=\exp(O(\beta^2+\beta\sqrt{\log n}))\cdot \poly(n^t, 1/\beta))$ iid samples from $\dmrf$ and runs in time $O(N\cdot n^t)$ that recovers the graph $G$.
\end{corollary}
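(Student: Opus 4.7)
The plan is to reduce structure learning to parameter recovery using the preceding random-MRF theorem, combined with Gaussian anticoncentration to guarantee that every nonzero monomial coefficient of $\psi$ is bounded away from zero.

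First, I would run the parameter recovery algorithm of the preceding theorem with a precision parameter $\epsilon_0$ to be chosen, obtaining a polynomial $\tilde{\psi}$ satisfying $\norm{\psi - \tilde{\psi}}_1 \leq \epsilon_0$. In particular, every monomial coefficient of $\tilde{\psi}$ lies within $\epsilon_0$ of the corresponding coefficient of $\psi$.

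Second, I would lower bound the magnitude of all nonzero coefficients of $\psi$ using Gaussian anticoncentration. For each clique $S \in C_t(G)$, the coefficient of $\prod_{i \in S} x_i$ in $\psi$ equals $\frac{\beta}{(d+1)^{(t-1)/2}} \widehat{\psi}(S)$ with $\widehat{\psi}(S) \sim \Gauss(0,1)$. The standard bound $\Pr[|g| \leq \eta] = O(\eta)$ for $g \sim \Gauss(0,1)$, combined with a union bound over the at most $n^t$ cliques, yields
\[
\Pr\!\left[\min_{S \in C_t(G)} |\widehat{\psi}(S)| \leq \eta\right] = O(n^t \eta).
\]
Setting $\eta = 1/n^{2t}$ makes this event have probability $O(1/n^t)$ and ensures that every nonzero coefficient of $\psi$ has magnitude at least $\tau := \Omega\bigl(\beta / (n^{2t}(d+1)^{(t-1)/2})\bigr) \geq \Omega(\beta/n^{3t})$.

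Third, I would set $\epsilon_0 = \tau/2$ and declare $(i,j)$ to be an edge of the recovered graph whenever the coefficient of $x_i x_j$ in $\tilde{\psi}$ exceeds $\tau/2$ in absolute value. Every edge $(i,j)$ of $G$ is itself a clique of size $2$ in $C_t(G)$, so on the anticoncentration event its coefficient in $\psi$ has magnitude at least $\tau$, while for every non-edge $(i,j)$ the coefficient of $x_i x_j$ in $\psi$ is exactly $0$. The threshold therefore recovers $G$ exactly.

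Since $1/\epsilon_0 = \poly(n,1/\beta)$, plugging into the sample complexity of the preceding parameter recovery theorem gives $\exp(O(\beta^2 + \beta\sqrt{\log n})) \cdot \poly(n^t, 1/\beta)$. The two failure events (parameter recovery failing, and some Gaussian coefficient being too small) each have probability $O(1/n^t)$, so a union bound gives overall success probability $1 - O(1/n^t)$. There is no real conceptual obstacle beyond this bookkeeping: the deep content sits in the parameter recovery theorem, and structure learning then follows by standard Gaussian anticoncentration plus thresholding, directly paralleling the Ising-case corollary already sketched earlier in the paper.
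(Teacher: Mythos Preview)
Your proof is correct, but it takes a different route from the paper's. The paper derives the formal version (\Cref{thm:structure_learning_random_mrf}) by invoking the general structure-learning theorem for $C$-smooth $\eta$-identifiable MRFs (\Cref{thm:structure learning MRFs}) together with \Cref{lem:random_mrf_prop}, which verifies $\eta$-identifiability, the $\ell_1$ bound on $\partial_i\psi$, and $C$-smoothness. That route avoids full parameter recovery: it only needs \Cref{lem:struct_recovery_polynomial} to locate \emph{maximal} monomials via the median-of-evaluations trick, and then reconstructs $G$ from the cliques those monomials span. You instead invoke the full parameter-recovery theorem (\Cref{thm:mrf_tv_recovery}) and threshold the degree-$2$ coefficients directly, exploiting the fact that in the random-MRF model every edge $\{i,j\}$ is itself a clique in $C_t(G)$ with its own Gaussian coefficient. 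Both approaches hinge on the same Gaussian anticoncentration step (indeed the paper uses exactly your calculation inside \Cref{lem:random_mrf_prop}); your version mirrors the Ising-case sketch in the introduction and is arguably more direct here, while the paper's route is slightly cheaper (it needs only \Cref{lem:struct_recovery_polynomial} rather than the stronger \Cref{lem:polynomial_recovery}, yielding a $1/\beta^2$ dependence instead of the $1/\beta^4$ your reduction implicitly incurs) and applies verbatim to any $\eta$-identifiable MRF even when edges are not themselves monomials.
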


Similar to our results for the Ising model, we obtain improved bounds when the weights are rademacher. We give an algorithm that draws only a sub-polynomial number of samples and recovers $\psi$ exactly. 
\begin{theorem}[\Cref{thm:random_mrf_param_rademacher}, Informal]
    With probability $1-O(1/n^t)$ over $\dmrf\sim D_{G,\beta,t}$ (with rademacher weights), there exists an algorithm that draws $N=\exp(O(\beta^2+\beta\sqrt{\log n}))\cdot O(td^{t}\log (n/\beta)/\beta^2)$ iid samples from $\dmrf$ and runs in time $O(N\cdot n^t)$ that recovers the polynomial $\psi$ exactly.
\end{theorem}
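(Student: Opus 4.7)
My plan is to reduce exact recovery of $\psi$ under rademacher weights to approximate parameter recovery at a carefully chosen precision, followed by a rounding step. By \Cref{defn:random_MRF}, every coefficient of $\psi$ is either $0$ (when $S \notin C_t(G)$) or one of $\pm \beta/(d+1)^{(t-1)/2}$ (when $S \in C_t(G)$). Thus, if I can produce an estimate $\tilde{\psi}$ whose coefficients are each within $\epsilon_0 < \beta/(2(d+1)^{(t-1)/2})$ of the true ones, rounding each coefficient of $\tilde{\psi}$ to the closest point of $\{-\beta/(d+1)^{(t-1)/2},\, 0,\, +\beta/(d+1)^{(t-1)/2}\}$ recovers $\psi$ exactly.

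For the approximation step I would run the Sparsitron-based parameter recovery procedure established earlier in the paper for random MRFs (the rademacher analog of \Cref{thm:mrf_tv_recovery}). Concretely, for each coordinate $i$, Sparsitron estimates the coefficients of the partial derivative $\di\psi$ to per-coefficient accuracy $\epsilon_0$, and these are aggregated across $i$ into the estimate $\tilde{\psi}$. The key ingredient---which is the main technical contribution of the paper---is that with probability $1 - O(1/n^t)$ over $\dmrf \sim \dpsi_{G,\beta,t}$, the effective ``width'' controlling Sparsitron's sample complexity is $O(\beta^2 + \beta\sqrt{\log n})$ rather than the worst-case $O(\beta\sqrt{d})$. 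This bound is driven by subgaussian concentration of the random coefficients, and is if anything cleaner in the rademacher case than in the Gaussian case since rademacher variables are bounded and $1$-subgaussian.

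Plugging $\epsilon_0 = \beta/(4(d+1)^{(t-1)/2})$ and failure probability $\delta = 1/n^t$ into Sparsitron's $\exp(O(\lambda)) \cdot \log(n^t/\delta)/\epsilon_0^2$ bound yields
\[
N = \exp(O(\beta^2 + \beta\sqrt{\log n})) \cdot O\bigl(t d^t \log(n/\beta)/\beta^2\bigr),
\]
matching the theorem statement, with running time $O(n^t \cdot N)$ dominated by evaluating degree-$t$ monomials on the samples inside Sparsitron. After rounding, every $S \in C_t(G)$ is identified with the correct sign $\widehat{\psi}(S) \in \{\pm 1\}$, and every $S \notin C_t(G)$ is correctly assigned coefficient $0$, so $\psi$ is recovered exactly.

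The step I expect to be most delicate is not the reduction itself, which is elementary once the precision is set, but the underlying effective-width bound $O(\beta^2 + \beta\sqrt{\log n})$ holding with probability $1-O(1/n^t)$ for random MRFs of degree $t$. This bound is the technical core of the paper, and transfers from the Gaussian to the rademacher setting essentially verbatim because rademacher weights have at least as good subgaussian tails. The reason this theorem avoids the $\poly(n^t, 1/\epsilon)$ factor present in the Gaussian informal statement is precisely that exact recovery only demands per-coefficient accuracy $\Theta(\beta/(d+1)^{(t-1)/2})$---a quantity determined by $\beta$ and $d$ alone, independent of any external $\epsilon$---so the sample complexity depends on $n$ only through a $\log n$ factor.
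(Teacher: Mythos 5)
Your high-level idea---estimate coefficients to precision roughly $\beta/d^{(t-1)/2}$ and then round to the discrete grid $\{0,\pm\beta/(d+1)^{(t-1)/2}\}$---is the right starting point, and the final sample complexity you write down matches the theorem. But the intermediate step you lean on does not hold, and fixing it is precisely where the paper does the real work.

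You assume that a single run of Sparsitron yields \emph{per-coefficient} accuracy $\epsilon_0$ at a cost comparable to $\lambda^2 \log(\cdot)/\epsilon_0^2$ (times the anti-Lipschitz factor). But the parameter-recovery statement for \csmooth MRFs (\Cref{lem:polynomial_recovery}) only gives an $\ell_1$-type bound $\norm{p-2\di\psi}_1 \leq O(1)(4t)^t\binom{n}{t}\exp(5C)\sqrt{\epsilon}$, where $\epsilon$ is the squared-sigmoid loss. Per-coefficient accuracy independent of $n$ is available only for \emph{maximal} monomials of the residual $r = p - 2\di\psi$ (\Cref{claim:rec_max_monomial}); for lower-degree coefficients the error accumulates through the $\binom{n}{t}$ factor, because the proof of \Cref{lem:polynomial_recovery} bounds lower-degree terms by fixing variables and charging against higher-degree errors already incurred. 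So if you insist on $\norm{p - 2\di\psi}_1 \le \epsilon_0 = \Theta(\beta/d^{(t-1)/2})$ in a single pass, you are forced to set the sigmoid error $\epsilon$ to scale like $\epsilon_0^2 / ((4t)^{2t}\binom{n}{t}^2\exp(10C))$, which drives the sample complexity to $n^{O(t)}$ --- the same as the Gaussian TV-recovery bound --- not the subpolynomial $O(td^t\log(n/\beta)/\beta^2)$ claimed.

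The paper circumvents this by \emph{iterative peeling}, and this is the missing idea. First run Sparsitron: only the degree-$(t-1)$ monomials of $r$ are maximal, so \Cref{claim:rec_max_monomial} gives $n$-independent per-coefficient accuracy $\exp(5C+3)2^{t/2}\sqrt{\epsilon}$ for exactly those. Set $\epsilon = \exp(-10C-6)\beta^2/(16(2d)^t)$, round to the grid, and obtain all degree-$t$ coefficients of $\psi$ \emph{exactly}. Then subtract the now-known degree-$(>j)$ part as a fixed offset function $g^j_i$ and rerun a \emph{modified} Sparsitron (\Cref{thm:sparsitron_known_params}, a new technical tool in the appendix that handles a known additive term $g(X)$ in the link), so that the degree-$(j-1)$ monomials of the new residual become maximal. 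Iterating $j = t, t-1,\ldots,1$ recovers all coefficients exactly with fresh samples at each level, paying a multiplicative $t$ and never a $\binom{n}{t}$. Your argument skips both the peeling structure and the known-offset variant of Sparsitron that makes it possible.

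As a smaller point, when you call $O(\beta^2+\beta\sqrt{\log n})$ the ``effective width,'' you are conflating two distinct quantities: the width $\lambda$ (which still enters Sparsitron's bound as $\lambda^2$, with $\lambda = O(\beta\, d^{O(t)})$ for the random MRF) and the smoothness parameter $C$, which controls the $\exp(O(C))$ loss in the anti-Lipschitz step. The subgaussian-derivative argument shrinks $C$, not $\lambda$; the $d^t$ and $\beta^{-2}$ factors in the sample complexity come from $\lambda$ and the target per-coefficient accuracy, respectively.
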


\subsection{Related Work}
 The problem of designing efficient algorithms for learning undirected graphical models has a long history of research \cite{chow1968approximating,wainwright2006high,abbeel2006learning,bresler2008reconstruction,netrapalli2010greedy,tandon2014learning}.
Current work on structure learning and parameter recovery for MRFs primarily focuses on the setting where $\lambda$, the sum of the absolute values of each node's edge weights, is bounded.  In an important result, Bresler \cite{Bresler15} gave the first polynomial-time algorithm for constant $\lambda$ that did not take any further assumptions on the underlying graph (such as correlation decay).  His algorithm, however, has sample complexity that is doubly exponential in $\lambda$.  Bresler's result was extended to general MRFs by Hamilton et al.~\cite{hamilton2017information}.  Vuffray et al.~\cite{VuffrayMLC16} gave the first efficient algorithm for learning Ising models with nearly optimal sample complexity (singly exponential in $\lambda$) but with suboptimal running time.  

Using a different approach, Klivans and Meka \cite{sparsitron} obtained the first algorithm for learning Ising models with both near-optimal sample complexity and near-optimal running time. Their algorithm also extends to MRFs and non-binary alphabets.  
% \footnote{Their algorithm also recovers the external fields, contrary to a remark in \cite{gaitonde2024unified}; see Lemma 6.4 in \cite{sparsitron}.}.  
In a follow-up work, Wu et al.~\cite{wu2019sparse} showed how to replace the algorithm in \cite{sparsitron} with an algorithm for sparse logistic regression.  Their analysis follows the same framework as \cite{sparsitron} and results in a slightly improved sample-complexity bound for large alphabets.  The Wu et al. algorithm was extended to the case of MRFs in \cite{zhang2020privately} by closely following the analysis of the MRF case in \cite{sparsitron}. Vuffray et al.~\cite{vuffray2022efficient} extended these results to more general models. Other recent works that extend Ising and MRF learning to different settings include \cite{goel2019learning,prasad2020learning,moitra2021learning,diakonikolas2021outlier, dagan2021learning, bhattacharyya2021near, gaitonde2024efficiently}.
\section{Preliminaries}

Given a set $S\subseteq [n]$ and a vector $x\in \cube{n}$, we use $x_{S}$ to denote the vector obtained from $x$ by restricting to the indices in the set $S$. Similarly we use $x_{\overline{S}}$ to denote the $x$ restricted to coordinates outside $S$. We use $\parity{S}(x)$ to denote the monomial $\prod_{i\in S}x_{i}$. Given a polynomial $p$, $\widehat{p}(S)$ refers to the coefficient of $\parity{S}$ in $p$. $\Gauss(0,1)$ refers to the univariate standard normal distribution. $\Rademacher$ refers to the Rademacher random variable that takes $\pm 1$ with probability $1/2$. For $x,y\in \cube{n}$, the Hamming distance between $x$ and $y$ is defined as $d_{H}(x,y)\coloneq \sum_{i\in [n]}\ind\{x_i\neq y_i\}$. Given a polynomial $p=\sum_{S\subseteq [n]}\widehat{p}(S)\parity{S}(x)$, the partial derivative with respect to set $T$ is defined as $\partial_Tp(x)\coloneq \sum_{S\supseteq T}\widehat{p}(S)\parity{S}(x)$. When $S=\{i\}$, we use the notation $\di p$ for the polynomial $\partial_{\{i\}}p$. A maximal monomial of $p$ is any set $S\subseteq[n]$ such that $\widehat{p}(T)=0$ for all $T\supset S$. The sigmoid function $\sigma:\R\to \R$ is defined as $\sigmoid(x)\coloneq \frac{1}{1+e^{-x}}$.
\begin{fact}[Properties of subgaussian distributions,~\cite{hdp}]
    \label{fact:subgaussian}
For $\lambda>0$, a random variable $X$ is $\lambda$-subgaussian if there exists an absolute constant $C$ such that  $\Pr[|X|\geq t]\leq 2\cdot\exp(-(t/\lambda)^2);$ $\E[\exp((X/(\lambda C))^2)]\leq 2;$ and 
    $\E[\exp(rX)]\leq \exp(C^2r^2\lambda^2)$, for all $r\in \R$.
\end{fact}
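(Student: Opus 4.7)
The statement is the standard equivalence among three characterizations of subgaussianity (up to an absolute constant $C$), so the plan is to establish the triangle of implications: tail bound $\Rightarrow$ square-exponential moment bound $\Rightarrow$ MGF bound $\Rightarrow$ tail bound. Throughout, I would absorb universal constants into $C$, which is what the statement permits.

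First, to go from the tail bound $\Pr[|X|\geq t]\leq 2\exp(-(t/\lambda)^2)$ to $\E[\exp((X/(C\lambda))^2)]\leq 2$, I would use the layer-cake identity
\[
\E\bigl[\exp\bigl((X/(C\lambda))^2\bigr)\bigr]=1+\int_{1}^{\infty}\Pr\bigl[\exp((X/(C\lambda))^2)\geq u\bigr]\,du
\]
and substitute $u=\exp(s^2/(C\lambda)^2)$ so that $du=\frac{2s}{(C\lambda)^2}\exp(s^2/(C\lambda)^2)\,ds$. The tail bound makes the resulting integral $\int_0^\infty 2 e^{-(s/\lambda)^2}\cdot\frac{2s}{(C\lambda)^2}\exp(s^2/(C\lambda)^2)\,ds$ a convergent Gaussian-type integral whenever $C>1$, and one chooses $C$ large enough that the total is at most $2$.

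Second, for the implication from the square-exponential moment bound to the MGF bound, I would expand $\exp((X/(C\lambda))^2)=\sum_{k\geq 0}\frac{X^{2k}}{k!(C\lambda)^{2k}}$. Since each term is nonnegative, the assumption $\E[\exp((X/(C\lambda))^2)]\leq 2$ yields $\E[X^{2k}]\leq 2\,k!\,(C\lambda)^{2k}$, and by Cauchy-Schwarz also $\E[|X|^{2k+1}]\leq \sqrt{\E[X^{2k}]\E[X^{2k+2}]}$. Plugging these into the Taylor series for $\E[e^{rX}]=\sum_{k\geq 0}\frac{r^k\E[X^k]}{k!}$ and using $k!\geq (k/e)^k$, the series is dominated term-by-term by a geometric series in $r^2\lambda^2$, giving $\E[e^{rX}]\leq \exp(C'^{2}r^2\lambda^2)$ for a possibly larger absolute constant $C'$. (A standard simplification: first reduce to the mean-zero case by noting that $\E[X]$ is absolutely bounded by $O(\lambda)$, and absorb that linear factor into the exponential.)

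Third, for the MGF bound to the tail bound, this is the classical Chernoff argument: for any $r>0$, Markov gives $\Pr[X\geq t]\leq e^{-rt}\E[e^{rX}]\leq \exp(C^2r^2\lambda^2-rt)$, which is minimized at $r=t/(2C^2\lambda^2)$ and yields $\Pr[X\geq t]\leq \exp(-t^2/(4C^2\lambda^2))$. Applying the same argument to $-X$ and a union bound gives the two-sided tail bound, again with a possibly enlarged absolute constant. I do not anticipate a genuine obstacle here; the only subtlety is to fix a single absolute constant $C$ in the statement that simultaneously works for all three implications, which is done by taking the maximum of the constants produced in each step.
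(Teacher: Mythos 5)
The paper does not actually prove this fact; it is cited directly from Vershynin's textbook \cite{hdp} (Proposition~2.5.2 there), so there is no ``paper proof'' to compare against. Your three implications (layer-cake integral for tail $\Rightarrow$ square-exponential moment, moment expansion for square-exponential $\Rightarrow$ MGF, Chernoff for MGF $\Rightarrow$ tail) are exactly the standard textbook argument, and steps one and three are correct as sketched.

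The one genuine gap is your handling of the mean. You claim that since $|\E[X]|=O(\lambda)$ one can ``absorb that linear factor into the exponential,'' but this fails: for small $r$ the linear term $r\,\E[X]$ dominates the quadratic $C^2 r^2\lambda^2$, so $e^{r\E[X]+C^2r^2\lambda^2}\not\leq e^{C'^2 r^2\lambda^2}$ for any enlarged constant $C'$. In fact the MGF bound $\E[e^{rX}]\leq \exp(C^2r^2\lambda^2)$ for \emph{all} $r\in\R$ forces $\E[X]=0$ (differentiate both sides at $r=0$), so the third condition is equivalent to the first two only for centered random variables. The correct statement is either to assume $\E[X]=0$ throughout, or to replace the third condition by $\E[\exp(r(X-\E[X]))]\leq \exp(C^2r^2\lambda^2)$. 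The Fact as written in the paper silently assumes centering; this is harmless for the paper because every random variable to which it is applied (e.g.\ $A_i\cdot x + h_i$ with independent centered Gaussian or Rademacher entries, or $\psi^S(x)$ with centered coefficients) is mean zero, but your proposed reduction would not repair the statement in the general case.
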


\begin{fact}[Anti-Lipschitzness of the Sigmoid]
    \label{fact:antilipschitz_sigmoid}
    For $a,b\in \R$, we have that $|\sigmoid(a)-\sigmoid(b)|\geq \exp(-|a|-3)\cdot \min(1,|a-b|)$.

\end{fact}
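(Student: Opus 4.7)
The plan is to use a uniform lower bound on the derivative of the sigmoid combined with the mean value theorem, splitting into two cases based on which term dominates in $\min(1,|a-b|)$.

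The key computation is that $\sigmoid'(x) = \sigmoid(x)(1-\sigmoid(x)) = e^{-x}/(1+e^{-x})^2$. By the symmetry $\sigmoid'(x)=\sigmoid'(-x)$, it suffices to check $x\ge 0$: then $\sigmoid'(x) = e^{-x}/(1+e^{-x})^2 \ge e^{-x}/4$. Hence for every $x\in\R$,
\[
\sigmoid'(x) \;\ge\; \tfrac{1}{4}\,e^{-|x|}.
\]
Since $1/4 \ge e^{-2}$, this gives the slightly cleaner bound $\sigmoid'(x) \ge e^{-|x|-2}$, which will absorb the extra constant in the target inequality.

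For the first case, suppose $|a-b|\le 1$. By the mean value theorem there exists $c$ between $a$ and $b$ with $|\sigmoid(a)-\sigmoid(b)| = \sigmoid'(c)|a-b|$. Since $c$ is within distance $1$ of $a$, we have $|c|\le |a|+1$, so $\sigmoid'(c)\ge e^{-|a|-3}$, which yields $|\sigmoid(a)-\sigmoid(b)|\ge e^{-|a|-3}|a-b| = e^{-|a|-3}\min(1,|a-b|)$.

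For the second case, suppose $|a-b|>1$. Let $a' = a + \mathrm{sign}(b-a)$, which lies strictly between $a$ and $b$. Because $\sigmoid$ is strictly monotone, $|\sigmoid(a)-\sigmoid(b)|\ge |\sigmoid(a)-\sigmoid(a')|$. Applying MVT on the unit interval between $a$ and $a'$, every intermediate point $c$ satisfies $|c|\le |a|+1$, so $|\sigmoid(a)-\sigmoid(a')|\ge e^{-|a|-3}\cdot 1 = e^{-|a|-3}\min(1,|a-b|)$. Combining the two cases concludes the proof. There is no real obstacle here beyond bookkeeping of constants; the only mildly delicate point is the monotonicity step in Case 2, which lets us replace $b$ by a closer point $a'$ where the derivative bound on a bounded interval is easy to apply.
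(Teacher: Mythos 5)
Your proof is correct. The paper states this as an unproved fact (it is standard and used in \cite{sparsitron}), so there is no paper proof to compare against; your argument via the derivative bound $\sigmoid'(x) \ge e^{-|x|}/4 \ge e^{-|x|-2}$, the mean value theorem, and the case split on $|a-b|$ (using monotonicity to pass to a point $a'$ at distance exactly $1$ from $a$ in the long-gap case) is the natural and correct way to establish it, with all constants checking out.
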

\begin{fact}
    \label{fact:conditional_MRF}
    Let $D_{\psi}$ be a $t$-MRF with factorization polynomial $\psi$. Then, for $x\in \cube{n}$ and $i\in[n]$, we have that $\Pr_{X\sim D}\bigr[X_i=1\mid X_{[n]\setminus\{i\}}=x_{[n]\setminus\{i\}}\bigr]=\sigmoid(2\di \psi(x))$
\end{fact}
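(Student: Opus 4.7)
The plan is to compute the conditional probability directly from the MRF density and rearrange algebraically. First, I would let $x^+,x^-\in\cube{n}$ be the vectors that agree with $x$ on $[n]\setminus\{i\}$ with $x^+_i=+1$ and $x^-_i=-1$. Since the partition function cancels between numerator and denominator, one obtains
\[
\Pr_{X\sim D_{\psi}}\bigl[X_i=1\,\big|\,X_{[n]\setminus\{i\}}=x_{[n]\setminus\{i\}}\bigr]=\frac{\exp(\psi(x^+))}{\exp(\psi(x^+))+\exp(\psi(x^-))}.
\]

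Next, I would split $\psi$ according to whether each monomial contains the index $i$:
\[
\psi(x)=\underbrace{\sum_{S\not\ni i}\widehat{\psi}(S)\,\parity{S}(x)}_{=:\,A}\;+\;\underbrace{\sum_{S\ni i}\widehat{\psi}(S)\,\parity{S}(x)}_{=\,\di\psi(x)},
\]
where I am using the definition of $\di\psi$ from the preliminaries. The first summand $A$ depends only on $x_{[n]\setminus\{i\}}$, and every monomial in $\di\psi$ carries exactly one factor of $x_i$, so $\di\psi(x^-)=-\di\psi(x^+)$. Writing $B\coloneqq \di\psi(x^+)$, this gives $\psi(x^+)=A+B$ and $\psi(x^-)=A-B$.

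Substituting into the ratio above and cancelling the common factor $\exp(A)$ yields
\[
\frac{\exp(B)}{\exp(B)+\exp(-B)}=\frac{1}{1+\exp(-2B)}=\sigmoid(2B)=\sigmoid(2\di\psi(x^+)),
\]
which is the claimed identity, with $x$ understood as the configuration consistent with the event $X_i=1$ (equivalently, one reads $\di\psi(x)$ as the formal partial derivative, in which case the value is independent of the chosen $x_i$). The only point requiring care is this sign/convention bookkeeping for $\di\psi$; there is no substantive obstacle, since once the multilinearity decomposition is written down, the identity reduces to a one-line manipulation of the sigmoid.
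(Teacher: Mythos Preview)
Your argument is correct; the paper states this as a \emph{Fact} without proof, and the direct computation you give---writing $\psi(x)=A+x_i\,\di\psi(x)$ with $A$ independent of $x_i$ and then simplifying the ratio $\exp(\psi(x^+))/(\exp(\psi(x^+))+\exp(\psi(x^-)))$---is exactly the standard verification. Your caveat about the $\di\psi$ convention is appropriate: despite the literal Preliminaries formula, the paper uses $\di\psi$ as the genuine partial derivative (independent of $x_i$), as seen for instance in the decomposition $p(x)=x_i\cdot\di p(x)+p^{-i}(x)$ in \Cref{lem:anticonc_dist}, so the identity holds for all $x$.
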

% \begin{proof}
%  The following is elementary. From definition of MRF, we have that 
%  \begin{align*}
%     \Pr_{X\sim D}&\left[X_i=x_i\mid X_{[n]\setminus\{i\}}=x_{[n]\setminus\{i\}}\right]\\&=\frac{\exp(\di\psi(x))}{\exp(\di\psi(x))+\exp(-\di\psi(x))}=\frac{1}{1+\exp(-2\di\psi(x))}=\sigmoid(2\di\psi(x))
%  \end{align*}
% \end{proof}

\section{Learning Random Ising Models}
\label{sec:ising}
In this section we present the argument for learning random Ising models with Gaussian external fields. We first review the Sparsitron algorithm and analysis from \cite{sparsitron}. We then overview the techniques used by Gaitonde and Mossel \cite{gaitonde2024unified} to solve the high-temperature case and show how we sidestep their technical barriers to achieve learnability in the low-temperature regime.

\subsection{The Analysis of Sparsitron}
Klivans and Meka \cite{sparsitron} proved that the problem of parameter recovery in Ising models can be reduced to learning sparse generalized linear models with noise. We now sketch their argument (we consider the zero external field case for simplicity). From the definition of the Ising measure, it holds that for any $i\in [n]$, $\Pr_{X\sim D_A}[X_i=1\mid X_{[n]\setminus\{i\}}=x]=\sigmoid(2A_i\cdot x)$. Recall $\sup_{i\in [n]}\norm{A_i}_1\leq \lambda$.  First, a multiplicative-weight update algorithm Sparsitron is run to obtain a vector $w$ to minimize $\E_{X\sim D_A}[(\sigmoid(w\cdot X)-\sigmoid(A_i\cdot X))^2]$.  
% Then, they show that a vector $w$ with loss at most $\exp({-\Omega(\lambda)})\epsilon^2$ also has the property that $\norm{A_i-w}_{\infty}\leq \epsilon$.  Thus, minimizing the squared loss (with respect to $\sigmoid$) also implies parameter recovery. %The former problem was well studied \todo{insert citations for GLM} but the dependence on $\lambda$ was sub-optimal. \cite{sparsitron} improved the dependence on $\lambda$ and introduced an algorithm Sparsitron with the following guarantees.
Formally, their multiplicative-weight update algorithm has the following guarantee:
\begin{theorem}[\cite{sparsitron}]
\label{thm:sparsitron}
    Let $\lambda,\epsilon,\delta>0$. Let $D$ be a distribution on $\cube{n}\times \cube{}$ where $\Pr[Y=+1|X]=\sigmoid(w\cdot X)$ for $(X,Y)\sim D$ and vector $w\in \R^{n}$ with $\norm{w}_1\leq \lambda$. There exists an algorithm that takes $N=O\bigr(\lambda^2(\ln(n/\delta\epsilon))/\epsilon^2\bigr)$ independent samples from $D$, runs in time $O(nN)$, and outputs a vector $\widehat{w}$ such that
    \[
    \E_{(X,Y)\sim D}\bigr[(\sigmoid(w\cdot X)-\sigmoid(\widehat{w}\cdot X))^2\bigr]\leq \epsilon
    \] with probability at least $1-\delta$.
\end{theorem}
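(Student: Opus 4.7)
The plan is to cast parameter estimation as online linear prediction over the $\ell_1$-ball of radius $\lambda$ and apply a Hedge-style multiplicative-weight update. First, I would lift the feature space from $\cube{n}$ to $\cube{2n}$ by setting $\widetilde X = (X, -X)$, so that every $w$ with $\norm{w}_1 \le \lambda$ can be written as $\lambda \langle p, \widetilde X\rangle$ for some probability vector $p$ over $[2n]$ (split $w$ into positive and negative parts, and pad if $\norm{w}_1 < \lambda$ by appending a slack coordinate whose feature is set to $0$). This turns the constrained problem into optimization over the simplex, the natural domain for multiplicative weights.

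The algorithm then processes the $N$ samples sequentially, maintaining a distribution $p\tth$ over $[2n]$. At round $t$, given $(X\tth, Y\tth)$ with doubled feature $\widetilde X\tth$, it computes the per-coordinate logistic-loss gradient
\[
g\tth_i = \widetilde X\tth_i \cdot \left(\sigmoid(\lambda \langle p\tth, \widetilde X\tth \rangle) - \ind[Y\tth=+1]\right),
\]
which lies in $[-1,1]$ independently of $\lambda$, and then performs the update $p^{(t+1)}_i \propto p\tth_i \exp(-\eta\, g\tth_i)$ with $\eta = \Theta(\sqrt{(\log n)/N})$. Standard Hedge analysis yields expected regret $O(\sqrt{N\log n})$ on the loss sequence $\{g\tth\}$ against any fixed distributional comparator $p^\ast$. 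Rescaling by $\lambda$ on the prediction side and averaging the iterates via an online-to-batch conversion gives excess logistic risk $O(\lambda \sqrt{(\log n)/N})$ for the averaged predictor $\widehat w$.

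The final bridge converts this excess logistic risk into the squared-error guarantee the theorem demands. Here the well-specified assumption $\Pr[Y=+1\mid X] = \sigmoid(w\cdot X)$ is essential: it makes the excess logistic risk of any $\widehat w$ exactly $\E_X[\kl(\mathrm{Bern}(\sigmoid(w\cdot X)) \,\|\, \mathrm{Bern}(\sigmoid(\widehat w\cdot X)))]$, and Pinsker's inequality bounds the squared mean gap of two Bernoullis by half their KL. Setting the regret bound equal to $\epsilon$ then gives $N = O(\lambda^2 \log(n)/\epsilon^2)$; the $\log(1/(\delta\epsilon))$ overhead comes either from a high-probability Azuma-style refinement of the online-to-batch step, or from the standard amplification trick of running $O(\log(1/\delta))$ independent copies and selecting the best hypothesis via empirical squared loss on a held-out sample of size $\poly(\log(1/\delta)/\epsilon)$.

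The subtlety I would be most careful about is keeping the per-coordinate loss bounded in $[-1,1]$ uniformly in $\lambda$, so that the dependence in the final sample complexity stays quadratic in $\lambda$ and not worse. A naive instantiation using the raw logistic gradient of the unscaled predictor $\langle p, \widetilde X\rangle$ would produce coordinate losses of magnitude $\Theta(\lambda)$ and inflate Hedge regret by an extra factor of $\lambda$, costing a further $\lambda^2$ in $N$. Pulling $\lambda$ out as an explicit scaling of $\langle p, \widetilde X\rangle$ \emph{before} differentiating --- the Klivans--Meka device --- is what keeps the per-step losses in a $\lambda$-independent range and yields the tight $\lambda^2$ dependence.
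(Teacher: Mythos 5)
Your proposal is essentially correct and the algorithm you describe (lift to $\cube{2n}$, reparametrize $w=\lambda p$ over the simplex with a slack coordinate, run multiplicative weights with the loss vector $g^{(t)}_i = \widetilde X^{(t)}_i(\sigmoid(\lambda\langle p^{(t)},\widetilde X^{(t)}\rangle)-\ind[Y^{(t)}=+1])$) is indeed the Sparsitron algorithm. Where you diverge from Klivans--Meka is in the \emph{analysis bridge} from Hedge regret to the squared-sigmoid guarantee. You route through the logistic loss: convexity of the logistic loss in $p$ converts the linearized Hedge regret into excess logistic risk of the averaged iterate; the well-specified assumption makes that excess risk an expected KL between Bernoullis; and Pinsker converts KL to squared mean gap. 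Klivans and Meka skip the logistic/KL layer entirely. They show directly that the per-round regret increment $Q^t = \langle p^t - w/\lambda,\ \ell^t\rangle$ satisfies, in conditional expectation,
\[
\E\bigl[Q^t\bigr] \;=\; \tfrac{1}{2\lambda}\,\E_x\bigl[(\lambda p^t\cdot x - w\cdot x)\bigl(\sigmoid(\lambda p^t\cdot x)-\sigmoid(w\cdot x)\bigr)\bigr] \;\ge\; \tfrac{1}{2\lambda}\,\E_x\bigl[(\sigmoid(\lambda p^t\cdot x)-\sigmoid(w\cdot x))^2\bigr],
\]
using nothing more than the pointwise inequality $(a-b)(u(a)-u(b))\ge(u(a)-u(b))^2$ for any monotone $1$-Lipschitz link $u$. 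Summing and invoking Hedge's regret bound shows that \emph{some} iterate $p^t$ has small squared risk, and that iterate is then found by validation on held-out samples (rather than averaging). The two routes give the same $N=O(\lambda^2\ln(n/\delta\epsilon)/\epsilon^2)$, but the Klivans--Meka step is a one-line pointwise inequality requiring only monotonicity and Lipschitzness of the link, so it extends unchanged to other GLMs where the loss is not a negative log-likelihood; your KL/Pinsker argument is more tied to the logistic-loss interpretation, and in exchange it hands you the online-to-batch convexity for free (the logistic risk is convex in $p$, whereas the squared-sigmoid risk is not, which is exactly why Klivans--Meka must validate rather than average). Both are correct; they buy slightly different things.
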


To recover the parameters of $D_A$, they run the above algorithm for each $i\in[n]$ with $\epsilon$ set to $\exp({-\Omega(\lambda)})\epsilon^2$ and output the recovered vectors.  The final sample complexity is $N=\exp({O(\lambda)})\log (n/\epsilon)/\epsilon^4$ and the running time is $O(n^2\cdot N)$.  The source of their (unavoidable in the worst case) exponential dependence on $\lambda$ lies in their proof reducing parameter recovery to obtaining small squared loss.  The proof of this reduction goes through two steps.
\begin{itemize}
    \item ({\em Step 1}) Observe that the sigmoid function satisfies a weak anti-lipschitz property. Formally, we have for any $a,b\in \R$, $|\sigmoid(a)-\sigmoid(b)|\geq \exp({-|a|-3})\cdot \min(1,|a-b|)$. Now, since $|A_i\cdot X|\leq \lambda$ for any $x\in \cube{n}$, any $w$ that attains squared loss less than $\epsilon^2$ must satisfy the following property: 
\begin{equation}
\label{eqn:small_inner_product}
\E_{X\sim D_A}[|A_i\cdot X-w\cdot X|^2]\leq \exp({O(\lambda)})\epsilon^2.\end{equation}
\item ({\em Step 2}) Note that for any $j\in [n]$ and $y\in \cube{n-1}$, it holds that $\min_{b\in \cube{}}\Pr_{X\sim D_A}[X_j=b\mid X_{[n]\setminus \{j\}}=y]\geq \exp(-4|A_j\cdot y|)/2\geq \exp({-O(\lambda)})$. A distribution satisfying this property is said to be $\exp({-O(\lambda)})$-unbiased. %The proof of this fact is almost immediate from definition of the Ising measure. 
Now, say a vector $w$ has $|w_{j}-A_{ij}|=\alpha$. Then, for any $y$ in $\cube{n}$, there exists a $b\in\cube{}$ such that $|b(w_{j}-A_{ij})+(A_i-w)_{[n]\setminus \{j\}}\cdot y)|\geq \alpha$. Using the fact that $D_A$ is $\exp({-O(\lambda)})$-unbiased, it now easily follows that $\E_{X\sim D_A}[|A_{i}\cdot X-w\cdot X|^2]\geq \exp({-O(\lambda)})\cdot\alpha^2$.  From \Cref{eqn:small_inner_product}, we have that that $|A_{ij}-w_{j}|=\alpha\leq \exp({O(\lambda)})\cdot \epsilon$. Setting the squared error to $\exp({-\Omega(\lambda)})\cdot \epsilon^2$ implies that $\norm{A_{i}-w}_{\infty}\leq \epsilon$.
\end{itemize}
\subsection{Learning in the High-Temperature Regime: the Analysis of Gaitonde and Mossel}
 Gaitonde and Mossel \cite{gaitonde2024unified} showed that the exponential dependence on $\lambda$ in the reduction from the previous subsection can be avoided when $A$ has iid Gaussian entries. First, they showed that as long as there is some constant $C$ such that  $\Pr_{X\sim D_A}[|A_i\cdot X|\leq C]\geq 3/4$, for any $w$ with small squared loss the following holds: $\E_{X\sim D_A}[|A_i\cdot X-w\cdot X|^2]\leq \exp(O(C))\cdot \epsilon^2$. The intuition is that in a region of constant mass, the anti-lipschitzness of the sigmoid scales much better than $\exp(-O(\lambda))$, namely it scales as $\exp(-O(C))$. %They avoid the worst case dependence on $\lambda$ in the following way. 
More precisely, to improve step one from the previous subsection, they consider the conditional expectation over the event $\{A_{i}\cdot X\leq C\}$. Note that the expected squared error (after conditioning) is at most a constant factor off the true expectation. To avoid the exponential dependence on $\lambda$ from step two and recover the weight $A_{ij}$, they observe the following: if $D_A$ satisfied the additional property that $\Pr_{X\sim D_A}[|A_j\cdot X|\leq C]\geq 3/4$, then, the unbiasedness of the distribution conditioned on the event $\{\max(|A_i\cdot X|, |A_{j}\cdot X|)\leq C\}$ is effectively at least $\exp(-O(C))$. Thus, it suffices to set the squared error of Sparsitron to $\exp(-\Omega(C))\cdot \epsilon^2$. Formally, \cite{gaitonde2024unified} proved the following: 

%Since this event has constant probability, repeating the second step of the reduction conditioned on this event implies that $|A_{ij}-w_{j}|^2\leq \exp(O(C))\cdot \E_{X\sim D_A}[(\sigmoid(A_i\cdot X)-\sigmoid(w\cdot X))^2]$. 

\begin{theorem}(Theorem~6.1 from \cite{gaitonde2024unified})
\label{lem:gaitonde_recovery}
Let $A$ be an interaction matrix and $h\in \R^{n}$ be the external field of an Ising model. Suppose there exists a bound $C\geq 1$ such that the following hold:
(1) For each $i\in [j]$, we have $\norm{A_i}_{\infty}\leq C$, and (2) for each $i\in [n]$, it holds with probability at least $3/4$ over $X\sim D_{A,h}$ that $|A_i\cdot X+h_i|\leq C$.
Let $\widehat{A}$ be a matrix from $\R^{n\times n}$ and $\widehat{h}$ a vector from $\R^n$. Then, we have that for all $i\in [n]$, $j\neq i$ and $\Delta(X)\coloneq\bigr(\sigma(2(A_i\cdot X+h_i)-\sigma(2(\widehat{A}_i\cdot X+\widehat{h}_i))\bigr)^2$,
\begin{equation}
    \E_{X\sim D_{A,h}}[\Delta(X)]\geq \exp(-O(C))\cdot \min\{1,8(A_{ij}-\widehat{A}_{ij})^2\}.
\end{equation}Furthermore, if $\norm{A_i-\widehat{A}_i}_{\infty}\leq |h_i-\widehat{h}_i|/2n$, then it holds that $\E_{X\sim D_{A,h}}[\Delta(X)]\geq \exp(-O(C))\cdot \min\{1,8(h_i-\widehat{h}_i)^2\}.$

\end{theorem}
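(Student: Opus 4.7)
The plan is to combine the anti-Lipschitzness of $\sigma$ (\Cref{fact:antilipschitz_sigmoid}) with the Sparsitron-style two-point argument on the coordinate $X_j$, using an enlarged ``slack'' conditioning event so that both completions $X_j = \pm 1$ simultaneously remain inside the conditioning set. Let $\mu(X) \coloneqq (A_i - \widehat{A}_i) \cdot X + (h_i - \widehat{h}_i)$, $E_i \coloneqq \{|A_i \cdot X + h_i| \leq C\}$, and $E_j \coloneqq \{|A_j \cdot X + h_j| \leq C\}$. By hypothesis $\Pr[E_i], \Pr[E_j] \geq 3/4$, so $\Pr[E_i \cap E_j] \geq 1/2$; note that $E_j$ depends only on $y \coloneqq X_{[n]\setminus\{j\}}$ since the Ising model has $A_{jj} = 0$.

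Since $\|A_i\|_\infty \leq C$, flipping $X_j$ changes $A_i \cdot X$ by at most $2|A_{ij}| \leq 2C$. I therefore work with the slack event $\widetilde{E}_i \coloneqq \{|A_i \cdot X + h_i| \leq 3C\}$: for every $X \in E_i$, both $X$ and its $j$-flip $X^{(j)}$ lie in $\widetilde{E}_i$. Anti-Lipschitzness on $\widetilde{E}_i$ yields $\Delta(X) \geq \exp(-6C - 3)\min(1, 4\mu(X)^2) \ind_{\widetilde{E}_i}(X)$, so it suffices to bound $\E[\min(1, 4\mu^2)\ind_{\widetilde{E}_i}]$ from below by $\exp(-O(C)) \min(1, 8(A_{ij} - \widehat{A}_{ij})^2)$.

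Define $\widetilde{F} \coloneqq \{y \in E_j : (y, +1), (y, -1) \in \widetilde{E}_i\}$. The slack property implies $\widetilde{F} \supseteq \{y \in E_j : (y, b) \in E_i \text{ for some } b \in \{\pm 1\}\}$, hence $\Pr[\widetilde{F}] \geq \Pr[E_i \cap E_j] \geq 1/2$. By \Cref{fact:conditional_MRF}, for $y \in E_j$ both conditional probabilities $\Pr[X_j = \pm 1 \mid y] = \sigma(\pm 2(A_j \cdot y + h_j))$ are at least $\sigma(-2C) \geq \exp(-2C)/2$. Writing $\mu(y, X_j) = X_j(A_{ij} - \widehat{A}_{ij}) + Y(y)$ for an appropriate $Y(y)$ independent of $X_j$, the identity $\mu_+^2 + \mu_-^2 = 2(A_{ij} - \widehat{A}_{ij})^2 + 2Y(y)^2 \geq 2(A_{ij} - \widehat{A}_{ij})^2$ combined with the subadditivity $\min(1,u) + \min(1,v) \geq \min(1, u+v)$ gives $\min(1, 4\mu_+^2) + \min(1, 4\mu_-^2) \geq \min(1, 8(A_{ij} - \widehat{A}_{ij})^2)$. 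Averaging over $X_j$ for each $y \in \widetilde{F}$ (where both indicators $\ind_{(y,\pm1)\in \widetilde{E}_i}$ are $1$) then delivers the claimed conditional lower bound up to the $\exp(-O(C))$ factor from $\min(p_+,p_-)$, and multiplying by $\Pr[\widetilde{F}] \geq 1/2$ concludes the first inequality.

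For the external field statement, no two-point conditioning is needed: the hypothesis $\|A_i - \widehat{A}_i\|_\infty \leq |h_i - \widehat{h}_i|/(2n)$ yields the pointwise bound $|(A_i - \widehat{A}_i) \cdot X| \leq n\|A_i - \widehat{A}_i\|_\infty \leq |h_i - \widehat{h}_i|/2$ for every $X \in \cube{n}$, so $|\mu(X)| \geq |h_i - \widehat{h}_i|/2$ uniformly and $\min(1, 4\mu(X)^2) \geq \min(1, (h_i - \widehat{h}_i)^2)$; restricting to $\widetilde{E}_i$, whose probability is at least $3/4$, yields the second inequality after absorbing constants. The main subtlety is the slack trick in the first part: using $E_i$ itself, the set of $y$ with both completions $(y, \pm 1) \in E_i$ can be arbitrarily small when $|A_{ij}|$ is close to $C$ (the Ising mass may concentrate on the boundary of $E_i$), and the two-point argument breaks down; enlarging to $\widetilde{E}_i$ costs only a constant factor in the exponent but unconditionally guarantees that flipping $X_j$ keeps us inside the conditioning event.
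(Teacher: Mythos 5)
Your proof is correct and, up to notational differences, it is the same argument the paper uses (in MRF generality) to establish \Cref{lem:struct_recovery_polynomial}/\Cref{lem:polynomial_recovery}, which are the paper's generalizations of \Cref{lem:gaitonde_recovery}. The "slack event'' $\widetilde{E}_i = \{|A_i \cdot X + h_i| \le 3C\}$ is exactly the Ising specialization of the paper's move from $\cevent$ to the projected set $\scevent$ together with \Cref{claim: scevent_bound_factorization} (whose triangle-inequality step is what buys the constant-factor loss from $C$ to $O(C)$); the intersection bound $\Pr[E_i \cap E_j] \ge 1/2$ and the conditional-unbiasedness bound $\sigma(-2C) \ge e^{-2C}/2$ match the paper's \Cref{eqn: polynomial_anticonc}. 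The one place you deviate slightly is the two-point step: the paper invokes \Cref{lem:anticonc_dist} (there exists a flip achieving $|r(y)| \ge |\widehat{r}(S)|$) whereas you average over both flips via $\mu_+^2 + \mu_-^2 \ge 2(A_{ij}-\widehat{A}_{ij})^2$ and the subadditivity $\min(1,u)+\min(1,v)\ge\min(1,u+v)$; both are valid and give the same $\exp(-O(C))$ factor, and your averaging version is arguably a bit cleaner for the Ising ($t=2$) special case since it sidesteps the existence argument. The only cosmetic slip is that after anti-Lipschitzness you write $\Delta(X)\ge\exp(-6C-3)\min(1,4\mu^2)$ on $\widetilde E_i$, but $\Delta$ is the \emph{square}, so the exponent should be $-12C-6$; this is harmless inside $\exp(-O(C))$.
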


They proceed by showing that the conditions of the above lemma hold for the SK model for $\beta<1$. To do so, they use a recent result on the boundedness of the covariance matrix of the SK model at high temperatures \cite{el2024bounds,brennecke2023operator,brennecke2024two}. These works show that $\norm{\E_{X\sim D_A}[XX^T]}_{\text{op}}\leq \frac{2}{(1-\beta)^2}$ with probability at least $1-o(1)$ over the randomness in $A$. From the definition of the operator norm and an application of Chebyshev's inequality, this implies that for any unit vector $v$, $\Pr_{X\sim D_A}[|v\cdot X|\geq 4/(1-\beta)]\leq 1/4$. Now, applying this argument along the direction $A_i$ for any $i\in[n]$ implies the conditions required to apply \Cref{lem:gaitonde_recovery} with $C=4/(1-\beta)$. 

This gives a learning algorithm that has sample complexity and running time $\exp(O(1/(1-\beta)))\cdot \poly(n)$. It is known that boundedness of the covariance matrix experiences a phase transition at $\beta=1$ and becomes unbounded for $\beta>1$ \cite{gaitonde2024unified}. Thus, this approach will provably not work in the low-temperature regime. Furthermore, it is unclear if these covariance bounds hold for random Ising models on arbitrary graphs. We also note that the above argument does not hold for the case of non-zero external fields and requires significantly more technical work \cite{gaitonde2024unified}.

\subsection{Learning Random Ising Models Beyond the High-Temperature Regime}
In this section we prove our main result for learning random Ising models and obtain our claimed results for $\mathsf{SK}(\beta)$ as a corollary.  We begin with the following observation: for the task of parameter recovery, we do not need $\E_{X\sim D_{A,h}}[v\cdot X]$ to be small for all directions $v$, as is implied by a distribution with bounded covariance. Rather, we only care about the directions $\{A_i\}_{i\in [n]}$ corresponding to the rows of the interaction matrix as required by \Cref{lem:gaitonde_recovery}. We directly analyze the correlation of $D_{A,h}$ with these directions. We use the additional property that the rows of $A$ are subgaussian to show that with high probability over the choice of $A$, the random variable $A_i\cdot X+h_iX_i$ is subgaussian with subgaussianity constant $O(\beta^2+\beta\sqrt{\log n})$. 

To do this, we give a simple proof that there exists a universal constant $B$ such that $\E_{X\sim D_{A,h}}[\exp(|A_i\cdot X+h_iX_i|^2/B\beta^2)]\leq \exp(\beta^2)\cdot O(n^2)$ with probability at least $1-O(1/n)$ over the random choice of $A$ and $h$ (\Cref{claim:MGF_high_probability}). Having shown this, a tail bound on $|A_i\cdot X+h_iX_i|$ follows immediately by using the standard trick of exponentiating and applying Markov's inequality (\Cref{lem:high_probability_small_correlation}). 
Our main lemma is as follows:
  \begin{lemma}
        \label{claim:MGF_high_probability}
        With probability $1-\frac{1}{n}$ over $D_{A,h}\sim \dpsi_{G,\beta}$, there exists a universal constant $B$ for all $i\in [n]$, we have 
        \begin{equation}
            \E_{X\sim  D_{A,h}}\bigr[\exp\bigr({|A_i\cdot X+h_iX_i|^2/(B\beta^2)}\bigr)\bigr]\leq n^2 \exp(O(\beta^2)).
        \end{equation}
    \end{lemma}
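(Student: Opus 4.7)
The plan is to bound the mean-over-randomness quantity $\E_{(A,h)}[\E_{X \sim D_{A,h}}[\exp(|F_i|^2/(B\beta^2))]]$ (where $F_i := A_i \cdot X + h_i X_i$) by $\exp(O(\beta^2))$ for each fixed $i \in [n]$. Markov's inequality then gives the claimed bound $n^2 \exp(O(\beta^2))$ with probability $1 - 1/n^2$ per index, and a union bound over $i$ yields the stated $1 - 1/n$ probability. A first simplification is $|F_i|^2 \le 2u^2 + 2h_i^2$ for $u := A_i \cdot X_{-i} = A_i \cdot X$ (using $A_{ii} = 0$), so the squared-MGF factors as $\exp(2h_i^2/(B\beta^2)) \cdot \exp(2u^2/(B\beta^2))$. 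The first factor does not involve $X$ and is pulled out of the inner $\E_D[\cdot]$; the main task is to bound $\E_D[\exp(2u^2/(B\beta^2))]$.

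The key step is a cavity identity. Writing the Hamiltonian as $\psi(X) = \psi'(X_{-i}) + X_i (u + h_i)$ where $\psi'$ is the Ising Hamiltonian on $[n] \setminus \{i\}$ involving only the weights $(A_{-i}, h_{-i})$ not touching vertex $i$, and summing out $X_i \in \{\pm 1\}$ yields
\[
\E_D[f(X_{-i})] \;=\; \frac{\E_{D'}[f(X_{-i}) \cosh(u+h_i)]}{\E_{D'}[\cosh(u+h_i)]},
\]
where $D'$ is the cavity Ising measure on $[n] \setminus \{i\}$ with Hamiltonian $\psi'$. Since $\cosh \ge 1$, the denominator is at least $1$, so applying this with $f = \exp(2u^2/(B\beta^2))$, using $\cosh(u+h_i) \le e^{|u| + |h_i|}$, and using the AM-GM inequality $|u| \le u^2/(B\beta^2) + B\beta^2/4$ yields
\[
\E_D[\exp(2u^2/(B\beta^2))] \;\le\; e^{|h_i| + B\beta^2/4} \cdot \E_{D'}[\exp(3u^2/(B\beta^2))].
\]

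The crucial observation is that the cavity measure $D'$ does not depend on $A_i$, so under the product of $D'$ and the Gaussian prior on $A_i$, Fubini applies: $\E_{A_i}\E_{D'}[\exp(3 u^2/(B\beta^2))] = \E_{D'}\E_{A_i}[\exp(3(A_i \cdot X_{-i})^2/(B\beta^2))]$. For any fixed $x \in \{\pm 1\}^{n-1}$, the quantity $A_i \cdot x$ is centered Gaussian with variance $\sigma^2 = d_i \beta^2/(d+1) \le \beta^2$, so the inner squared-MGF evaluates in closed form to $(1 - 6\sigma^2/(B\beta^2))^{-1/2} \le \sqrt{2}$ for $B$ a sufficiently large absolute constant. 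Combining with the $\exp(2h_i^2/(B\beta^2))$ prefactor and integrating over $h_i \sim \mathcal{N}(0,1)$ via standard Gaussian moment bounds then yields $\E_{A, h}[\E_D[\exp(|F_i|^2/(B\beta^2))]] \le \exp(O(\beta^2))$, completing the plan.

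The conceptual heart of the argument is the decoupling of $A_i$ from the cavity measure $D'$, which bypasses the need for operator-norm bounds on $\text{Cov}_D(X)$ (the tool of Gaitonde--Mossel that breaks down at $\beta = 1$); this is what allows the analysis to work throughout the low-temperature regime. The main obstacle I anticipate is the Gaussian integration over $h_i$ in the final step, which requires $B\beta^2$ to exceed an absolute constant: for $\beta \ge \Omega(1)$ this is automatic, while very small $\beta$ can instead be handled by the direct bound $|F_i| \le \|A_i\|_1 + |h_i|$ combined with Gaussian concentration of the weights.
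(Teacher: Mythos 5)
Your proposal is correct and takes essentially the same route as the paper's proof: both marginalize out $X_i$ to isolate the cavity measure $D'$ that is independent of $(A_i,h_i)$, lower-bound the partition function using $\cosh\ge 1$ (the paper writes this as $e^t+e^{-t}\ge 1$), and then apply Fubini so the expectation over the Gaussian weights $A_i$ can be evaluated pointwise in $X_{-i}$. The only divergence is cosmetic — you bound the inner expectation via $\cosh(u+h_i)\le e^{|u|+|h_i|}$, AM--GM, and the closed-form Gaussian squared-MGF, while the paper invokes Cauchy--Schwarz plus the subgaussian MGF bounds from its Fact~2.1; and the caveat you flag about needing $B\beta^2$ to exceed a constant is a real feature of the lemma statement (present in the paper's argument too, which implicitly treats $h_i$ as $O(\beta)$-subgaussian, valid only when $\beta=\Omega(1)$), so it is not a gap in your reasoning.
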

    \begin{proof}
        We compute $\E_{A}\E_{X\sim D_A}\bigr[\exp\bigr(|A_i\cdot X+h_iX_i|^2/B\beta^2\bigr)\bigr]$ for all $i \in [n]$. We choose the universal constant $B$ later in the proof. Let $(A,h)_{-i}$ denote the entries of $(A,h)$ that do not involve the variable $i$. These are independent from $(A_i,h_i)$. 
        \begin{align}
        \label{eqn: mgf_align_1}
            \E_{A,h}&\E_{X\sim D_{A,h}}\bigr[\exp\bigr(|A_i\cdot X+h_iX_i|^2/B\beta^2\bigr)\bigr]=\E_{(A,h)_{-i}}\E_{A_i,h_i}\E_{X\sim D_{A,h}}\bigr[\exp\bigr(|A_i\cdot X+h_iX_i|^2/B\beta^2\bigr)\bigr]\nonumber\\
            &=\E_{(A,h)_{-i}}\E_{A_i,h_i}\Bigr[\sum_{x\in \cube{n}}\Pr_{X\sim D_{A,h}}[X=x]\cdot \mgfi\Bigr]\nonumber\\
            &=\E_{(A,h)_{-i}}\E_{A_i,h_i}\Bigr[\sum_{x\in \cube{n}}\frac{\exp(\sum_{j< k}A_{ij}x_i x_j+\sum_{j=1}^{n}h_jx_j)}{Z(A,h)} \cdot \mgfi\Bigr]\nonumber\\
            &=\E_{(A,h)_{-i}}\E_{A_i,h_i}\Bigr[\sum_{x\in \cube{n}}\frac{\exp(x_i\sum_{j\neq i}A_{ij}x_j+h_ix_i+g_{(A,h)_{-i}}(x))}{Z(A,h)}\cdot \mgfi\Bigr]\nonumber\\
            &=\E_{(A,h)_{-i}}\Bigr[\sum_{x\in \cube{n}}\exp(g_{(A,h)_{-i}}(x))\E_{A_i,h_i}\bigr[\frac{\exp(x_i(A_i\cdot x)+h_ix_i)}{Z(A,h)}\cdot\mgfi\bigr] \Bigr]
        \end{align}

        where $g_{(A,h)_{-i}}(x)=\exp(\sum_{j< k \text{,} j\neq i\neq k}A_{jk}x_jx_k)+\sum_{j\neq i}h_jx_j$ only depends on $(A,h)_{-i}$ and is independent of $A_i,h_i$. To further bound the right hand side above, we need a lower bound on $Z(A,h)$, the partition function. We do so by marginalizing out the dependence of $Z(A,h)$ on $A_i,h_i$ so that we can remove this term from the expectation over $A_i,h_i$. We have that 
        \begin{align}
        \label{eqn:mgf_align_2}
            Z(A,h)&=\sum_{x\in \cube{n}}\exp(\sum_{j< k}A_{jk}x_jx_k+\sum_{i=1}^{n}h_ix_i)
            =\sum_{x\in \cube{n}}\exp(x_i(A_i\cdot x)+h_ix_i)\exp(g_{(A,h)_{-i}}(x))\nonumber\\
            &= \frac{1}{2}\sum_{x\in \cube{n}}\bigr(\exp(A_i\cdot x+h_i)+\exp(-A_i\cdot x-h_i)\bigr)\exp(g_{(A,h)_{-i}}(x))\nonumber\\
           &\geq\frac{1}{2}\sum_{x\in \cube{n}}\exp(g_{(A,h)_{-i}}(x))
        \end{align}
        where the third equality follows from the fact that $A_i\cdot x+h_i$ and $g_{(A,h)_{-i}}(x)$ do not depend on $x_i$. The final inequality follows from the fact that $e^t+e^{-t}\geq 1$ for any $t\in \R$. 
        Combining the \Cref{eqn: mgf_align_1,eqn:mgf_align_2}, we obtain that 
        \begin{align}
            \label{eq:MGF1}
            \E_{A,h}&\E_{X\sim  D_{A,h}}\bigr[\exp\bigr({|A_i\cdot X+h_iX_i|^2/4\beta^2}\bigr)\bigr]\nonumber\\&=\E_{(A,h)_{-i}}\Bigr[\sum_{x\in \cube{n}}\exp(g_{(A,h)_{-i}}(x))\E_{A_i,h_i}\bigr[\frac{\exp(x_i(A_i\cdot x)+h_ix_i)}{Z(A,h)}\cdot\mgfi\bigr] \Bigr]\nonumber\\
            &\leq 2\E_{(A,h)_{-i}}\Bigr[\sum_{x\in \cube{n}}\exp(g_{(A,h)_{-i}}(x))\E_{A_i,h_i}\bigr[\frac{\exp(x_i(A_i\cdot x+h_i)+(A_i\cdot x+h_ix_i)^2/B\beta^2)}{\sum_{x\in \cube{n}}\exp(g_{(A,h)_{-i}}(x))}\bigr] \Bigr]\nonumber\\
            &=2\cdot \E_{(A,h)_{-i}}\Bigr[\sum_{x\in \cube{n}}\frac{\exp(g_{(A,h)_{-i}}(x))\E_{A_i,h_i}\bigr[{\exp(x_i(A_i\cdot x+h_i)+|A_i\cdot x+h_ix_i|^2/B\beta^2)}\bigr]}{{\sum_{x\in \cube{n}}\exp(g_{(A,h)_{-i}}(x))}} \Bigr]
        \end{align}
        To bound the above quantity, it suffices to bound $\E_{A_i,h_i}\bigr[{\exp(x_i(A_i\cdot x+h_i))}\cdot\mgfi\bigr]$ for arbitrary $x$. Since $A_i$ is a vector of $d$ independent $\frac{\beta}{\sqrt{d}}$ subgaussian random variables and $h_i$ is $O(1)$-subgaussian, the distribution of $A_i\cdot x+h_i$ is $O(\beta)$-subgaussian for fixed $x$. This is because the sum of $d$ $\lambda$-subgaussian random variables is $\lambda\sqrt{d}$-subgaussian. Thus, we have 
        \begin{align}
        \label{eq:MGF2}
            E_{A_i,h_i}&\bigr[{\exp(x_i(A_i\cdot x)+h_ix_i)}\cdot\mgfi\bigr]\nonumber\\
            &\leq \sqrt{\E_{A_i,h_i}[\exp(2x_i(A_i\cdot x+h_i))]\cdot \E_{A_i,h_i}[\exp(2(A_i\cdot x+h_ix_i)^2/B\beta^2)]}\leq \exp(O(\beta^2))
        \end{align}
        where the first inequality follows from Cauchy Schwarz inequality and the last inequality follows from \Cref{fact:subgaussian} when $B$ is an appropriately chosen universal constant.
        Combining \Cref{eq:MGF1,eq:MGF2}, we obtain that  $\E_{A,h}\E_{X\sim  D_{A,h}}\bigr[\exp\bigr({|A_i\cdot X+h_iX_i|^2/B\beta^2}\bigr)\bigr]\leq  \exp(O(\beta^2))$. 

        Now, applying Markov's inequality implies that with probability at least $1-\frac{1}{n^2}$ over $A$, for a fixed $i\in [n]$, we have that $\E_{X\sim D_{A,h}}\bigr[\exp\bigr({|A_i\cdot X+h_iX_i|^2/B\beta^2}\bigr)\bigr]\leq n^2\exp(O(\beta^2))$. A union bound over $i\in [n]$ completes the proof.
    \end{proof}
Having proved the above lemma, the following tail bound on $A_i\cdot X+h_iX_i$ immediately follows.
\begin{lemma}
\label{lem:high_probability_small_correlation}
   Let $G$ be a graph of degree $d$ and let $\beta>0$. With probability $1-\frac{1}{n}$ over $D_{A,h}\sim \dpsi_{G,\beta}$ with interaction matrix $A$ and gaussian external field, for all $i\in [n]$, we have 
 \begin{equation}
    \label{eq:sk_concentration_2}
    \Pr_{X\sim D_{A,h}}[|A_i\cdot X+h_iX_i|\leq O(\beta^2+\beta\sqrt{\log n})]\geq 1-\frac{1}{n}
\end{equation}
\end{lemma}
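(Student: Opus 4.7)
The plan is to derive the tail bound as a direct consequence of \Cref{claim:MGF_high_probability} via an exponential Markov (Chernoff-style) argument. First, I would condition on the event $\mathcal{G}$ of probability $1-1/n$ over $D_{A,h}\sim \dpsi_{G,\beta}$ on which the conclusion of \Cref{claim:MGF_high_probability} holds, so that for every $i\in[n]$ simultaneously one has
\[
\E_{X\sim D_{A,h}}\bigl[\exp\bigl(|A_i\cdot X+h_iX_i|^2/(B\beta^2)\bigr)\bigr]\leq n^2\exp(O(\beta^2)).
\]
Crucially, the previous lemma already handles the ``for all $i$'' quantifier over the randomness of $(A,h)$, so no additional union bound over coordinates is needed for that step.

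Next, I would fix any $i\in[n]$ and, working in the probability space of $X\sim D_{A,h}$, apply Markov's inequality to the nonnegative random variable $Y_i\coloneqq \exp\bigl(|A_i\cdot X+h_iX_i|^2/(B\beta^2)\bigr)$. For any threshold $t>0$,
\[
\Pr_{X\sim D_{A,h}}\bigl[|A_i\cdot X+h_iX_i|\geq t\bigr]=\Pr_{X}\bigl[Y_i\geq \exp(t^2/(B\beta^2))\bigr]\leq \frac{n^2\exp(O(\beta^2))}{\exp(t^2/(B\beta^2))}.
\]
To make this at most $1/n$, it suffices to choose $t$ so that $t^2/(B\beta^2)\geq 3\log n+O(\beta^2)$, which is achieved by $t=O(\beta\sqrt{\log n}+\beta^2)$ after absorbing constants into the $O(\cdot)$. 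This gives exactly the stated bound, for each fixed $i\in[n]$, on the event $\mathcal{G}$.

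There is really no hard step here: the difficulty was entirely absorbed by \Cref{claim:MGF_high_probability}, which established the subgaussian-type MGF bound via the partition function manipulation. The only thing to be careful about is the form of the exponent: the $\beta^2$ additive term in $t$ arises because of the $\exp(O(\beta^2))$ factor inside the MGF bound, not from the $\log n$ term, so $t^2\gtrsim \beta^2\log n+\beta^4$ yields $t\lesssim \beta\sqrt{\log n}+\beta^2$ as stated. No further union bound over $i$ is required for the inner $X$-probability, since the lemma's conclusion is stated pointwise in $i$.
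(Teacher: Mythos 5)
Your proof is correct and follows essentially the same route as the paper's: condition on the high-probability event from \Cref{claim:MGF_high_probability}, apply Markov's inequality to the exponentiated random variable, and pick $t=O(\beta^2+\beta\sqrt{\log n})$ so that $n^2\exp(O(\beta^2))\exp(-t^2/(B^2\beta^2))\leq 1/n$. Your observation that the union bound over $i$ is already subsumed in \Cref{claim:MGF_high_probability} matches the paper's treatment as well.
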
 
\begin{proof}
   The proof follows almost immediately from \Cref{claim:MGF_high_probability}. Observe that with probability $1-\frac{1}{n}$ over $D_{A,h}\sim \dpsi_{G,\beta}$, for any $i\in [n]$, we have that 
    \begin{align*}
        \Pr_{X\sim D_{A,h}}[&|A_i\cdot X+h_iX_i|\geq t]=\Pr_{X\sim D_{A,h}}\Bigr[\exp\bigr(\frac{|A_i\cdot X+h_iX_i|^2}{B\beta^2}\bigr)\geq \exp\bigr(\frac{t^2}{B\beta^2}\bigr)\Bigr]\\
        &\leq \frac{\E_{X\sim D_{A,h}}\Bigr[\exp\bigr({|A_i\cdot X+h_iX_i|^2/B\beta^2}\bigr)\Bigr]}{\exp\bigr(\frac{t^2}{B\beta^2}\bigr)}
        \leq n^2\exp(O(\beta^2))\exp\bigr(-(t/B\beta)^2\bigr)
    \end{align*}
    where $B$ is the universal constant from \Cref{claim:MGF_high_probability}. The first inequality follows from Markov's and the final inequality follows from \Cref{claim:MGF_high_probability}. Setting $t=O(\beta^2+\beta\sqrt{\log n})$ makes the above probability less than $\frac{1}{n}$. 
\end{proof}
Our theorem on parameter recovery now follows. 
\begin{theorem}
\label{thm:random_ising_param}
     Let $G$ be a graph of degree $d$ and $0<\epsilon,\delta\leq 1$. With probability at least $1-O(1/n)$ over $D_{A,h}\sim D_{G,\beta}$, there exists an algorithm that draws $N=\exp(O(\beta^2+\beta\sqrt{\log n}))\cdot {O}\bigr(\frac{\beta^2 d \log n \log (n/\delta \epsilon)}{\epsilon^4}\bigr)$ samples and runs in time $O(n^2\cdot N)$ that outputs a matrix $\widehat{A}$ such that $\norm{A-\widehat{A}}_{\infty}\leq \epsilon$ and $\norm{h-\widehat{h}}_\infty \leq n\epsilon$. The algorithm succeeds with probability $1-\delta$. 
\end{theorem}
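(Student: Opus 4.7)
The plan is to combine the subgaussian tail bound of \Cref{lem:high_probability_small_correlation} with the reduction of Gaitonde--Mossel (\Cref{lem:gaitonde_recovery}) and the squared-loss guarantee of Sparsitron (\Cref{thm:sparsitron}). Set $C = O(\beta^2 + \beta\sqrt{\log n})$. Conditioning on the $1 - 1/n$ event of \Cref{lem:high_probability_small_correlation}, every $i \in [n]$ satisfies the tail condition $\Pr_X[\,|A_i\cdot X + h_i X_i| \leq C\,] \geq 1 - 1/n \geq 3/4$. The remaining hypotheses of \Cref{lem:gaitonde_recovery} follow from standard Gaussian concentration on $A$ and $h$: on a further $O(1/n)$-probability event, $|A_{ij}| = O(\beta\sqrt{\log n/d}) \leq C$ and $|h_i| = O(\sqrt{\log n}) \leq C$ for every $i,j$, while $\|A_i\|_1 = O(\beta\sqrt{d})$ for every $i$, so the Sparsitron width is $\lambda = O(\beta\sqrt{d} + \sqrt{\log n})$.

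Next, by \Cref{fact:conditional_MRF} the conditional distribution of $X_i$ given $X_{[n]\setminus\{i\}}$ is $\sigma(2(A_i\cdot X + h_i))$, so I would instantiate Sparsitron on the logistic regression task of predicting $X_i$ from $X_{[n]\setminus\{i\}}$ augmented with a constant coordinate (to absorb $h_i$). Running it once per coordinate with target squared loss $\epsilon_1 := \exp(-\Omega(C)) \cdot \epsilon^2$ and per-call failure probability $\delta/n$, \Cref{thm:sparsitron} requires
\[
N = O\!\bigl(\lambda^2 \log(n/\delta \epsilon_1)/\epsilon_1^2\bigr) = \exp(O(\beta^2 + \beta\sqrt{\log n})) \cdot O\!\bigl(\beta^2 d \log n \log(n/\delta\epsilon)/\epsilon^4\bigr)
\]
samples per call, matching the claimed bound. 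The total running time is $O(n^2 N)$ since each Sparsitron call runs in $O(n N)$ time and we make $n$ of them.

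Finally, I would invoke \Cref{lem:gaitonde_recovery}. Its first inequality converts the squared-loss guarantee $\E[\Delta(X)] \leq \epsilon_1$ into $|A_{ij} - \widehat{A}_{ij}| \leq \epsilon$ for all $j \neq i$. Its second part, used contrapositively, controls the external field: if some coordinate had $|h_i - \widehat{h}_i| > 2n\epsilon$, then since $\|A_i - \widehat{A}_i\|_\infty \leq \epsilon \leq |h_i - \widehat{h}_i|/(2n)$, the squared loss would exceed $\exp(-O(C)) \cdot \epsilon^2 > \epsilon_1$, a contradiction. Union bounding the two low-probability events over $(A,h)$ and the $n$ Sparsitron calls gives total failure probability $O(1/n) + \delta$, as required.

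The main obstacle was already resolved upstream: showing that the directional statistic $A_i \cdot X + h_i X_i$ is subgaussian with constant $O(\beta^2 + \beta\sqrt{\log n})$ with high probability over the random model, without bounding the entire covariance matrix (which provably fails at low temperature). That is exactly the content of \Cref{claim:MGF_high_probability} and \Cref{lem:high_probability_small_correlation}. Granted these, the remainder is a clean composition of the Sparsitron black box with the Gaitonde--Mossel reduction, so the only routine bookkeeping is tracking how $\lambda^2$, $\epsilon_1^{-2}$, and the logarithmic union-bound factors multiply into the final sample complexity.
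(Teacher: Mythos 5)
Your proof is correct and takes essentially the same approach as the paper's: verify the hypotheses of \Cref{lem:gaitonde_recovery} with $C = O(\beta^2 + \beta\sqrt{\log n})$ via \Cref{lem:high_probability_small_correlation} and standard Gaussian concentration on $A,h$, then run Sparsitron nodewise (\Cref{thm:sparsitron}) with target squared loss $\exp(-\Omega(C))\epsilon^2$. You also make explicit the bookkeeping the paper leaves implicit — the augmented constant coordinate for $h_i$, the contrapositive use of the second part of \Cref{lem:gaitonde_recovery} for the external field, and the union bound over failure events — which is a faithful unpacking rather than a different argument.
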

\begin{proof}
   Observe that the interaction matrix $A$ and external field $h$ for a random Ising model satisfies $\norm{A}_{\infty}\leq O\big(\frac{\beta}{\sqrt{d}}\sqrt{\log n}\big)$ and $\norm{h}_{\infty}\leq O(\sqrt{\log n})$ with high probability by applying standard subgaussian concentration. Now, 
   Applying \Cref{lem:gaitonde_recovery} with $C=O(\beta^2+\beta\sqrt{\log n})$ and \Cref{thm:sparsitron} with error $\epsilon=\exp(-O(\beta^2+\beta\sqrt{\log n}))\cdot \epsilon^2$ and $\lambda=O(\beta\sqrt{d\log n})$, we obtain the theorem.
\end{proof}
Our structure learning result immediately follows from the above theorem. 
% When the weights are Gaussian, then with probability at least $1-O(1/n)$, each non-zero entry $A_{ij}$ is at least $\Omega(\beta/n^3)$. Thus, setting $\epsilon=O(\beta/n^3)$ in the above theorem implies structure recovery as we can distinguish edges from non-edges. Similarly in the case of Rademacher weights, each non-zero weight is at least $\frac{\beta}{\sqrt{d+1}}$. Thus, setting $\epsilon=O(\beta/\sqrt{d})$ in the above theorem implies structure recovery. 
From a standard argument that parameter recovery implies closeness in TV distance (\Cref{lem:param_rec_tv_distance}), we immediately obtain the following corollary by setting the error to $\epsilon/n^2$.
\begin{corollary}
\label{clry:random_ising_tv}
    Let $G$ be a graph of degree $d$ and $0<\epsilon,\delta\leq 1$. With probability at least $1-O(1/n)$ over $D_{A,h}\sim D_{G,\beta}$, there exists an algorithm that draws $N=\exp(O(\beta^2+\beta\sqrt{\log n}))\cdot {O}\left(\frac{\beta^2 n^8d \log n \log (n/\delta \epsilon)}{\epsilon^8}\right)$ samples and runs in time $O(n^2\cdot N)$ that outputs a matrix $\widehat{A}$ and vector $\widehat{h}$ such that (1) $\kl(D_{A,h},D_{\widehat{A},\widehat{h}})\leq 2\epsilon^2$ and (2) $\tv(D_{A,h},D_{\widehat{A},\widehat{h}})\leq \epsilon$. The algorithm succeeds with probability $1-\delta$. 
\end{corollary}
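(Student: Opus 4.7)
The plan is a direct reduction from the infinity-norm parameter-recovery guarantee of \Cref{thm:random_ising_param} to KL/TV closeness via the standard Ising-model KL bound (\Cref{lem:param_rec_tv_distance}), followed by Pinsker's inequality.

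First, I would invoke \Cref{thm:random_ising_param} with target error $\epsilon' = c\epsilon^2/n^2$ for a sufficiently small absolute constant $c$, obtaining $\widehat{A},\widehat{h}$ with $\norm{A-\widehat{A}}_\infty \le c\epsilon^2/n^2$ and $\norm{h-\widehat{h}}_\infty \le c\epsilon^2/n$. Substituting this $\epsilon'$ into the sample-complexity formula of \Cref{thm:random_ising_param} produces the claimed $\exp(O(\beta^2+\beta\sqrt{\log n}))\cdot O(\beta^2 n^8 d \log n\log(n/\delta\epsilon)/\epsilon^8)$ bound, and the $O(n^2 \cdot N)$ running time transfers without change.

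Next, I would convert this parameter closeness into closeness in KL. The two factorization polynomials differ pointwise by
\[
|\psi_{A,h}(x)-\psi_{\widehat{A},\widehat{h}}(x)| \le \binom{n}{2}\norm{A-\widehat{A}}_\infty + n\norm{h-\widehat{h}}_\infty = O(\epsilon^2),
\]
uniformly over $x\in\cube{n}$. Since $\log(D_{A,h}(x)/D_{\widehat{A},\widehat{h}}(x)) = \psi_{A,h}(x)-\psi_{\widehat{A},\widehat{h}}(x) - \log(Z_{A,h}/Z_{\widehat{A},\widehat{h}})$ and $|\log(Z_{A,h}/Z_{\widehat{A},\widehat{h}})|$ is bounded by the same pointwise gap (by the standard $\exp(-M)Z_2\le Z_1\le \exp(M)Z_2$ sandwich), the log density ratio is uniformly $O(\epsilon^2)$, giving $\kl(D_{A,h},D_{\widehat{A},\widehat{h}})\le 2\epsilon^2$ once the constant $c$ is absorbed. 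Pinsker's inequality then yields $\tv(D_{A,h},D_{\widehat{A},\widehat{h}})\le \epsilon$.

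There is no substantive obstacle here; the whole argument is bookkeeping once \Cref{thm:random_ising_param} is in hand. The one thing to get right is the scaling $\epsilon'=\Theta(\epsilon^2/n^2)$: the $n^2$ factor absorbs the number of pairwise interaction terms contributing to the pointwise log-density gap, and taking $\epsilon'$ quadratically small in $\epsilon$ is what makes Pinsker return a $\tv\le\epsilon$ bound rather than $\tv\le O(\sqrt{\epsilon})$. The $1-O(1/n)$ failure probability over $D_{A,h}\sim D_{G,\beta}$ and the $1-\delta$ algorithm success probability both carry over unchanged from \Cref{thm:random_ising_param}.
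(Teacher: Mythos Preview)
Your proposal is correct and follows essentially the same route as the paper: invoke \Cref{thm:random_ising_param} with accuracy $\Theta(\epsilon^2/n^2)$ and then apply the standard parameter-to-KL/TV conversion (\Cref{lem:param_rec_tv_distance}, whose proof you have effectively unpacked inline). Your choice of $\epsilon'=\Theta(\epsilon^2/n^2)$ is exactly what produces the $\epsilon^8$ in the stated sample complexity, and the rest is the same bookkeeping the paper points to.
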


We instantiate the above two statements to the SK model (complete graph with Gaussian weights) to obtain the following corollaries.
\begin{theorem}
\label{thm:sk_param}
    Let $\beta>0$ and $0<\epsilon,\delta\leq 1$. With probability at least $1-O(1/n)$ over $D_{A,h}\sim \mathsf{SK}(\beta)$, there exists an algorithm that draws $N=\exp(O(\beta^2+\beta\sqrt{\log n}))\cdot {O}\left(\frac{\beta^2 n \log n \log (n/\delta \epsilon)}{\epsilon^4}\right)$ samples and runs in time $O(n^2\cdot N)$ that outputs a matrix $\widehat{A}$ such that $\norm{A-\widehat{A}}_{\infty}\leq \epsilon$ and $\norm{h-\widehat{h}}_\infty \leq n\epsilon$.
\end{theorem}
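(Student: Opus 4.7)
The plan is to recognize \Cref{thm:sk_param} as a direct specialization of \Cref{thm:random_ising_param} to the case where the underlying graph is the complete graph $K_n$. First I would observe that the distribution $\mathsf{SK}(\beta)$ is literally $\dpsi_{K_n,\beta}$ with Gaussian weights: the scaling $\beta/\sqrt{d+1}$ in \Cref{defn:random_ising_model} becomes $\beta/\sqrt{n}$ when $d=n-1$, which exactly matches the SK factorization $\tfrac{\beta}{\sqrt{n}}\sum_{i<j} A_{ij}x_i x_j + \sum_i h_i x_i$ with iid $\mathcal{N}(0,1)$ entries on both the interaction matrix and the external field.

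Given this identification, the next step is to invoke \Cref{thm:random_ising_param} with the parameter $d$ set to $n-1$. The high-probability event (over the draw of $A,h$) on which the algorithm succeeds, and the accuracy guarantees $\|A-\widehat{A}\|_\infty\le \epsilon$ and $\|h-\widehat{h}\|_\infty\le n\epsilon$, transfer verbatim. The sample complexity bound
\[
N \;=\; \exp\!\bigl(O(\beta^2+\beta\sqrt{\log n})\bigr)\cdot O\!\left(\frac{\beta^2 d\,\log n\,\log(n/\delta\epsilon)}{\epsilon^4}\right)
\]
from \Cref{thm:random_ising_param} simplifies, upon substituting $d=n-1$, to the claimed
\[
N \;=\; \exp\!\bigl(O(\beta^2+\beta\sqrt{\log n})\bigr)\cdot O\!\left(\frac{\beta^2 n\,\log n\,\log(n/\delta\epsilon)}{\epsilon^4}\right),
\]
and the running time $O(n^2\cdot N)$ likewise carries over unchanged.

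There is no additional obstacle beyond checking the normalization match; the substantive work has already been done in \Cref{claim:MGF_high_probability} and \Cref{lem:high_probability_small_correlation}, which guarantee that for the complete-graph case each coordinate projection $A_i\cdot X + h_i X_i$ remains $O(\beta^2+\beta\sqrt{\log n})$-concentrated with probability $1-O(1/n)$ over the random draw of $(A,h)$, despite $\lambda(A,h)=\Theta(\beta\sqrt{n})$. Plugging this subgaussianity bound into \Cref{lem:gaitonde_recovery} with $C = O(\beta^2+\beta\sqrt{\log n})$ and running the Sparsitron algorithm (\Cref{thm:sparsitron}) coordinate by coordinate with target squared loss $\exp(-\Omega(C))\cdot \epsilon^2$ then produces the desired $\widehat{A},\widehat{h}$ after a union bound over the $n$ rows.
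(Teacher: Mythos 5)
Your proposal is correct and is exactly what the paper does: it states Theorem~\ref{thm:sk_param} immediately after Theorem~\ref{thm:random_ising_param}, noting that it is obtained by instantiating the random Ising model result to the complete graph ($d=n-1$) with Gaussian weights, which reproduces both the $\beta/\sqrt{n}$ normalization and the claimed sample complexity. Your sketch of the underlying machinery (\Cref{claim:MGF_high_probability}, \Cref{lem:high_probability_small_correlation}, \Cref{lem:gaitonde_recovery}, and \Cref{thm:sparsitron}) also matches the paper's proof of the general theorem.
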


\begin{corollary}
\label{clry:sk_tv}
   Let $\beta>0$ and $0<\epsilon,\delta\leq 1$. With probability at least $1-O(1/n)$ over $D_{A,h}\sim \mathsf{SK}(\beta)$, there exists an algorithm that draws $N=\exp(O(\beta^2+\beta\sqrt{\log n}))\cdot {O}\left(\frac{\beta^2 n^9 \log n \log (n/\delta \epsilon)}{\epsilon^8}\right)$ samples and runs in time $O(n^2\cdot N)$ that outputs a matrix $\widehat{A}$ and vector $\widehat{h}$ such that (1) $\kl(D_{A,h},D_{\widehat{A},\widehat{h}})\leq 2\epsilon^2$ and (2) $\tv(D_{A,h},D_{\widehat{A},\widehat{h}})\leq \epsilon$.
\end{corollary}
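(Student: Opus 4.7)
The plan is to derive this corollary directly from Theorem~\ref{thm:sk_param} (parameter recovery in $\ell_\infty$ norm) by invoking the standard device that entrywise closeness of the parameters of an Ising model implies closeness of the induced distributions in KL divergence, and hence in total-variation distance by Pinsker's inequality. This is precisely the content of \Cref{lem:param_rec_tv_distance} that the paper uses as a black box.

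First, I would run the algorithm of Theorem~\ref{thm:sk_param} with the accuracy parameter set to $\eta \eqdef \epsilon^2/(Cn^2)$ for a suitably large absolute constant $C$. On the $1-O(1/n)$ event over $D_{A,h} \sim \mathsf{SK}(\beta)$, the algorithm returns $\widehat A, \widehat h$ such that $\norm{A - \widehat A}_{\infty} \le \eta$ and $\norm{h - \widehat h}_{\infty} \le n\eta$, with overall algorithmic success probability $1-\delta$. Substituting $\eta = \epsilon^2/(Cn^2)$ into the sample complexity of Theorem~\ref{thm:sk_param} turns $1/\eta^4$ into $O(n^8/\epsilon^8)$ and $\log(n/\delta\eta)$ into $O(\log(n/\delta\epsilon))$, yielding exactly the stated bound $\exp(O(\beta^2 + \beta\sqrt{\log n}))\cdot O(\beta^2 n^9 \log n \log(n/\delta\epsilon)/\epsilon^8)$.

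Second, I would convert the $\ell_\infty$ guarantees on the parameters into a pointwise bound on the log-density. Writing $\psi(x) \eqdef \sum_{i<j} A_{ij} x_i x_j + \sum_i h_i x_i$ and $\widehat\psi$ analogously from $\widehat A, \widehat h$, the triangle inequality yields
\[
\sup_{x\in \cube{n}}|\psi(x) - \widehat\psi(x)| \le \binom{n}{2}\norm{A - \widehat A}_{\infty} + n\norm{h - \widehat h}_{\infty} = O(n^2\eta) \le \epsilon^2/2,
\]
where the extra factor $n$ in the bound on $h$ is absorbed into the constant. A one-line log-sum-exp comparison of partition functions then gives $|\log Z(A,h) - \log Z(\widehat A, \widehat h)| \le \sup_x |\psi - \widehat\psi|$, so the pointwise log-density ratio satisfies $|\log(D_{A,h}(x)/D_{\widehat A, \widehat h}(x))| \le 2\sup_x |\psi - \widehat\psi| \le \epsilon^2$. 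Integrating against $D_{A,h}$ yields $\kl(D_{A,h}, D_{\widehat A, \widehat h}) \le 2\epsilon^2$, which is part (1), and Pinsker's inequality $\tv \le \sqrt{\kl/2}$ then gives $\tv(D_{A,h}, D_{\widehat A, \widehat h}) \le \epsilon$, which is part (2).

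Because the entire argument is a plug-in application of Theorem~\ref{thm:sk_param} together with the standard parameter-to-distribution lemma, there is no genuine technical obstacle here; all the work is in the proof of Theorem~\ref{thm:sk_param}. The only subtlety worth flagging is the bookkeeping in the calibration of $\eta$: the Theorem~\ref{thm:sk_param} guarantee on $\widehat h$ is a factor $n$ weaker than the one on $\widehat A$, so both contributions to $\sup_x|\psi - \widehat\psi|$ are of order $n^2\eta$, which is precisely why the choice $\eta = \Theta(\epsilon^2/n^2)$ is correct and the final sample complexity carries an $n^9$ factor rather than, say, $n^5$.
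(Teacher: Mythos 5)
Your proposal is correct and matches the paper's argument: instantiate \Cref{thm:sk_param} with accuracy $\eta = \Theta(\epsilon^2/n^2)$ so that $\norm{\psi - \widehat\psi}_1 \leq \epsilon^2$, then apply \Cref{lem:param_rec_tv_distance} (whose one-line log-sum-exp proof you correctly reproduce) to get $\kl \leq 2\epsilon^2$ and, via Pinsker, $\tv \leq \epsilon$. Your bookkeeping of how the calibration $\eta = \Theta(\epsilon^2/n^2)$ converts $n/\epsilon^4$ into $n^9/\epsilon^8$, and of why the weaker $n\eta$ guarantee on $\widehat h$ does not change the $n^2\eta$ scaling, is exactly right.
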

In the case where the matrix $A$ and external field are rademacher random variables, we show that we can exactly recover the model using only a sub-polynomial number of samples. This is in contrast to the Gaussian case where our sample complexity was polynomial in $n$, and we only recovered the model approximately.
\begin{theorem}
\label{thm:random_ising_param_rademacher}
   Let $G$ be a graph of degree $d$ and $0<\epsilon,\delta\leq 1$. With probability at least $1-O(1/n)$ over $D_{A,h}\sim D_{G,\beta}$ (with rademacher weights), there exists an algorithm that draws $N=\exp(O(\beta^2+\beta\sqrt{\log n}))\cdot {O}\left(\frac{d \log (n/\delta \beta)}{\beta^2}\right)$ samples and runs in time $O(n^2\cdot N)$ that recovers the distribution $D_{A,h}$ exactly. 
\end{theorem}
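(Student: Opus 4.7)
The plan is a two-stage algorithm that first recovers $A$ exactly by rounding an approximate recovery, and then recovers $h$ exactly by a one-dimensional hypothesis test. The discreteness of the support makes rounding possible, and the key is to choose the approximation error smaller than half the gap between adjacent values.

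\textbf{Stage 1 (recover $A$).} Every entry of $A$ lies in the three-point set $\{-\beta/\sqrt{d+1},\,0,\,+\beta/\sqrt{d+1}\}$, so adjacent values are separated by $\beta/\sqrt{d+1}$. I would invoke Sparsitron (\Cref{thm:sparsitron}) together with \Cref{lem:gaitonde_recovery} applied at $C = O(\beta^2 + \beta\sqrt{\log n})$, whose hypotheses are precisely what \Cref{lem:high_probability_small_correlation} supplies with probability $1-O(1/n)$ over $D_{A,h}$. Setting Sparsitron's squared-loss target to $\epsilon_0 = \exp(-O(\beta^2 + \beta\sqrt{\log n}))\cdot \beta^2/(d+1)$ guarantees $\|A-\widehat{A}\|_\infty < \beta/(2\sqrt{d+1})$, after which rounding each $\widehat{A}_{ij}$ to the nearest element of $\{-\beta/\sqrt{d+1},\,0,\,+\beta/\sqrt{d+1}\}$ produces $A$ exactly. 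With $\lambda = O(\beta\sqrt{d})$ for rademacher-weighted Ising models of degree $d$, Sparsitron's cost $O(\lambda^2 \log(n/\delta\epsilon_0)/\epsilon_0^2)$ then gives the claimed $\exp(O(\beta^2 + \beta\sqrt{\log n}))\cdot O(d\log(n/\delta\beta)/\beta^2)$ sample complexity (up to polylogarithmic slack) when one invokes the sparsitron bound directly rather than passing through the looser $\|A-\widehat{A}\|_\infty\le\epsilon$ statement of \Cref{thm:random_ising_param}.

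\textbf{Stage 2 (recover $h$).} The guarantee $\|h-\widehat{h}\|_\infty \le n\epsilon$ from \Cref{thm:random_ising_param} is too weak to round directly: requiring $n\epsilon < 1/2$ would force polynomial-in-$n$ sample complexity. Instead, once $A$ is known exactly from Stage 1, for each $i \in [n]$ I would solve a three-way hypothesis test among $h_i \in \{-1,0,+1\}$. By \Cref{fact:conditional_MRF}, $\Pr[X_i = 1 \mid X_{-i} = x] = \sigma(2 A_i\cdot x + 2 h_i)$, and with $A_i$ already recovered this is a one-dimensional logistic regression. I would pick the hypothesis maximizing the empirical conditional log-likelihood (equivalently, minimizing the empirical squared loss). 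By \Cref{lem:high_probability_small_correlation}, with probability $1-O(1/n)$ the typical event $|A_i\cdot X| \le C$ holds, and on this event \Cref{fact:antilipschitz_sigmoid} yields a per-sample expected log-likelihood gap of $\Omega(\exp(-O(C)))$ between the true $h_i$ and either wrong candidate. Hoeffding then identifies the correct $h_i$ using $\exp(O(\beta^2+\beta\sqrt{\log n}))\log(n/\delta)$ samples, and a union bound over $i\in[n]$ finishes Stage 2 well within Stage 1's budget.

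\textbf{Main obstacle.} The delicate point is Stage 1's sample complexity. A direct substitution of $\epsilon = \beta/(3\sqrt{d+1})$ into the general recovery bound of \Cref{thm:random_ising_param} loses extra polynomial factors in $d$ because that theorem pays $1/\epsilon^4$. To recover the sharper $d/\beta^2$ scaling I would bypass the intermediate $\ell_\infty$ recovery statement and plug Sparsitron's $1/\epsilon_0^2$ sample complexity directly against the squared-loss target dictated by the discrete gap $\beta/\sqrt{d+1}$ and by \Cref{lem:gaitonde_recovery}. Verifying that this direct instantiation, combined with $\lambda = O(\beta\sqrt{d})$ and a union bound over the $nd$ coordinates of $A$, collapses to the stated $\exp(O(\beta^2+\beta\sqrt{\log n}))\cdot O(d\log(n/\delta\beta)/\beta^2)$ bound is the main calculation; the remaining overall failure probability is $O(1/n)$ from the randomness in $D_{A,h}$ plus $\delta$ from Sparsitron and Hoeffding.
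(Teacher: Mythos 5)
Your two-stage plan is structurally the right one, and Stage 1 matches the paper's proof (of \Cref{thm:random_mrf_param_rademacher}, specialized to $t=2$): run Sparsitron, use the $C$-smoothness / anti-Lipschitz argument to convert squared loss to entrywise error, then round to the grid $\{-\beta/\sqrt{d+1},0,+\beta/\sqrt{d+1}\}$. Your observation that the $\|h-\widehat h\|_\infty\le n\epsilon$ statement of \Cref{thm:random_ising_param} is too lossy for rounding $h$, so that one must exploit knowledge of $A$ to eliminate the factor $n$, is exactly the consideration that drives the paper's second stage.

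Where you depart is Stage 2. The paper does not set up a per-coordinate hypothesis test; it runs a \emph{modified} Sparsitron (\Cref{thm:sparsitron_known_params}) in which the known linear part $g_i(X)=2A_i\cdot X$ is fed as a fixed offset inside the sigmoid, learns the remaining degree-$0$ coefficient $2h_i$ to accuracy $<1/2$, and rounds. Your three-way empirical squared-loss comparison among $h_i\in\{-1,0,+1\}$ is a valid and more elementary alternative: since $|h_{\mathrm{hyp}}-h_{\mathrm{true}}|\ge1$, the anti-Lipschitz bound of \Cref{fact:antilipschitz_sigmoid} on the high-probability event $|A_i\cdot X|\le C+1$ supplied by \Cref{lem:high_probability_small_correlation} gives an expected squared-loss gap of $\Omega(\exp(-O(C)))$, and (using squared loss rather than log-likelihood, so summands are bounded) Hoeffding plus a union bound over $i$ and candidates finishes with $\exp(O(C))\log(n/\delta)$ extra samples. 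The trade-off: your test is simpler and avoids invoking a second learning subroutine, but it relies on the candidate set being small and finite, whereas the paper's modified-Sparsitron route is the one that generalizes cleanly to the iterative, all-degrees recovery of \Cref{thm:random_mrf_param_rademacher}.

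One calculational point in your ``main obstacle'' paragraph does not check out. With $\lambda=O(\beta\sqrt d)$ and $\epsilon_0=\exp(-O(C))\,\beta^2/(d+1)$, Sparsitron's $O(\lambda^2\log(n/\delta\epsilon_0)/\epsilon_0^2)$ evaluates to $\exp(O(C))\cdot O(d^3\log(n/\delta\beta)/\beta^2)$, not the $O(d\log(n/\delta\beta)/\beta^2)$ you assert ``collapses'' out; the $1/\epsilon_0^2$ already contributes $d^2$ and $\lambda^2$ another $d$. To your credit, the paper's own proof of \Cref{thm:random_mrf_param_rademacher} incurs the same $d$-dependence when its choice of squared-loss target is plugged into \Cref{thm:sparsitron}, so this appears to be a bookkeeping issue inherited from the paper's statement rather than a flaw unique to your argument. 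You should still flag that the verification you defer to does not in fact yield $d/\beta^2$.
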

This is a special case of a more general theorem (\Cref{thm:random_mrf_param_rademacher}). We refer to \Cref{sec:random_mrf_rademacher} for the proof.
\begin{remark}
We note that the analysis of \Cref{lem:high_probability_small_correlation}  and hence all the learning results straightforwardly extend to the case where $h_{i}\sim \Gauss(\mu,\sigma)$. In this case, for $\gamma=\sqrt{\beta^t+\sigma^2}$, we obtain that $|A_i\cdot X+h_iX_i|$ is at most $O(\mu+\gamma^2+\gamma\sqrt{\log n})$ with probability at least $1-O(1/n)$. This implies sample complexity and running time that scale with $\exp(O(\mu+\gamma^2+\gamma\sqrt{\log n}))\cdot \poly(n)$.
\end{remark}
\begin{remark}
    Note that most of the analysis above would hold even if we had used a simpler algorithm for learning a sigmoid with respect to square loss such as GLMTron \cite{kakade2011efficient}. GLMtron, however, has polynomial sample complexity (in $n$) when the input norm is $\sqrt{n}$. On the other hand, Sparsitron  has sample complexity $O(\lambda^2\cdot \log n)$. This improved dependence on $n$ is crucial to obtain our sub-polynomial sample complexity bound in  \Cref{thm:random_ising_param} and \Cref{thm:random_ising_param_rademacher}.
    %and is the reason we use Sparsitron.
\end{remark}
\section{Learning Random MRFs}
We now present the proofs of our result on learning random $t$-MRFs. First we sketch the analysis. Consider an MRF $\dmrf$. Note that $\Pr_{X\sim \dmrf}[X_i=1\mid X_{[n]\setminus \{i\}}=x]=\sigmoid(2\di\psi(x))$.  We run Sparsitron nodewise to obtain a polynomial $p$ that attains squared error (for learning the sigmoid) of at most $\epsilon^2$ with respect to $\di\psi$. In \cite{sparsitron}, they argue that an MRF with width at most $\lambda$ is $\exp(-O(\lambda))$-unbiased. They use this to show that squared loss of at most $\epsilon^2$ implies that the maximal monomials of $\di\psi$ and $p$ are $\exp(O(\lambda t)) \epsilon$ close. Recovering the maximal monomials is sufficient for structure recovery.  As in \Cref{sec:ising}, we want to avoid worst-case bounds (i.e., naively setting $\lambda=\Omega(n^{(t-1)/2})$) by conditioning on some constant probability regions.

We condition on the event $\cevent\coloneq \{x\mid |\psi(x)-\psi(y)|\leq C\text{ for all $y$ with $\dhamming(x,y)\leq t$}\}$ (see \Cref{defn:csmooth}). We argue that $\dmrf$ conditioned on $\cevent$ behaves similarly to an $\exp(-O(C))$-unbiased distribution and hence obtain parameter recovery without an exponential dependence on $\lambda$ (\Cref{lem:struct_recovery_polynomial,lem:polynomial_recovery}). These results generalize \Cref{lem:gaitonde_recovery} to the case of MRFs. Finally, we argue via a generalization of \Cref{claim:MGF_high_probability,lem:high_probability_small_correlation} that for any random MRF, the event $\cevent$ occurs with constant probability (\Cref{thm:subgauss_to_smooth}). In doing so, we bypass the need for boundedness of moment matrices as conjectured by \cite{gaitonde2024unified}.
\subsection{Subgaussian Derivatives, $C$-smooth MRFs and their properties}
We now define the notion of an MRF with subgaussian derivatives. This is a property that holds true for all the random MRF families that we study in this work. We argue that for any distribution over MRFs $\dpsi$ with subgaussian derivative,  $\dmrf\sim \dpsi$  satisfies a deterministic condition (\Cref{defn:csmooth}) with high probability that implies efficient structure learning and parameter recovery by running Sparsitron. 

\begin{definition}[Subgaussian Derivatives]
\label{def:subgauss_derivatives}
    Let $\dpsi$ be a distribution over factorization polynomials $\psi$ of degree $t$ such that each coefficient of $\psi\sim \dpsi$ is independently picked. Let $\lambda>0$. We say that $\dpsi$ has $\lambda$-subgaussian derivatives if it holds that 
    \begin{enumerate}
        \item For all $x,y\in \cube{n}$ such that $\dhamming(x,y)\leq t$, we have that the random variable $\left(\psi(x)-\psi(y)\right)$ is $\lambda $-subgaussian where the randomness is over $\psi\sim \dpsi$.
    \end{enumerate}
\end{definition}
For any vector $x\in \cube{n}$ and set $S\subseteq[n]$, we denote the vector obtained by flipping the coordinates of $x$ in the set $S$ by $x^{S}$. Let the function $\psi^S$ be defined as 
 \[
 \psi^{S}(x)\coloneq \psi(x)-\psi(x^{S})=2\sum_{|T\cap S|\text{ is odd }}\widehat{\psi}(T)\cdot \parity{T}(x).    
 \] 
In particular, we have that $\psi^{\{i\}}(x)=2x_i\cdot \di\psi(x)$.
We also refer to $\dpsi$ as a distribution over MRFs because the factorization polynomial uniquely determines the MRF.  We now define the deterministic conditions on $\dmrf\sim \dpsi$ that imply efficient learning of $\dmrf$. 
\begin{definition}[\csmooth MRF]
    \label{defn:csmooth}
    Let $\dmrf$ be a $t$-wise MRF with factorization polynomial $\psi$. Let $\cevent\subseteq \cube{n}$ be the set defined as 
    \[
    \cevent\coloneq \{x\mid |\psi(x)-\psi(y)|\leq C\text{, for all $y$ such that } \dhamming(x,y)\leq t\}    
    \]
   We say that $\dmrf$ is \csmooth if $\Pr_{X\sim D}[X\in \cevent]\geq \frac{7}{8}$.
\end{definition}

To motivate the above definition and why it implies efficient learning, first consider the case of the Ising model $D_{A,h}$. In this case, the above definition corresponds to the property that with constant probability over $X\sim D_{A,h}$, we have that $|A_i\cdot x+h_i|\leq C$ for all $i\in [n]$. This was exactly the property used in the case of learning random Ising models in \Cref{sec:ising}. 

In the case of $t$-MRFs, the analysis of \cite{sparsitron} used the 
worst case property of $\delta$-unbiasedness of these distributions. They proved that for any distribution $D$ that is $\delta$-unbiased, given a polynomial $p$ of degree at most $t$ and maximal monomial $S$, it holds that $\Pr_{X\sim D}[|p(X)|\geq |\widehat{p}(S)|\geq \delta^{t}$. Using this property and the fact that the width of the model is bounded by $\lambda$ , they showed how to learn the coefficients of $p$ from samples. Note that $\delta\geq \exp(-\Omega(\lambda))$. Since $\delta$ scales with the width, we cannot directly use their analysis directly for learning the polynomial without paying $\exp(O(\lambda))$. Here is where the above condition of $C$-smoothness helps us. It is easy to argue that for any polynomial  $p$ and point $x$, there exists a $y$ at Hamming distance at most $t-1$ from $x$ such that $|p(y)|\geq |\widehat{p}(S)|$ (\Cref{lem:anticonc_dist}). Now, using a simple argument, we see that any \csmooth MRF also has a similar anticoncentration property as the one required by \cite{sparsitron}. 

\begin{lemma}[Anticoncentration of \csmooth $\dmrf$]
    Let $\dmrf$ be a \csmooth $t$-MRF. Then for any polynomial $p$ of degree $t$ with maximal monomial $S$, it holds that $\Pr_{X\sim \dmrf}[|p(X)|\geq |\widehat{p}(S)|]\geq 2^{-(t+1)}\cdot \exp(-2C)$. 
\end{lemma}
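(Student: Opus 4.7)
The plan is to combine a deterministic Fourier-style averaging identity on $\cube{n}$ with the mass-transport afforded by $C$-smoothness. My first step is to establish the pointwise anti-concentration claim alluded to in the paragraph preceding the lemma: for every $x\in\cube{n}$ there exists $S'\subseteq S$ with $|p(x^{S'})|\geq |\widehat{p}(S)|$, and automatically $\dhamming(x,x^{S'})=|S'|\leq |S|\leq t$. Expanding $\parity{T}(x^{S'})=(-1)^{|T\cap S'|}\parity{T}(x)$ and evaluating
\[
\frac{1}{2^{|S|}}\sum_{S'\subseteq S}(-1)^{|S'|}\,p(x^{S'})=\sum_{T}\widehat{p}(T)\,\parity{T}(x)\cdot\frac{1}{2^{|S|}}\sum_{S'\subseteq S}(-1)^{|S'\setminus T|},
\]
the inner sum factors as $2^{|S\cap T|}\cdot(1-1)^{|S\setminus T|}$ and therefore vanishes unless $S\subseteq T$. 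Maximality of $S$ then forces $T=S$, leaving $\widehat{p}(S)\parity{S}(x)$ on the right, with absolute value exactly $|\widehat{p}(S)|$. Averaging forces at least one of the $2^{|S|}$ summands on the left to meet this bound.

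Next I would fix a selector $\phi:\cube{n}\to\cube{n}$ by $\phi(x)=x^{S'(x)}$, where $S'(x)\subseteq S$ is any witness for the pointwise claim, and let $T=\{y\in\cube{n}:|p(y)|\geq |\widehat{p}(S)|\}$, so that $\phi$ maps into $T$. Two properties of $\phi$ drive the argument. First, $\dhamming(x,\phi(x))\leq t$, so for every $x\in\cevent$ the $C$-smoothness gives $|\psi(x)-\psi(\phi(x))|\leq C$, and since $\Pr[X=x]\propto\exp(\psi(x))$, this yields
\[
\Pr_{X\sim\dmrf}[X=x]\leq \exp(C)\cdot \Pr_{X\sim\dmrf}[X=\phi(x)].
\]
Second, $\phi$ is at most $2^{|S|}\leq 2^{t}$-to-one, because the equation $\phi(x)=y$ forces $x=y^{S'(x)}$ with $S'(x)\subseteq S$, and there are at most $2^{|S|}$ such $S'$.

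The final step is to sum the first inequality over $x\in\cevent$ and then collapse the fibers of $\phi$ using the second, giving
\[
\Pr[X\in\cevent]\leq \exp(C)\sum_{x\in\cevent}\Pr[X=\phi(x)]\leq 2^{t}\exp(C)\,\Pr[X\in T].
\]
Combining with $\Pr[X\in\cevent]\geq 7/8\geq 1/2$ produces $\Pr[X\in T]\geq 2^{-(t+1)}\exp(-C)$, which is stronger than (and hence implies) the claimed bound $2^{-(t+1)}\exp(-2C)$.

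The only delicate point is the averaging identity in step one: I have to be careful that the inner sum $\sum_{S'\subseteq S}(-1)^{|S'\setminus T|}$ kills every $T$ not containing $S$, and then use maximality of $S$ to kill every $T\supsetneq S$, leaving exactly the monomial $\parity{S}(x)$ with coefficient $\widehat{p}(S)$. Everything afterwards is a clean counting-and-mass-transport argument, relying only on the defining exponential form of an MRF and the quantitative control on $\psi$-differences provided by $\cevent$; no probabilistic tail bound is needed beyond the $7/8$ hypothesis.
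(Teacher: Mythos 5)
Your proof is correct and in fact yields a slightly stronger bound than the lemma states: $2^{-(t+1)}\exp(-C)$ rather than $2^{-(t+1)}\exp(-2C)$. The route is genuinely different from the paper's. For the deterministic step---that every $x$ admits a $y$ differing from $x$ only on coordinates in $S$ with $|p(y)|\geq|\widehat{p}(S)|$---the paper proceeds by induction on $|S|$ (\Cref{lem:anticonc_dist}), whereas your Fourier/inclusion--exclusion averaging identity
$\frac{1}{2^{|S|}}\sum_{S'\subseteq S}(-1)^{|S'|}p(x^{S'})=\widehat{p}(S)\parity{S}(x)$
proves it in one line and makes the restriction ``differ only on $S$'' manifest, which the paper's inductive statement does not quite make explicit even though it is needed. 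For the probabilistic step, the paper conditions on $X_{[n]\setminus S}$ and shows every conditional outcome over $S$ has mass at least $2^{-t}\exp(-2C)$; the factor $\exp(-2C)$ appears because the comparison goes through the conditional-mode $z$, and both the witness $y$ and the mode $z$ may each be a full $C$ away from $x$ in $\psi$. Your argument is instead an unconditional mass transport: compare $\Pr[X=x]$ directly to $\Pr[X=\phi(x)]$ via $C$-smoothness (one hop, so only $\exp(-C)$), and charge the overlap of the fibers of $\phi$ by the multiplicity bound $2^{|S|}\leq 2^t$. Both arguments finish identically by using $\Pr[X\in\cevent]\geq 7/8\geq 1/2$.
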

\begin{proof}
For any $x\in \cevent$ and $y,z$ that differ from $x$ only in the coordinates in $S$, it holds that $|\psi(y)-\psi(z)|\leq 2C$. Let $z$ maximize the quantity $\Pr_{X\sim \dmrf}[X=z\mid X_{[n]\setminus S}=x]$. Thus, for any $y$ with $y_{[n]\setminus S}=x_{[n]\setminus S}$, it holds that 
\[
\Pr_{X\sim \dmrf}[X=y\mid X_{[n]\setminus s}=x_{[n]\setminus S}]\geq 2^{-t}\cdot \frac{\Pr_{X\sim D_{\psi}}[X=y]}{\Pr_{X\sim D_{\psi}}[X=z]}\geq 2^{-t}\exp(-2C)
\] where the first inequality holds from the definition of $z$ and the fact that at least one element has conditional probability greater than $2^{-t}$. Let $\scevent$ be the set $\{y\mid x_{[n]\setminus S}=y,x\in \cevent\}$. Thus, we have 
\begin{align*}\Pr_{X\sim \dmrf}[|p(X)|&\geq |\widehat{p}(S)|]\geq \sum_{w\in \scevent}\Pr_{X\sim \dmrf}[|p(X)|\geq |\widehat{p}(S)|\mid X_{[n]\setminus S}=w]\cdot \Pr_{X\sim \dmrf}[X_{[n]\setminus S}=w]\\
&\geq 2^{-t}\exp(-2C)\Pr_{X\sim \dmrf}[X\in \scevent]\geq 2^{-(t+1)}\cdot\exp(-2C)\end{align*}
where the penultimate inequality follows from the fact that there exists some string $z$ that differs from $x$ only in $S$ such that $|p(z)|\geq |\widehat{p}(S)|$. The second inequality follows from the previous argument that the probability of $X=z$ conditioned on $X_{[n]\setminus S}=w$ is at least $2^{-t}\exp(-2C)$. The last inequality follows from the fact that $\Pr_{X\sim \dmrf}[X\sim \scevent]\geq \Pr_{X\sim \dmrf }[X\in \cevent]\geq \frac{7}{8}$.
\end{proof}
Although we do not use the above result directly in our proofs, we use similar ideas to extend the analysis of \cite{sparsitron} to our setting.

We now prove a structural result that says that any MRF with subgaussian derivatives is also \csmooth for appropriate choice of $C$ (a generalization of \Cref{lem:high_probability_small_correlation}). This result contains most of the technical novelty of our work, and we believe it could be useful in other contexts as well.
\begin{theorem}
\label{thm:subgauss_to_smooth}
    Let $\dpsi$ be a distribution over $t$-wise MRFs such that $\dpsi$ has $\lambda$-subgaussian derivatives. Then, with probability at least $1-\frac{1}{(n+1)^t}$ over $\dmrf\sim \dpsi$, we have that $\dmrf$ is $O(\lambda^2+\lambda \sqrt{t\log n})$-smooth.
\end{theorem}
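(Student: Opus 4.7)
The plan is to generalize the two-step argument of \Cref{claim:MGF_high_probability} and \Cref{lem:high_probability_small_correlation} from Ising models to general MRFs: first establish an MGF bound of the form $\E_{\psi\sim\dpsi}\E_{X\sim \dmrf}[\exp(|\psi^S(X)|^2/(B\lambda^2))]\leq \exp(O(\lambda^2))$ for each nonempty $S\subseteq[n]$ with $|S|\leq t$, and then apply Markov's inequality twice---once over $\psi$ to make the MGF bound hold deterministically, and once over $X$ to turn it into a tail bound on $|\psi^S(X)|$---together with a union bound over $S$ at both stages.

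For the MGF bound, fix $S$ with $1\leq |S|\leq t$ and decompose $\psi = \psi_{\text{in}}+\psi_{\text{out}}$, where $\psi_{\text{in}} = \sum_{T:|T\cap S|\text{ odd}}\widehat\psi(T)\parity{T}$ and $\psi_{\text{out}}$ collects the remaining monomials. By \Cref{defn:random_MRF} the coefficients are independent, so $\psi_{\text{in}}$ and $\psi_{\text{out}}$ are independent random polynomials; moreover $\psi^S(x) = 2\psi_{\text{in}}(x)$, and by the subgaussian derivatives hypothesis $\psi_{\text{in}}(x)$ is $(\lambda/2)$-subgaussian for every fixed $x$. The key combinatorial observation is that the involution $x\leftrightarrow x^S$ negates $\psi_{\text{in}}$ while preserving $\psi_{\text{out}}$, which yields the partition-function bound $Z(\psi) \geq \sum_x \exp(\psi_{\text{out}}(x))$ by the same $e^a+e^{-a}\geq 2$ argument that proves \Cref{eqn:mgf_align_2}.

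Plugging this lower bound into the MGF and using the independence of $\psi_{\text{in}}$ and $\psi_{\text{out}}$ reduces the problem to bounding, for each fixed $x$, the quantity $\E_{\psi_{\text{in}}}[\exp(\psi_{\text{in}}(x))\exp(4\psi_{\text{in}}(x)^2/(B\lambda^2))]$. A Cauchy-Schwarz split followed by the subgaussian moment estimates from \Cref{fact:subgaussian} (for a sufficiently large absolute constant $B$) bounds this by $\exp(O(\lambda^2))$, and the remaining weighted sum over $x$ normalizes to a constant. Markov's inequality in $\psi$ with threshold $(n+1)^{2t}\exp(O(\lambda^2))$ combined with a union bound over the at most $(n+1)^t$ sets $S$ with $|S|\leq t$ shows that with probability at least $1-1/(n+1)^t$ over $\dmrf\sim\dpsi$, the bound $\E_X[\exp(|\psi^S(X)|^2/(B\lambda^2))]\leq (n+1)^{2t}\exp(O(\lambda^2))$ holds simultaneously for every such $S$. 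Applying Markov once more in $X$ after exponentiating, and union-bounding over $S$, yields $\Pr_X[X\notin\cevent]\leq (n+1)^{3t}\exp(O(\lambda^2)-C^2/(B\lambda^2))$; choosing $C = \Theta(\lambda^2+\lambda\sqrt{t\log n})$ makes this at most $1/8$, establishing $C$-smoothness.

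The main obstacle is identifying the right decomposition of $\psi$ into pieces that are both independent (so that Cauchy-Schwarz can isolate the subgaussian factor for $\psi^S$) and behave correctly under the involution $x\mapsto x^S$ (so that the partition-function lower bound kills the unwieldy dependence on $\psi_{\text{in}}$ in the denominator of the Gibbs measure). Once this decomposition is set up correctly, the remaining steps are essentially a bookkeeping exercise mirroring the Ising proof, with the extra $(n+1)^t$ union-bound cost over $S$ accounting for the additional $\sqrt{t\log n}$ factor in the final smoothness constant compared to the Ising case.
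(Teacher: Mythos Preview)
Your proposal is correct and follows essentially the same route as the paper: the decomposition $\psi=\psi_{\text{in}}+\psi_{\text{out}}$ is exactly the paper's $\psi=\tfrac{1}{2}\psi^S+\psi^{-S}$, the involution-based partition-function lower bound matches \Cref{eqn:mgf_partition}, and the Cauchy--Schwarz plus subgaussian MGF estimate followed by two Markov steps and union bounds is precisely the argument of \Cref{claim:mrf_mgf}. One small correction: the independence of $\psi_{\text{in}}$ and $\psi_{\text{out}}$ comes from \Cref{def:subgauss_derivatives} (which builds independent coefficients into the definition of subgaussian derivatives), not from \Cref{defn:random_MRF}.
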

\begin{proof}
 Let the function $\psi^{-S}$ be the polynomial containing the coefficients of $\psi$ not in $\psi^S$. That is, $\psi^{-S}(x)\coloneq \sum_{|T\cap S|\text{ is even }}\widehat{\psi}_T\cdot \parity{T}(x)$. Clearly, we have that $\psi(x)=\psi^{-S}(x)+\frac{\psi^S(x)}{2}$. We also have that $\psi(x^S)=\psi^{-S}(x)-\frac{\psi^S(x)}{2}$. 

 We are now ready to start the proof. The result follows by applying Markov's inequality to the following claim. 
 \begin{claim}
    \label{claim:mrf_mgf}
    With probability $1-\frac{1}{(n+1)^t}$ over $\dmrf\sim \dpsi$, for all $S\subseteq [n]$ with $|S|\leq t$, we have 
    \begin{equation}
        \E_{X\sim \dmrf}[\exp\left((\psi^S(X))^2/B\right)]\leq 4(n+1)^{2t}\exp(B)
    \end{equation} where $B=2(C\lambda )^2$ for large universal constant $C$.
 \end{claim}
\begin{proof}
    We compute the quantity $\E_{\dmrf\sim \dpsi}\E_{X\sim \dmrf}\bigr[\exp\left((\psi^{S}(X))^2/B\right)\bigr]$ for all sets $S$ with size at most $t$. We have that 
    \begin{align}
        \label{eqn:mrf_mgf_1}
        \E_{\dmrf\sim \dpsi}\E_{X\sim \dmrf}\bigr[\exp\left((\psi^{S}(X))^2/B\right)\bigr]&=\E_{\psi^{-S}}\E_{\psi^S}\E_{X\sim D_\psi}\bigr[\exp\left((\psi^S(X))^2/B\right)\bigr]\nonumber\\
        &=\E_{\psi^{-S}}\E_{\psi^S}\bigr[\sum_{x\in \cube{n}}\Pr_{X\sim D_{\psi}}[X=x]\cdot\exp\left((\psi^S(x))^2/B\right)\bigr]\nonumber\\
        &=\E_{\psi^{-S}}\E_{\psi^S}\biggr[\sum_{x\in \cube{n}}\frac{\exp(\psi(x))}{Z_{\psi}}\cdot\exp\left((\psi^S(x))^2/B\right)\biggr]
    \end{align}
    where $Z_{\psi}$ is the partition function of the MRF defined as $Z_{\psi}=\sum_{x\in \cube{n}}\exp(\psi(x))$. To proceed in bounding the quantity in \Cref{eqn:mrf_mgf_1}, we first need to decouple $Z_{\psi}$ from $\psi^{S}$ as this quantity is in the denominator and hence hard to analyze. To do this, we lower bound $Z_{\psi}$ using only the polynomial $\psi^{-S}$. We have that 
    \begin{align}
        \label{eqn:mgf_partition}
        Z_{\psi}&=\sum_{x\in \cube{n}}\exp(\psi(x))\nonumber
        =\frac{1}{2}\sum_{x\in \cube{n}}\bigr(\exp(\psi(x))+\exp(\psi(x^S))\bigr)\nonumber\\
        &=\frac{1}{2}\sum_{x\in \cube{n}}\Bigr(\exp\bigr(\psi^{-S}(x)+\frac{\psi^S(x)}{2}\bigr)+\exp\bigr(\psi^{-S}(x)-\frac{\psi^S(x)}{2}\bigr)\Bigr)\nonumber
        \\&\geq \frac{1}{2}\sum_{x\in \cube{n}}\exp\left(\psi^{-S}(x)\right)
    \end{align} where we obtain the second equality by pairing terms that are equal to each other outside $S$ and the complement of each other in $S$. The third equality follows from the definition of the polynomials $\psi^S$ and $\psi^{-S}$ and the final inequality follows from the fact that $e^{t}+e^{-t}\geq 1$ for all $t\in \R$. Combining \Cref{eqn:mrf_mgf_1,eqn:mgf_partition}, we obtain that 
    \begin{align*}
        \E_{\dmrf\sim \dpsi}&\E_{X\sim \dmrf}\left[\exp\left((\psi^{S}(X))^2/B\right)\right]\\&\leq 2\cdot \E_{\psi^{-S}}\E_{\psi^S}\left[\sum_{x\in \cube{n}}\frac{\exp(\psi(x))}{\sum_{x\in \cube{n}}\exp\left(\psi^{-S}(x)\right)}\cdot\exp\left((\psi^S(x))^2/B\right)\right]\\
        &= 2\cdot \E_{\psi^{-S}}\E_{\psi^S}\left[\sum_{x\in \cube{n}}\frac{\exp\left(\psi^{-S}(x)+\frac{\psi^S(x)}{2}\right)}{\sum_{x\in \cube{n}}\exp\left(\psi^{-S}(x)\right)}\cdot\exp\left((\psi^S(x))^2/B\right)\right]\\
        &=2\cdot \E_{\psi^{-S}}\left[\sum_{x\in \cube{n}}\frac{\exp(\psi^{-S}(x))\E_{\psi^S}\left[\exp\left(\frac{\psi^{S}(x)}{2}\right)\cdot \exp\left(\left(\psi^S(x)\right)^2/B\right)\right]}{\sum_{x\in \cube{n}}\exp(\psi^{-S}(x))}\right]
    \end{align*} where the penultimate equality follows from the fact that $\psi(x)=\psi^{-S}(x)+\frac{\psi^S(x)}{2}$ and the last equality follows linearity of expectation. Clearly, to bound the RHS in the above argument, it suffices to bound $\E_{\psi^S}\left[\exp\left(\frac{\psi^{S}(x)}{2}\right)\cdot \exp\left(\left(\psi^S(x)\right)^2/B\right)\right]$ pointwise for any $x\in \cube{n}$. Note that from the assumption of $\lambda$-subgaussian derivatives, we have that the random variable $\psi^S(x)$ is $\lambda$-subgaussian. Let $Y$ denote the random variable $\psi^S(x)$. We want to bound $\E_{Y}[\exp(Y/2)\cdot \exp(Y^2/(2\cdot(C\lambda)^2))]$ given that $Y$ is $\lambda$-subgaussian. We have that 
    \begin{align*}
        \E_{Y}\left[\exp(Y/2)\cdot \exp(Y^2/(2\cdot(C\lambda)^2))\right]&\leq \sqrt{\E_{Y}\left[\exp(Y)\right]\cdot \E_{Y}\left[\exp(Y^2/(C\lambda )^2)\right]}
        \leq 2\exp(C^2\lambda^2)
    \end{align*}
    where the first inequality is Cauchy-Schwarz and the last inequality follows from \Cref{fact:subgaussian}. Now, combining the above arguments, above gives us $\E_{\dmrf\sim \dpsi}\E_{X\sim \dmrf}\left[\exp\left((\psi^{S}(X))^2/B\right)\right]\leq 4\exp(C^2\lambda^2)=4\exp(B)$. Thus, for any fixed $S$ of size at most $t$, Markov's inequality implies that \[\Pr_{\dmrf\sim \dpsi}\left[\E_{X\sim \dmrf}\left[\exp\left((\psi^{S}(X))^2/B\right)\right]\geq 4(n+1)^{2t}\exp(B)\right]\leq \frac{1}{(n+1)^{2t}}.\]
    Since the number of sets of size at most $t$ is bounded above by $(n+1)^t$, a union bound implies the claim. 
\end{proof}
We are now ready to complete the proof of the theorem. Let $\dmrf\sim \dpsi$ be an MRF for which the event from \Cref{claim:mrf_mgf} holds true. This happens with probability at least $1-\frac{1}{(n+1)^t}$. For any set $|S|\leq t$, we have that
\begin{align*}
    \Pr_{X\sim \dmrf}\left[|\psi^S(X)|\geq r\right]&=\Pr_{X\sim \dmrf}\left[(\psi^S(X))^2/B\geq r^2/B\right]
    \leq \E_{X\sim \dmrf}\left[\exp\left((\psi^{S}(X))^2/B\right)\right]\cdot \exp(-r^2/B)\\
    &\leq 4(n+1)^{2t}\exp(B)\exp(-r^2/B)\leq \frac{1}{8(n+1)^t}
\end{align*}
where $B=2(C\lambda)^2$. The second inequality follows by taking the exponent on both sides and applying Markov's inequality. The third inequality follows from \Cref{claim:mrf_mgf}. The final inequality follows by settings $r=O(B+\sqrt{B t\log n})$. Now, a union bound over all sets of size at most $t$ completes the proof. 
\end{proof}

\subsection{Results on Structure Learning and Parameter Recovery}

We now state our results on structure learning and parameter recovery for \csmooth MRFs. We need the following standard non-degeneracy condition (introduced in \cite{sparsitron}) that is required to ensure identifiability of the underlying graph given samples from the MRF. 
\begin{definition}[$\eta$-identifiability]
    A $t$-MRF with factorization polynomial $\psi$ with dependency graph $G$ is said to be $\eta$-identifiable for $\eta>0$ if $|\widehat{\psi}(S)|\geq \eta$ for all maximal monomials $S$ in $\psi$ and all edges in $G$ are contained in a monomial of $\psi$.
\end{definition}

We are now ready to state and prove the theorem on structure learning in \csmooth MRFs which are $\eta$-identifiable. 
\begin{theorem}
  \label{thm:structure learning MRFs}
  Let $C,\lambda,\eta>0$. Let $\dmrf$ be a $C$-smooth $t$-MRF with dependency graph $G$ that is $\eta$-identifiable and suppose $
  \norm{\di \psi}_1\leq \lambda$ for all $i\in [n]$. Then, there exists an algorithm that draws $N=O\left(\frac{\lambda^22^{O(t)}\exp(O(C))}{\eta^4}\log(n/\delta\eta)\right)$ samples from $\dmrf$ and runs in time equal $O\left(N\cdot n^t\right)$ such that it finds the graph $G$ with probability at least $1-\delta$.
\end{theorem}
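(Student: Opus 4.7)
The plan is to invoke \Cref{thm:sparsitron} at each vertex $i\in [n]$ to learn a polynomial $p_i$ approximating $2\di \psi$, threshold its coefficients to identify the maximal monomials of $\di\psi$, and output the graph whose edges are the pairs that co-occur in some such recovered monomial. Correctness of this last step is immediate from $\eta$-identifiability, since every edge of $G$ lies inside some maximal monomial of some $\di\psi$.

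For each vertex $i$, \Cref{fact:conditional_MRF} writes the conditional law of $X_i$ as $\sigmoid(2\di \psi(X))$. Treating each degree at most $t-1$ monomial on $[n]\setminus\{i\}$ as a coordinate (so the ambient dimension is $O(n^{t-1})$), the target weight vector has $\ell_1$ norm at most $2\lambda$. Applying \Cref{thm:sparsitron} with target squared error $\epsilon_0^2\coloneq 2^{-O(t)}\exp(-O(C))\eta^4$ and failure probability $\delta/n$ produces a polynomial $p_i$ of degree at most $t-1$ with
\[
\E_{X\sim \dmrf}\bigl[(\sigmoid(p_i(X))-\sigmoid(2\di \psi(X)))^2\bigr]\leq \epsilon_0^2,
\]
which matches the claimed sample bound. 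Let $q_i\coloneq 2\di\psi-p_i$ and restrict the expectation to the event $\cevent$ from \Cref{defn:csmooth}; this costs only a constant factor since $\Pr[X\in\cevent]\geq 7/8$. On $\cevent$ we have $|2\di\psi(x)|=|\psi^{\{i\}}(x)|\leq C$, so \Cref{fact:antilipschitz_sigmoid} yields $|\sigmoid(p_i(x))-\sigmoid(2\di\psi(x))|\geq \exp(-O(C))\min(1,|q_i(x)|)$, and therefore
\[
\E_{X\sim \dmrf}\bigl[\min(1,|q_i(X)|)^2\cdot \ind\{X\in \cevent\}\bigr]\leq \exp(O(C))\,\epsilon_0^2.
\]
The key step is then to mimic the $\delta$-unbiasedness argument of \cite{sparsitron}, using the anticoncentration lemma for $C$-smooth MRFs in place of the global unbiasedness bound: for any maximal monomial $S$ of $q_i$, a completion inside $\cevent$ realises $|q_i(X)|\geq |\widehat{q_i}(S)|$ with conditional probability at least $2^{-O(t)}\exp(-O(C))$, so $|\widehat{q_i}(S)|^2\leq 2^{O(t)}\exp(O(C))\,\epsilon_0^2\leq \eta^2/4$. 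Hence every maximal monomial $T$ of $\di\psi$ is recovered with entrywise error at most $\eta/2$.

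By $\eta$-identifiability, thresholding the coefficients of $p_i$ at $\eta$ exactly recovers the set of maximal monomials of $\di\psi$ through $i$; unioning the edges appearing inside these monomials over $i$ recovers $G$. A union bound over the $n$ Sparsitron invocations yields overall success probability $1-\delta$, and the running time is dominated by the cost of evaluating the $O(n^{t-1})$ coordinates on each of the $N$ samples, giving $O(n^t\cdot N)$. The main obstacle in the analysis is the generalized unbiasedness step: extending the anticoncentration argument of \cite{sparsitron} to one that conditions on $\cevent$ while retaining the favourable $\exp(-O(C))$ dependence rather than the worst-case $\exp(-O(\lambda))$, which requires the pairing idea used in the proof of the anticoncentration lemma for $C$-smooth MRFs in Section 4.1.
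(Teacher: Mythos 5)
Your high-level plan---run Sparsitron nodewise after expanding to the degree-$(t-1)$ monomial basis, then use $C$-smoothness and anti-Lipschitzness of the sigmoid to upgrade the squared-loss bound to coefficient information---is the same as the paper's. But there is a genuine gap in the recovery step.

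Your key bound is stated for maximal monomials $S$ of $q_i=2\di\psi-p_i$: using the anticoncentration argument, $|\widehat{q_i}(S)|\leq \eta/2$. You then jump to ``every maximal monomial $T$ of $\di\psi$ is recovered with entrywise error at most $\eta/2$'' and threshold the coefficients $\widehat{p_i}(T)$. This does not follow. If $T$ is maximal in $\di\psi$ but $p_i$ happens to have a nonzero coefficient on some $T'\supsetneq T$ (Sparsitron gives no control over the support of $p_i$ beyond an $\ell_1$ bound), then $T$ is not maximal in $q_i$, and the anticoncentration bound says nothing about $\widehat{q_i}(T)=\widehat{2\di\psi}(T)-\widehat{p_i}(T)$. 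Concretely, with $\di\psi=\eta x_2$ and $p_i=10\eta x_2x_3$, the coefficient $\widehat{p_i}(\{2\})=0$ is far from $\widehat{2\di\psi}(\{2\})=2\eta$ even though the only maximal monomial of $q_i$, namely $\{2,3\}$, has a large coefficient; nothing in your argument rules this out. The paper circumvents exactly this by proving a bound on the \emph{evaluated} partial derivative $\partial_Sp_i(X)$ (which aggregates all monomials containing $S$) rather than on the single coefficient $\widehat{p_i}(S)$: this is what \Cref{lem:struct_recovery_polynomial} gives you, with the completion-inside-$\scevent$ pairing argument you allude to, and why the algorithm takes a median of $\partial_Sp_i(X)/2$ over $K=O(\log(n^t/\delta))$ fresh samples rather than reading off a coefficient. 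You could attempt to bound every coefficient by a peeling argument as in \Cref{lem:polynomial_recovery}, but that costs a $\binom{n}{t}$ factor in the error tolerance and would blow the sample complexity well past the claimed bound.

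Two smaller accounting issues. First, your Sparsitron target $\epsilon_0^2=2^{-O(t)}\exp(-O(C))\eta^4$ does not ``match the claimed sample bound'': plugging $\alpha=\epsilon_0^2$ into $N=O(\lambda^2\log(\cdot)/\alpha^2)$ gives $N=O(\lambda^2 2^{O(t)}\exp(O(C))\log(\cdot)/\eta^8)$. You should set the target squared sigmoid loss to $\Theta(2^{-O(t)}\exp(-O(C))\eta^2)$, as the paper does (it takes $\frac{\eta^2}{2^{t+4}\exp(10C+6)}$), to recover $N=O(\lambda^2 2^{O(t)}\exp(O(C))\log(n/\delta\eta)/\eta^4)$. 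Second, when you restrict the expectation to $\cevent$ and then apply the anticoncentration step, the completion $y$ you flip to need not itself lie in $\cevent$; the paper instead conditions on the weaker event $X_{[n]\setminus S}\in\scevent$ and uses \Cref{claim: scevent_bound_factorization} to keep $|\di\psi|$ bounded by $2C$ on that event. Your sketch needs the same care.
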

\begin{proof}
    The algorithm is exactly the same as Algorithm~3 in \cite{sparsitron} with an appropriate choice of parameters. We only sketch a proof here since our analysis is almost identical to that of \cite{sparsitron} except that we use \Cref{lem:struct_recovery_polynomial} instead of Lemma~6.2 from \cite{sparsitron}. First, for each $i\in [n]$, we obtain polynomials $p_i$ such that \begin{equation}
    \label{eqn:sigmoid_rec}
    \E_{X\sim \dmrf}\left[(\sigmoid(p_i(X))-\sigmoid(2\di\psi(X)))^2\right]\leq \frac{\eta^2}{2^{t+4}\exp(10C+6)}\end{equation}
     To do this, we use the property that $\Pr[X_i=1\mid X_{[n]\setminus\{i\}}]=\sigmoid(2\di\psi(X))$ and run the algorithm from \Cref{thm:sparsitron} after doing a feature expansion of $X$ to the monomial basis containing all monomials of degree less than $t-1$. We run Sparsitron such for each $i$ the success probability is $O(\delta/n)$. Thus, with probability $O(\delta)$, we have that \Cref{eqn:sigmoid_rec} holds for all $i\in [n]$. The sample complexity so far was $\frac{\lambda^2\cdot 2^{O(t)}\exp(O(C))}{\eta^4}\log(n/\delta\eta)$.
     
     Now, since $\dmrf$ is $C$-smooth, we have from \Cref{lem:struct_recovery_polynomial} that 
     \[
     \Pr_{X\sim \dmrf}\left[|\widehat{2\di\psi}({S})-\partial_{S}p_i(X)|>\eta/2|\right]\leq \frac{1}{4}.
     \]
    The rest of the analysis is exactly the same as \cite{sparsitron}. We construct an output graph $H$ iteratively. Draw $K=O(\log(n^t/\delta))$ independent samples from $\dmrf$. For each $i\in [n]$, and $S\subseteq [t-1]$, evaluate $\frac{\partial_Sp_i(X)}{2}$ on each of these samples. If the median of these $K$ evaluations is greater than $\eta/2$, then add the clique on the vertices in $S\cup\{i\}$ to the graph $H$. Using the concentration of median, $\eta$-identifiability of $G$ and a union bound over all  monomials of degree less than $t$ and vertices $i\in [n]$, we have the graph $H$ obtained at the end being equal to $G$ with probability at least $1-\delta$. The sample complexity is dominated by the number of samples $N$ required for Sparsitron and the running time is at most $O(N\cdot n^t)$ where the $n^t$ dependence comes from the feature expansion and the evaluation of the median on each of the monomials. 
     \end{proof}

We now state and prove the theorem on learning a distribution $D_{\tilde{\psi}}$ that is close to \csmooth $D_\psi$ in KL divergence and TV distance.
\begin{theorem}
\label{thm:mrf_param_recovery}
    Let $C,\lambda>0$ and $0<\epsilon,\delta<1$. Let $D_{\psi}$ be a \csmooth $t$-MRF and suppose $\norm{\di \psi}_1\leq \lambda$ for all $i\in [n]$. Then, there exists an algorithm that draws $N=\frac{\lambda^2((nt)^{O(t)}\exp(O(C))\log(n/\delta\epsilon)}{\epsilon^8}$ samples from $D_\psi$ and runs in time equal to $O(N\cdot n^t)$ such that it outputs a $t$-MRF $D_{\tilde{\psi}}$ with factorization polynomial $\tilde{\psi}$ such that (1) $\norm{\psi-\tilde{\psi}}_1\leq \epsilon^2$, (2) $\kl(D_{\psi},D_{\tilde{\psi}})\leq 2\epsilon^2$, and (3)
         $\tv(D_{\psi},D_{\tilde{\psi}}) \leq {\epsilon}$.
    The algorithm succeeds with probability at least $1-\delta$.
\end{theorem}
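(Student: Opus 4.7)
The plan is to emulate the structure-learning argument of \Cref{thm:structure learning MRFs} but strengthen it in two ways: recover every coefficient of the factorization polynomial (not only the maximal monomials), and then pass from coefficient closeness to closeness in KL and TV distance. The algorithm is: for each $i \in [n]$, use the feature map $X \mapsto (\parity{S}(X))_{|S|\le t-1}$ and invoke Sparsitron on the conditional distribution of $X_i$ given $X_{[n]\setminus\{i\}}$. By \Cref{fact:conditional_MRF} this conditional probability equals $\sigmoid(2\di\psi(X))$, and $\norm{2\di\psi}_1 \le 2\lambda$, so \Cref{thm:sparsitron} (with target squared error $\epsilon'$ and failure probability $\delta/n$) produces a polynomial $p_i$ of degree at most $t-1$ satisfying $\E_{X\sim \dmrf}[(\sigmoid(p_i(X)) - \sigmoid(2\di\psi(X)))^2] \le \epsilon'$. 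A union bound over $i$ makes this hold simultaneously for every $i$.

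The key technical step is converting sigmoid squared loss into $L_\infty$-closeness of \emph{every} coefficient of $p_i$, not just the maximal one. For this I invoke \Cref{lem:polynomial_recovery}, the MRF analogue of \Cref{lem:gaitonde_recovery}, which leverages the $C$-smoothness of $\dmrf$: conditioning on $\cevent$, the distribution is effectively $\exp(-O(C))$-unbiased, and polynomial anticoncentration on $\scevent$ (as in the anticoncentration lemma proved for $C$-smooth MRFs) lets one extract each coefficient $\widehat{p_i}(T)$ within $2^{O(t)}\exp(O(C))(nt)^{O(t)}\sqrt{\epsilon'}$ of $2\widehat{\di\psi}(T)$. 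Setting $\epsilon' = \epsilon^4 \cdot (nt)^{-O(t)} \exp(-O(C))$ makes every such coefficient match to within $\epsilon^2/(2(n+1)^t)$. Aggregate the $p_i$ into a single polynomial $\tilde\psi$ by picking, for each nonempty $S$ with $|S|\le t$, any $i \in S$ and setting $\widehat{\tilde\psi}(S) := \widehat{p_i}(S\setminus\{i\})/2$; this is consistent because $\widehat{\di\psi}(S\setminus\{i\}) = \widehat\psi(S)$ whenever $i \in S$. Summing per-coefficient errors over the at most $(n+1)^t$ monomials gives $\norm{\psi - \tilde\psi}_1 \le \epsilon^2$, establishing claim (1). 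Plugging these parameters into the Sparsitron rate $O(\lambda^2 \log(n^t/\delta\epsilon')/\epsilon'^2)$ yields the stated sample complexity.

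For claims (2) and (3), note that every monomial $\parity{S}(x) \in \{-1,+1\}$, so pointwise $|\psi(x) - \tilde\psi(x)| \le \norm{\psi - \tilde\psi}_1 \le \epsilon^2$. Since $D_\psi(x) = \exp(\psi(x))/Z_\psi$, pointwise closeness of $\psi$ and $\tilde\psi$ gives $|\log Z_\psi - \log Z_{\tilde\psi}| \le \epsilon^2$, hence
\[ \kl(D_\psi, D_{\tilde\psi}) = \E_{X\sim D_\psi}\bigl[\psi(X) - \tilde\psi(X)\bigr] - \log\frac{Z_\psi}{Z_{\tilde\psi}} \le 2\epsilon^2, \]
and Pinsker's inequality yields $\tv(D_\psi, D_{\tilde\psi}) \le \sqrt{\kl/2} \le \epsilon$.

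The main obstacle is the step that invokes \Cref{lem:polynomial_recovery}: extending the Gaitonde--Mossel recovery argument from the Ising case (a single coefficient $A_{ij}$ per pair) to MRFs requires isolating every coefficient of $\di\psi$ from the sigmoid loss simultaneously. One must combine the anticoncentration property of polynomials under a $C$-smooth MRF with an iterative peeling of monomial levels: having recovered the maximal monomials, restrict to coordinates that zero them out and use residual anticoncentration on the smaller-degree pieces, paying a $2^{O(t)}(nt)^{O(t)}$ factor per level while keeping the $\exp(-O(C))$ rather than $\exp(-O(\lambda))$ effective unbiasedness. The price $\epsilon^4$ in the final sample bound appears because the target sigmoid error must be quadratically below the target coefficient error, which itself must be of order $\epsilon^2/n^t$ so that $L_1$ aggregation over all monomials gives $\epsilon^2$.
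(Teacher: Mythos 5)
Your proof follows exactly the same route as the paper's: run Sparsitron nodewise on the monomial feature expansion with target squared error $\epsilon' = \epsilon^4 (nt)^{-O(t)}\exp(-O(C))$, invoke \Cref{lem:polynomial_recovery} to pass from sigmoid loss to $L_1$-closeness of $p_i$ and $2\partial_i\psi$, stitch the coefficients together into $\tilde\psi$ using $\widehat{\partial_i\psi}(S\setminus\{i\}) = \widehat\psi(S)$, and finally convert $\norm{\psi - \tilde\psi}_1 \le \epsilon^2$ into KL/TV bounds via the partition-function cancellation and Pinsker (which the paper packages as \Cref{lem:param_rec_tv_distance}). The only cosmetic difference is that you restate \Cref{lem:polynomial_recovery} as a per-coefficient bound and re-sum over monomials rather than using its $L_1$ conclusion directly, and you correctly carry the factor of $2$ in the aggregation step that the paper's prose elides.
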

\begin{proof}
    After doing a feature expansion, we run the algorithm from \Cref{thm:sparsitron} nodewise to recover polynomials $\{p_i\}_{i\in [n]}$ such that $\E_{X\sim \dmrf}[(\sigmoid(p_i(X))-\sigmoid(2\di\psi(x)))^2]\leq \epsilon'$ for all $i\in [n]$ with $\epsilon'\leq O(\epsilon^4\cdot \exp(-10C)\cdot (1/nt)^{2t+2})$. The sample complexity is  $N=\frac{\lambda^2((nt)^{O(t)}\exp(O(C))\log(n/\delta\epsilon)}{\epsilon^8}$. From \Cref{lem:polynomial_recovery}, we obtain polynomials $\{p_i\}_{i\in [n]}$ such that $\norm{p_i-\di\psi}_1\leq \frac{\epsilon}{n}$ for all $i\in [n]$. Construct a polynomial $\tilde{\psi}$ such that $\widehat{\tilde{\psi}}(S)=\widehat{p_i}(S\setminus\{i\})$ where $i$ is an arbitrary index in $S$. Observe that $\norm{\psi-\tilde{\psi}}_1\leq \epsilon^2$. Now, the theorem follows from \Cref{lem:param_rec_tv_distance}.
\end{proof}

We now argue that a random MRF drawn from the distribution over MRFs defined in \Cref{defn:random_MRF} satisfies the properties required for structure learning and parameter recovery in \Cref{thm:structure learning MRFs,thm:mrf_param_recovery} with appropriate choice of parameters.
\begin{lemma}
\label{lem:random_mrf_prop}
    For a graph $G$ of degree $d$ and $t>0$, let $\Gmrf$ be as defined in \Cref{defn:random_MRF} with Gaussian (or Rademacher) coefficients. Then, we have that with probability at least $1-O(1/n^t)$ over $\dmrf\sim \Gmrf$,
     \begin{enumerate}
         \item $\dmrf$ is (a) $\frac{\beta}{n^{5t/2}}$-identifiable in the Gaussian case, (b) $\frac{\beta}{d^{(t-1)/2}}$-identifiable in the Rademacher case, 
         \item for all $i\in [n]$, $\norm{\di\psi}_1$ is at most (1) $\beta\cdot d^{(t+1)/2}\sqrt{t\log n}$ in the Gaussian case (2) $\beta\cdot d^{(t+1)/2}$ in the Rademacher case, 
         \item $\dmrf$ is $O(\beta^2t+\beta t\sqrt{\log n})$-smooth.
     \end{enumerate} 
\end{lemma}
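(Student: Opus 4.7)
The plan is to handle the three conclusions separately, with essentially all of the real work sitting in (3), where I will verify the hypothesis of Theorem~\ref{thm:subgauss_to_smooth} and read off the smoothness bound. For (1), in the Rademacher case every nonzero coefficient of $\psi$ has magnitude exactly $\beta/(d+1)^{(t-1)/2}=\Omega(\beta/d^{(t-1)/2})$, and each edge of $G$ belongs to a size-$2$ clique in $C_t(G)$, so identifiability holds deterministically. For Gaussian weights I would apply standard Gaussian anti-concentration $\Pr[|g|\leq\alpha]=O(\alpha)$ with $\alpha$ of order $n^{-2t-1/2}$ so that the resulting coefficient threshold is at most $\beta/n^{5t/2}$, and then union-bound over the at most $(n+1)^t$ nonzero coefficients of $\psi$, yielding failure probability $O(1/n^{t+1/2})$.

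For (2), the number of cliques $S\in C_t(G)$ with $i\in S$ equals $\sum_{s=1}^{t}\binom{d}{s-1}\leq (d+1)^{t-1}$, since $S\setminus\{i\}$ must be a clique of size at most $t-1$ on neighbors of $i$. In the Rademacher case, $\norm{\di\psi}_1$ is exactly this count times $\beta/(d+1)^{(t-1)/2}$, giving at most $\beta(d+1)^{(t-1)/2}\leq \beta\cdot d^{(t+1)/2}$. In the Gaussian case a union bound shows $\max_S|g_S|=O(\sqrt{t\log n})$ with probability $1-O(1/n^t)$, producing the stated bound with the additional $\sqrt{t\log n}$ factor.

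The core of the argument is (3). The plan is to first show that $\dpsi_{G,\beta,t}$ has $O(\beta\sqrt{t})$-subgaussian derivatives (in the sense of Definition~\ref{def:subgauss_derivatives}), and then invoke Theorem~\ref{thm:subgauss_to_smooth}. Fix $x,y\in\cube{n}$ with $\dhamming(x,y)\leq t$ and let $D=\{i:x_i\neq y_i\}$; then
\[
\psi(x)-\psi(y)=\frac{\beta}{(d+1)^{(t-1)/2}}\sum_{S\in C_t(G)}\widehat{\psi}(S)\bigl(\parity{S}(x)-\parity{S}(y)\bigr),
\]
where $\parity{S}(x)-\parity{S}(y)\in\{-2,0,2\}$ is nonzero exactly when $|S\cap D|$ is odd. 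Every contributing clique contains at least one vertex of $D$, so by the same counting as in (2) the number of nonzero terms is at most $|D|\cdot(d+1)^{t-1}\leq t\,(d+1)^{t-1}$. Since the coefficients $\widehat{\psi}(S)$ are independent and $O(1)$-subgaussian in both the Gaussian and Rademacher models, $\psi(x)-\psi(y)$ is subgaussian with parameter $O\bigl(\frac{\beta}{(d+1)^{(t-1)/2}}\sqrt{t\,(d+1)^{t-1}}\bigr)=O(\beta\sqrt{t})$. Applying Theorem~\ref{thm:subgauss_to_smooth} with $\lambda=O(\beta\sqrt{t})$ then yields $C$-smoothness with $C=O(\lambda^2+\lambda\sqrt{t\log n})=O(\beta^2 t+\beta t\sqrt{\log n})$ and failure probability $O(1/n^t)$. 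A union bound over the three parts preserves the overall $1-O(1/n^t)$ guarantee.

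The main obstacle is making the subgaussian-parameter estimate in (3) come out to only $O(\beta\sqrt t)$. This works because the normalization $(d+1)^{-(t-1)/2}$ in Definition~\ref{defn:random_MRF} is calibrated to cancel exactly the $(d+1)^{(t-1)/2}$ arising from the clique count, leaving only the $\sqrt{t}$ coming from $|D|\leq t$. The max-degree hypothesis on $G$ is essential here: without a bound on the number of cliques of size at most $t$ through a given vertex, neither this subgaussian estimate nor the width bound in (2) could be controlled.
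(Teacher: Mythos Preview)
Your proposal is correct and follows essentially the same approach as the paper's proof: anticoncentration plus union bound for (1), clique counting and a maximum-coefficient bound for (2), and verifying $O(\beta\sqrt{t})$-subgaussian derivatives via the same clique count and then invoking Theorem~\ref{thm:subgauss_to_smooth} for (3). Your counting of cliques through a vertex (at most $(d+1)^{t-1}$) and the resulting subgaussian parameter $O\bigl(\tfrac{\beta}{(d+1)^{(t-1)/2}}\sqrt{t(d+1)^{t-1}}\bigr)=O(\beta\sqrt{t})$ matches the paper's argument exactly.
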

\begin{proof}
    We begin with the proof of (1). From standard Gaussian anticoncentration, we  have that $|\widehat{\psi}(S)|\geq \frac{\beta}{d^{(t-1)/2}}\cdot\eta$ with probability at least $1-O(\eta)$ for any set $S$ and $\dmrf\sim \Gmrf$. Choosing $\eta=O(1/n^{2t})$ and taking a union bound over all monomials, we obtain that with probability at least $1-O(1/n^t)$, it holds that $|\widehat{\psi}(S)|\geq \frac{\beta}{n^{5t/2}}$. The bound for the Rademacher case is direct as each variable is $\frac{\beta}{d^{(t-1)/2}}\pm 1$. We now prove (2). From standard Gaussian tail bound, we have that $\norm{X}_{\infty}\leq O(\sqrt{\log k})$ with probability at least $1-1/k$ for $X\sim \Gauss(0,1)^k$. Applying this to all monomials, we have that $\norm{\di\psi}_1\leq \beta\cdot d^{(t+1)/2}\sqrt{t\log n}$ for all $i\in [n]$ with probability at least $1-O(1/n^t)$. Finally for (3), we prove the following lemma. 
\begin{lemma}
    For a graph $G$ of degree $d$ and $t>0$, let $\Gmrf$ be as defined in \Cref{defn:random_MRF} with Gaussian (or Rademacher) coefficients. Then, we have that with probability at least $1-\frac{1}{(n+1)^t}$, $\dmrf\sim \Gmrf$ is $O(\beta^2t+\beta t\sqrt{\log n})$-smooth.
\end{lemma}
\begin{proof}
    We prove that $\Gmrf$ has subgaussian derivatives and then use \Cref{thm:subgauss_to_smooth}. Let $\psi$ be the random function corresponding to the factorization polynomial of an MRF sampled from \Cref{thm:subgauss_to_smooth}. Consider any $x,y\in \cube{n}$ with $x$ and $y$ differing in set $S$ and $|S|\leq t$. From \Cref{def:subgauss_derivatives}, it suffices to prove that the random variable $\psi(x)-\psi(y)$ is $\lambda$-subgaussian for appropriate $\lambda$. By the definition of the set $S$, we have that 
    \[
    \psi(x)-\psi(y)=2\sum_{\substack{T\in C_{t}(G)\\|T\cap S|\text{ is odd}}}\widehat{\psi}(T)\cdot \parity{T}(x).
    \]
    Recall that each coefficient of $\psi$ are independent and identically distributed to  $\frac{\beta}{\sqrt{d^t}}Z$ where $Z\sim \Gauss(0,1)$ or $Z\sim \Rademacher$. Note that the number of $T\in C_{t}(G)$ with which $S$ has non zero intersection is at most $t\cdot\sum_{i=0}^{t-1}\binom{d}{i}\leq O(t\cdot d^{t-1})$. 
    Since both $\Gauss(0,1)$ and $\Rademacher$ are $O(1)$-subgaussian, we have that $\psi(x)-\psi(y)$ is the sum of $O(t\cdot d^{t-1})$ iid random variables which are $\frac{2\beta}{d^{(t-1)/2}}$-subgaussian. Since the sum of $k$ $\lambda$-subgaussian variables is $O(\sqrt{k}\lambda)$-subgaussian (Proposition 2.6.1 from \cite{hdp}), we have that $\psi(x)-\psi(y)$ is $O(\beta\sqrt{t})$-subgaussian. Now, applying \Cref{thm:subgauss_to_smooth}, we obtain that with probability at least $1-\frac{1}{(n+1)^t}$, we have that $\dmrf\sim \Gmrf$ is $O(\beta^2t+\beta t\sqrt{\log n})$-smooth.
\end{proof}
Thus, we have proved that $\dpsi_{G,\beta,t}$ satisfies all the conditions that we need for efficient learnability. 
\end{proof}

The following theorems are now immediate from \Cref{thm:structure learning MRFs} and \Cref{lem:param_rec_tv_distance} and \Cref{lem:random_mrf_prop}. The first theorem states the sample complexity and running time for structure learning. 
\begin{theorem}[Structure Learning in Random MRFs]
\label{thm:structure_learning_random_mrf}
     For a graph $G$ of degree $d$ and $t>0$, let $\Gmrf$ be as defined in \Cref{defn:random_MRF}. Then, we have that with probability at least $1-O(1/n^t)$ over $\dmrf\sim \Gmrf$, there exists an algorithm 
  that draws $N=\frac{\exp(O(\beta^2t+\beta t\sqrt{\log n}))\cdot n^{O(t)}\cdot \log(1/\delta\beta)}{\beta^2}$ samples and recovers the graph $G$. The running time is $O(N\cdot n^t)$.

\end{theorem}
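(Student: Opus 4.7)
The plan is to obtain this theorem as a direct corollary by instantiating the generic structure-learning guarantee from \Cref{thm:structure learning MRFs} with the parameters established in \Cref{lem:random_mrf_prop} for random MRFs sampled from $\dpsi_{G,\beta,t}$. Since \Cref{lem:random_mrf_prop} already shows that, with probability at least $1-O(1/n^t)$ over $\dmrf\sim\Gmrf$, the sampled MRF is simultaneously (i) $\eta$-identifiable for $\eta=\beta/n^{5t/2}$ in the Gaussian case (and $\eta=\beta/d^{(t-1)/2}$ in the Rademacher case), (ii) has width $\lambda=O(\beta\cdot d^{(t+1)/2}\sqrt{t\log n})$ at every node, and (iii) is $C$-smooth for $C=O(\beta^2 t+\beta t\sqrt{\log n})$, the hypothesis of \Cref{thm:structure learning MRFs} is satisfied deterministically on this high-probability event.

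Concretely, first I would condition on the good event from \Cref{lem:random_mrf_prop}, which has probability at least $1-O(1/n^t)$, so that $\dmrf$ meets all three deterministic conditions needed for \Cref{thm:structure learning MRFs}. Then I would run the Sparsitron-based structure learning algorithm of \Cref{thm:structure learning MRFs} with failure probability $\delta$, using $\epsilon'=\eta/2=\Theta(\beta/n^{5t/2})$ as the identifiability gap and $C$ and $\lambda$ as above. Plugging these parameter choices into the sample complexity bound $N=O\!\left(\frac{\lambda^2\,2^{O(t)}\exp(O(C))}{\eta^4}\log(n/\delta\eta)\right)$ and simplifying gives
\[
N \;=\; O\!\left(\frac{\beta^2\, d^{t+1}\, t\log n \cdot 2^{O(t)}\exp(O(\beta^2 t+\beta t\sqrt{\log n}))\cdot n^{10t}}{\beta^{4}}\cdot\log(n^{O(t)}/\delta\beta)\right),
\]
which collapses to the claimed bound $\frac{\exp(O(\beta^2 t+\beta t\sqrt{\log n}))\cdot n^{O(t)}\cdot\log(1/\delta\beta)}{\beta^2}$ after absorbing polynomial factors in $n,d,t$ into the $n^{O(t)}$ term (using $d\le n$). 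The running time of $O(N\cdot n^t)$ is inherited directly from \Cref{thm:structure learning MRFs}, which includes the monomial feature expansion and nodewise median evaluations.

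There is no real obstacle here; this is essentially a compilation step. The only minor care points are (a) verifying that the randomness used for the high-probability event over $\dmrf\sim\Gmrf$ (probability $\ge 1-O(1/n^t)$) and the algorithmic randomness (probability $\ge 1-\delta$) are independent and thus combine cleanly by a union bound, and (b) checking that the $\log(n/\delta\eta)$ factor in \Cref{thm:structure learning MRFs}, with our choice $\eta=\beta/n^{5t/2}$, contributes $O(t\log n+\log(1/\delta\beta))$, which is safely absorbed into the claimed $n^{O(t)}\log(1/\delta\beta)$ prefactor. The Rademacher variant follows identically with $\eta=\beta/d^{(t-1)/2}$, giving an even better sample bound, though the statement here focuses on the Gaussian case.
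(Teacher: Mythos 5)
Your proposal is correct and matches the paper's approach exactly: the paper states that \Cref{thm:structure_learning_random_mrf} is immediate from \Cref{thm:structure learning MRFs} and \Cref{lem:random_mrf_prop}, and your compilation step (conditioning on the high-probability event establishing $\eta$-identifiability, bounded width $\lambda$, and $C$-smoothness, then plugging these into the generic sample-complexity bound) is precisely how that inference is carried out. Your arithmetic simplification of $N$, including absorbing $d^{t+1}$, $n^{10t}$, and $2^{O(t)}$ into $n^{O(t)}$ and the net $1/\beta^2$ factor, correctly reproduces the stated bound.
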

     Note that for $\beta\leq O(\sqrt{\log n})$ the above running time is polynomial in $n^{t}$. We now state the theorem of learning in TV distance. 
\begin{theorem}
\label{thm:mrf_tv_recovery}
     For a graph $G$ of degree $d$ and $t>0$, let $\Gmrf$ be as defined in \Cref{defn:random_MRF}. Then, we have that with probability at least $1-O(1/n^t)$ over $\dmrf\sim \Gmrf$, there exists an algorithm that draws $N=\frac{\exp(O(\beta^2t+\beta t\sqrt{\log n}))\cdot n^{O(t)}\cdot \log(1/\delta\epsilon)}{\epsilon^8}$ samples and runs in time $O(N\cdot n^t)$ that outputs a $t$-MRF $D_{\tilde{\psi}}$ such that (1) $\norm{\psi-\tilde{\psi}}_1\leq \epsilon^2$, (2) $\kl(D_{\psi},D_{\tilde{\psi}})\leq 2\epsilon^2$, and (3) $\tv(D_{\psi},D_{\tilde{\psi}})\leq {\epsilon}$.
\end{theorem}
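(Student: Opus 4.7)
The plan is to derive this theorem as a direct corollary of \Cref{thm:mrf_param_recovery} combined with \Cref{lem:random_mrf_prop}, so essentially all of the technical heavy lifting has already been done in the preceding subsections. First I would invoke \Cref{lem:random_mrf_prop}: with probability at least $1 - O(1/n^t)$ over $\dmrf \sim \Gmrf$, the random MRF $\dmrf$ is simultaneously $C$-smooth for $C = O(\beta^2 t + \beta t \sqrt{\log n})$ and has width $\max_i \norm{\di \psi}_1 \leq \lambda$ for $\lambda = O(\beta \cdot d^{(t+1)/2} \sqrt{t \log n})$. Both of these facts are in \Cref{lem:random_mrf_prop}; the smoothness uses \Cref{thm:subgauss_to_smooth} applied to the observation that $\Gmrf$ has $O(\beta \sqrt{t})$-subgaussian derivatives, while the width bound is standard subgaussian tail control on the $O(d^{t-1})$ coefficients that contribute to each $\di \psi$.

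Conditioned on this $1 - O(1/n^t)$ event, I would then apply \Cref{thm:mrf_param_recovery} to $\dmrf$ with these values of $C$ and $\lambda$. That theorem directly produces a polynomial $\tilde{\psi}$ with $\norm{\psi - \tilde{\psi}}_1 \leq \epsilon^2$, $\kl(D_\psi, D_{\tilde{\psi}}) \leq 2\epsilon^2$, and $\tv(D_\psi, D_{\tilde{\psi}}) \leq \epsilon$, so all three conclusions of the statement are inherited verbatim, and the running time $O(N \cdot n^t)$ carries over as well.

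The only remaining task is to verify the arithmetic on the sample complexity. Plugging $C$ and $\lambda$ into the bound $N = \lambda^2 (nt)^{O(t)} \exp(O(C)) \log(n/\delta\epsilon)/\epsilon^8$ from \Cref{thm:mrf_param_recovery}, the polynomial factors $\lambda^2 = O(\beta^2 d^{t+1} t \log n)$ and $(nt)^{O(t)}$ collapse into $n^{O(t)}$ (using $d \leq n$), while $\exp(O(C))$ becomes $\exp(O(\beta^2 t + \beta t \sqrt{\log n}))$. This is exactly the claimed bound. The success probability is the union of the $1 - O(1/n^t)$ event from $\Gmrf$ with the $1 - \delta$ success probability of the learning procedure.

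The main obstacle, which has already been overcome earlier in the paper, is showing that a random $t$-MRF is $C$-smooth with a mild $C$ despite having width $\lambda$ that is polynomially large in $n$. That is precisely the content of \Cref{thm:subgauss_to_smooth}, whose proof proceeds via the moment generating function computation in \Cref{claim:mrf_mgf} and crucially lower-bounds the partition function $Z_\psi$ by the even-parity part $\psi^{-S}$ so as to decouple it from the randomness in $\psi^S$. With that structural result in hand, the present theorem is a packaging step requiring no new ideas.
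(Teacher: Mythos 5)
Your proposal is correct and matches the paper's intended argument: the paper itself dispatches this theorem as an immediate corollary of the MRF parameter-recovery theorem (\Cref{thm:mrf_param_recovery}) together with the properties of random MRFs established in \Cref{lem:random_mrf_prop}, with the sample-complexity arithmetic done exactly as you describe. (The paper's text cites \Cref{thm:structure learning MRFs} rather than \Cref{thm:mrf_param_recovery} in the sentence preceding the theorem, but the latter is clearly the relevant statement for the TV/KL conclusions, and your identification of it is the right reading.)
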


For random MRFs with rademacher weights, we obtain improved bounds. We give an algorithm that draws a sub-polynomial number of samples and recovers the random MRF exactly. 
\begin{theorem}
\label{thm:random_mrf_param_rademacher}
     For a graph $G$ of degree $d$ and $t>0$, let $\Gmrf$ be as defined in \Cref{defn:random_MRF} with rademacher coefficients. Then, we have that with probability at least $1-O(1/n^t)$ over $\dmrf\sim \Gmrf$, there exists an algorithm that draws $N=\exp(O(\beta^2t+\beta t\sqrt{\log n}))\cdot O(td^{t}\log (n/\delta\beta)/\beta^2)$ samples and recovers $\psi$ exactly. The running time is $O(N\cdot n^t)$.
\end{theorem}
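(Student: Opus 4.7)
The approach is to exploit the discreteness of Rademacher coefficients: every coefficient of $\psi$ on a clique $S\in C_t(G)$ equals $\pm\beta/(d+1)^{(t-1)/2}$ while coefficients on non-cliques are zero, so the minimum separation between admissible values is $\beta/(d+1)^{(t-1)/2}$. If each coefficient is recovered to $\ell_{\infty}$ precision strictly below half this gap, rounding produces exact recovery.

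First I would invoke \Cref{lem:random_mrf_prop} to obtain, with probability $1-O(1/n^{t})$ over $\dmrf\sim\Gmrf$, three deterministic properties: (i) $\dmrf$ is $C$-smooth for $C=O(\beta^{2}t+\beta t\sqrt{\log n})$, (ii) $\dmrf$ is $\beta/(d+1)^{(t-1)/2}$-identifiable, and (iii) $\|\di\psi\|_{1}\le\lambda=O(\beta t d^{(t-1)/2})$ for every $i$, since $\di\psi$ has at most $O(d^{t-1})$ nonzero coefficients each of magnitude $\beta/(d+1)^{(t-1)/2}$. Conditioning on this event, I would run the nodewise Sparsitron scheme from the proof of \Cref{thm:mrf_param_recovery}: for each $i\in[n]$, apply \Cref{thm:sparsitron} after feature-expanding to monomials of degree at most $t-1$ to learn the conditional sigmoid $\sigmoid(2\di\psi(x))$ up to squared loss $\epsilon_{s}=\exp(-\Theta(C))\cdot\beta^{2}/d^{t-1}$, and then invoke \Cref{lem:polynomial_recovery} (the MRF analogue of \Cref{lem:gaitonde_recovery}) to extract each coefficient of $\di\psi$ with $\ell_{\infty}$ error below $\beta/(4(d+1)^{(t-1)/2})$.

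The rounding step is then immediate: round each recovered coefficient $\widehat{p_{i}}(S\setminus\{i\})$ to the nearest element of $\{-\beta/(d+1)^{(t-1)/2},0,\beta/(d+1)^{(t-1)/2}\}$ to obtain $\widehat{\psi}(S)$ exactly, and reconcile the $|S|$ different recovered values for a common $S$ (one per $i\in S$) by taking any representative; all must agree on the good event. A union bound over the at most $O(n^{t})$ candidate monomials shows the overall procedure fails with probability at most $\delta$, and the reconstructed polynomial is $\psi$ exactly.

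The main technical obstacle will be pinning down the claimed $O(td^{t}\log(n/\delta\beta)/\beta^{2})$ sample complexity rather than the looser $\exp(O(C))\cdot d^{\Theta(t)}/\beta^{2}$ that a direct substitution of the Sparsitron bound $O(\lambda^{2}\log n/\epsilon_{s}^{2})$ produces. Matching the sharp bound likely requires exploiting that $\di\psi$ is $O(d^{t-1})$-sparse so that an optimized sparse-regression guarantee replaces $\lambda^{2}$ by the much smaller $\|\di\psi\|_{2}^{2}=O(\beta^{2})$; alternatively, one can first run structure learning (\Cref{thm:structure_learning_random_mrf}) to localize the $O(nd^{t-1})$ candidate monomials and then determine each surviving sign by an elementary empirical estimator whose cost scales as $O(d^{t-1}/\beta^{2})$ per test. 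Either route relies on \Cref{thm:subgauss_to_smooth} to upgrade the subgaussian-derivative structure of $\Gmrf$ to $C$-smoothness, which is what eliminates the worst-case $\exp(\Omega(\lambda))$ blow-up and keeps the dependence on $\beta$ favorable even at low temperature.
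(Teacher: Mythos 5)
Your rounding strategy is the right high-level idea, and you correctly identify the $C$-smoothness input from \Cref{lem:random_mrf_prop} and \Cref{thm:subgauss_to_smooth}. But there is a real gap in the middle step, and you misdiagnose where the sample-complexity tension comes from.

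The problem is the use of \Cref{lem:polynomial_recovery} to convert squared loss into per-coefficient error. That lemma controls $\|p-2\di\psi\|_1$ with a factor $(4t)^t\binom{n}{t}\exp(5C)\sqrt{\epsilon}$, so to drive the $\ell_\infty$ error below $\beta/(4(d+1)^{(t-1)/2})$ you would be forced to take $\epsilon_s = n^{-\Theta(t)}\cdot\exp(-\Theta(C))\cdot\beta^2/d^{\Theta(t)}$, which after plugging into the $O(\lambda^2\log(\cdot)/\epsilon_s^2)$ Sparsitron bound gives $N = n^{\Theta(t)}$ samples — polynomial, not the claimed sub-polynomial $O(td^t\log(n/\delta\beta)/\beta^2)$. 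Your chosen $\epsilon_s=\exp(-\Theta(C))\beta^2/d^{t-1}$ is simply too large for \Cref{lem:polynomial_recovery} to yield the needed per-coefficient accuracy. The obstacle is not, as you suggest, a matter of replacing $\lambda^2$ by $\|\di\psi\|_2^2$ or exploiting sparsity in the regression; it is that \Cref{lem:polynomial_recovery} inherently pays a $\binom{n}{t}$ factor, and neither of your proposed workarounds removes it (note also that \Cref{thm:structure_learning_random_mrf}, which you suggest running first, itself uses $n^{O(t)}$ samples in the stated form).

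The paper avoids this via an iterative peeling argument that you are missing. The key observation is that \Cref{claim:rec_max_monomial} (used inside the proof of \Cref{lem:polynomial_recovery}) gives an $n$-\emph{independent} bound $|\widehat{r}(S)|\le \exp(5C+3)2^{t/2}\sqrt{\epsilon}$, but only for \emph{maximal} monomials of the residual $r=p-2\di\psi$. In a single pass, low-degree coefficients of $\di\psi$ are not maximal monomials of $r$, so this tight bound doesn't apply to them. The paper therefore proceeds layer by layer: first run Sparsitron and recover the degree-$t$ coefficients (whose degree-$(t-1)$ counterparts in $\di\psi$ are automatically maximal), round them exactly using the Rademacher grid, then subtract the now-known high-degree part as a known offset function $g^j$ and rerun the modified Sparsitron of \Cref{thm:sparsitron_known_params} to recover the degree-$j$ coefficients, which become maximal monomials of the new residual $p^j_i+g^j_i-2\di\psi$. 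Iterating $j=t,t-1,\dots,1$ with fresh samples yields the factor $t$ in the sample complexity, and every coefficient is recovered via \Cref{claim:rec_max_monomial} rather than \Cref{lem:polynomial_recovery}, which is what keeps $N$ sub-polynomial in $n$. The auxiliary ingredient you omit entirely is \Cref{thm:sparsitron_known_params} — the ability to learn a sigmoid with a known additive offset — without which the peeling scheme cannot be implemented.
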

The proof is in \Cref{proof:random_mrf_param_rademacher}.
\begin{remark}
We remark that when the graph $G$ is known to the learner, the task of parameter recovery becomes  less expensive. When running Sparsitron for the index $i$, it suffices to only search over the polynomials whose variables are neighbours of $i$ in $G$. This is valid as the optimal polynomial $\di\psi$ only contains the neighbouring variables. Thus, the cost of feature expansion is now $d^{t}$ instead of $n^t$. Similarly, the error parameter in \Cref{lem:polynomial_recovery} now scales with $d^{t}$ instead of $n^t$. Thus, the sample complexity in \Cref{thm:mrf_tv_recovery} is now improved to  $N=\frac{\exp(O(\beta^2t+\beta t\sqrt{\log n}))\cdot d^{O(t)}\cdot n^8\log(n/\delta\epsilon)}{\epsilon^8}$ and the running time is now $O(N\cdot nd^t)$. Thus, for bounded $d$, the running time is now polynomial in $n$. 
\end{remark}

\bibliographystyle{alpha}
\bibliography{refs}

\newcommand{\etalchar}[1]{$^{#1}$}
\begin{thebibliography}{ZKKW20}

\bibitem[ABXY24]{adhikari2024spectral}
Arka Adhikari, Christian Brennecke, Changji Xu, and Horng-Tzer Yau.
\newblock Spectral gap estimates for mixed p-spin models at high temperature.
\newblock {\em Probability Theory and Related Fields}, pages 1--29, 2024.

\bibitem[AJ24]{arous2024shattering}
G{\'e}rard~Ben Arous and Aukosh Jagannath.
\newblock Shattering versus metastability in spin glasses.
\newblock {\em Communications on Pure and Applied Mathematics}, 77(1):139--176, 2024.

\bibitem[AJK{\etalchar{+}}24]{anari2024universality}
Nima Anari, Vishesh Jain, Frederic Koehler, Huy~Tuan Pham, and Thuy-Duong Vuong.
\newblock Universality of spectral independence with applications to fast mixing in spin glasses.
\newblock In {\em Proceedings of the 2024 Annual ACM-SIAM Symposium on Discrete Algorithms (SODA)}, pages 5029--5056. SIAM, 2024.

\bibitem[AKN06]{abbeel2006learning}
Pieter Abbeel, Daphne Koller, and Andrew~Y Ng.
\newblock Learning factor graphs in polynomial time and sample complexity.
\newblock {\em The Journal of Machine Learning Research}, 7:1743--1788, 2006.

\bibitem[AMS23]{alaoui2023shattering}
Ahmed~El Alaoui, Andrea Montanari, and Mark Sellke.
\newblock Shattering in pure spherical spin glasses.
\newblock {\em arXiv preprint arXiv:2307.04659}, 2023.

\bibitem[BB19]{bauerschmidt2019very}
Roland Bauerschmidt and Thierry Bodineau.
\newblock A very simple proof of the lsi for high temperature spin systems.
\newblock {\em Journal of Functional Analysis}, 276(8):2582--2588, 2019.

\bibitem[BGPV21]{bhattacharyya2021near}
Arnab Bhattacharyya, Sutanu Gayen, Eric Price, and NV~Vinodchandran.
\newblock Near-optimal learning of tree-structured distributions by chow-liu.
\newblock In {\em Proceedings of the 53rd annual acm SIGACT symposium on theory of computing}, pages 147--160, 2021.

\bibitem[BMS08]{bresler2008reconstruction}
Guy Bresler, Elchanan Mossel, and Allan Sly.
\newblock Reconstruction of markov random fields from samples: Some observations and algorithms.
\newblock In {\em International Workshop on Approximation Algorithms for Combinatorial Optimization}, pages 343--356. Springer, 2008.

\bibitem[Bre15]{Bresler15}
Guy Bresler.
\newblock Efficiently learning ising models on arbitrary graphs.
\newblock In Rocco~A. Servedio and Ronitt Rubinfeld, editors, {\em Proceedings of the Forty-Seventh Annual {ACM} on Symposium on Theory of Computing, {STOC} 2015, Portland, OR, USA, June 14-17, 2015}, pages 771--782. {ACM}, 2015.

\bibitem[BRO17]{bachschmid2017statistical}
Ludovica Bachschmid-Romano and Manfred Opper.
\newblock A statistical physics approach to learning curves for the inverse ising problem.
\newblock {\em Journal of Statistical Mechanics: Theory and Experiment}, 2017(6):063406, 2017.

\bibitem[BSXY24]{brennecke2024two}
Christian Brennecke, Adrien Schertzer, Changji Xu, and Horng-Tzer Yau.
\newblock The two point function of the sk model without external field at high temperature.
\newblock {\em Probab. Math. Phys}, 5:131--175, 2024.

\bibitem[BXY23]{brennecke2023operator}
Christian Brennecke, Changji Xu, and Horng-Tzer Yau.
\newblock Operator norm bounds on the correlation matrix of the sk model at high temperature.
\newblock {\em arXiv preprint arXiv:2307.12535}, 2023.

\bibitem[CH71]{hammersley-clifford}
P~Clifford and JM~Hammersley.
\newblock Markov fields on finite graphs and lattices.
\newblock 1971.

\bibitem[CL68]{chow1968approximating}
CKCN Chow and Cong Liu.
\newblock Approximating discrete probability distributions with dependence trees.
\newblock {\em IEEE transactions on Information Theory}, 14(3):462--467, 1968.

\bibitem[DDDK21]{dagan2021learning}
Yuval Dagan, Constantinos Daskalakis, Nishanth Dikkala, and Anthimos~Vardis Kandiros.
\newblock Learning ising models from one or multiple samples.
\newblock In {\em Proceedings of the 53rd Annual ACM SIGACT Symposium on Theory of Computing}, pages 161--168, 2021.

\bibitem[DKSS21]{diakonikolas2021outlier}
Ilias Diakonikolas, Daniel~M Kane, Alistair Stewart, and Yuxin Sun.
\newblock Outlier-robust learning of ising models under dobrushin’s condition.
\newblock In {\em Conference on Learning Theory}, pages 1645--1682. PMLR, 2021.

\bibitem[EA75]{edwards1975theory}
Samuel~Frederick Edwards and Phil~W Anderson.
\newblock Theory of spin glasses.
\newblock {\em Journal of Physics F: Metal Physics}, 5(5):965, 1975.

\bibitem[EAG24]{el2024bounds}
Ahmed El~Alaoui and Jason Gaitonde.
\newblock Bounds on the covariance matrix of the sherrington--kirkpatrick model.
\newblock {\em Electronic Communications in Probability}, 29:1--13, 2024.

\bibitem[EKZ22]{eldan2022spectral}
Ronen Eldan, Frederic Koehler, and Ofer Zeitouni.
\newblock A spectral condition for spectral gap: fast mixing in high-temperature ising models.
\newblock {\em Probability theory and related fields}, 182(3):1035--1051, 2022.

\bibitem[GJK23]{gamarnik2023shattering}
David Gamarnik, Aukosh Jagannath, and Eren~C K{\i}z{\i}lda{\u{g}}.
\newblock Shattering in the ising pure $ p $-spin model.
\newblock {\em arXiv preprint arXiv:2307.07461}, 2023.

\bibitem[GKK19]{goel2019learning}
Surbhi Goel, Daniel~M Kane, and Adam~R Klivans.
\newblock Learning ising models with independent failures.
\newblock In {\em Conference on Learning Theory}, pages 1449--1469. PMLR, 2019.

\bibitem[GM24]{gaitonde2024unified}
Jason Gaitonde and Elchanan Mossel.
\newblock A unified approach to learning ising models: Beyond independence and bounded width.
\newblock In {\em Proceedings of the 56th Annual ACM Symposium on Theory of Computing}, pages 503--514, 2024.

\bibitem[GMM24]{gaitonde2024efficiently}
Jason Gaitonde, Ankur Moitra, and Elchanan Mossel.
\newblock Efficiently learning markov random fields from dynamics.
\newblock {\em arXiv preprint arXiv:2409.05284}, 2024.

\bibitem[HKM17]{hamilton2017information}
Linus Hamilton, Frederic Koehler, and Ankur Moitra.
\newblock Information theoretic properties of markov random fields, and their algorithmic applications.
\newblock {\em Advances in Neural Information Processing Systems}, 30, 2017.

\bibitem[KKSK11]{kakade2011efficient}
Sham~M Kakade, Varun Kanade, Ohad Shamir, and Adam Kalai.
\newblock Efficient learning of generalized linear and single index models with isotonic regression.
\newblock {\em Advances in Neural Information Processing Systems}, 24, 2011.

\bibitem[KM17]{sparsitron}
Adam~R. Klivans and Raghu Meka.
\newblock Learning graphical models using multiplicative weights.
\newblock {\em 2017 IEEE 58th Annual Symposium on Foundations of Computer Science (FOCS)}, pages 343--354, 2017.

\bibitem[MM09a]{mezard2009information}
Marc Mezard and Andrea Montanari.
\newblock {\em Information, physics, and computation}.
\newblock Oxford University Press, 2009.

\bibitem[MM09b]{mezard2009constraint}
Marc M{\'e}zard and Thierry Mora.
\newblock Constraint satisfaction problems and neural networks: A statistical physics perspective.
\newblock {\em Journal of Physiology-Paris}, 103(1-2):107--113, 2009.

\bibitem[MMS21]{moitra2021learning}
Ankur Moitra, Elchanan Mossel, and Colin~P Sandon.
\newblock Learning to sample from censored markov random fields.
\newblock In {\em Conference on Learning Theory}, pages 3419--3451. PMLR, 2021.

\bibitem[MPV87]{mezard1987spin}
Marc M{\'e}zard, Giorgio Parisi, and Miguel~Angel Virasoro.
\newblock {\em Spin glass theory and beyond: An Introduction to the Replica Method and Its Applications}, volume~9.
\newblock World Scientific Publishing Company, 1987.

\bibitem[NBSS10]{netrapalli2010greedy}
Praneeth Netrapalli, Siddhartha Banerjee, Sujay Sanghavi, and Sanjay Shakkottai.
\newblock Greedy learning of markov network structure.
\newblock In {\em 2010 48th Annual Allerton Conference on Communication, Control, and Computing (Allerton)}, pages 1295--1302. IEEE, 2010.

\bibitem[Pan13]{panchenko2013sherrington}
Dmitry Panchenko.
\newblock {\em The sherrington-kirkpatrick model}.
\newblock Springer Science \& Business Media, 2013.

\bibitem[PSBR20]{prasad2020learning}
Adarsh Prasad, Vishwak Srinivasan, Sivaraman Balakrishnan, and Pradeep Ravikumar.
\newblock On learning ising models under huber's contamination model.
\newblock {\em Advances in neural information processing systems}, 33:16327--16338, 2020.

\bibitem[SW12]{santhanam2012information}
Narayana~P Santhanam and Martin~J Wainwright.
\newblock Information-theoretic limits of selecting binary graphical models in high dimensions.
\newblock {\em IEEE Transactions on Information Theory}, 58(7):4117--4134, 2012.

\bibitem[Tal03]{talagrand2003spin}
Michel Talagrand.
\newblock {\em Spin glasses: a challenge for mathematicians: cavity and mean field models}, volume~46.
\newblock Springer Science \& Business Media, 2003.

\bibitem[Tal10]{talagrand2010mean}
Michel Talagrand.
\newblock {\em Mean field models for spin glasses: Volume I: Basic examples}, volume~54.
\newblock Springer Science \& Business Media, 2010.

\bibitem[TR14]{tandon2014learning}
Rashish Tandon and Pradeep Ravikumar.
\newblock Learning graphs with a few hubs.
\newblock In {\em International conference on machine learning}, pages 602--610. PMLR, 2014.

\bibitem[Ver18]{hdp}
R.~Vershynin.
\newblock {\em High-Dimensional Probability: An Introduction with Applications in Data Science}.
\newblock Cambridge Series in Statistical and Probabilistic Mathematics. Cambridge University Press, 2018.

\bibitem[VML22]{vuffray2022efficient}
Marc Vuffray, Sidhant Misra, and Andrey~Y Lokhov.
\newblock Efficient learning of discrete graphical models.
\newblock {\em Journal of Statistical Mechanics: Theory and Experiment}, 2021(12):124017, 2022.

\bibitem[VMLC16]{VuffrayMLC16}
Marc Vuffray, Sidhant Misra, Andrey~Y. Lokhov, and Michael Chertkov.
\newblock Interaction screening: Efficient and sample-optimal learning of ising models.
\newblock In Daniel~D. Lee, Masashi Sugiyama, Ulrike von Luxburg, Isabelle Guyon, and Roman Garnett, editors, {\em Advances in Neural Information Processing Systems 29: Annual Conference on Neural Information Processing Systems 2016, December 5-10, 2016, Barcelona, Spain}, pages 2595--2603, 2016.

\bibitem[WLR06]{wainwright2006high}
Martin~J Wainwright, John Lafferty, and Pradeep Ravikumar.
\newblock High-dimensional graphical model selection using $\ell_1 $-regularized logistic regression.
\newblock {\em Advances in neural information processing systems}, 19, 2006.

\bibitem[WSD19]{wu2019sparse}
Shanshan Wu, Sujay Sanghavi, and Alexandros~G Dimakis.
\newblock Sparse logistic regression learns all discrete pairwise graphical models.
\newblock {\em Advances in Neural Information Processing Systems}, 32, 2019.

\bibitem[ZKKW20]{zhang2020privately}
Huanyu Zhang, Gautam Kamath, Janardhan Kulkarni, and Steven Wu.
\newblock Privately learning markov random fields.
\newblock In {\em International conference on machine learning}, pages 11129--11140. PMLR, 2020.

\end{thebibliography}
\newpage
\appendix
%\section{Omitted Proofs}
\section{Lemmas on Polynomial Recovery}
This section has some useful results on polynomial recovery in \csmooth MRFs. The proofs of these claims follow the steps of the analogous claims in Section 6 of \cite{sparsitron}, with appropriate changes to handle \csmooth distributions as opposed to the unbiased ones considered in \cite{sparsitron}. The property of unbiasedness is a worst case property, but we extend their proofs to work under the average case properties of \csmooth MRFs.

First, we prove that for any polynomial $p$ with maximal monomial $S$ and any $x$, there exists a vector $y$ at a Hamming distance at $|S|$ from $x$ such that $|p(x)|\geq |\widehat{p}(S)|$.
\begin{lemma}
    \label{lem:anticonc_dist}
    Let $p$ be a polynomial on $\cube{n}$ and let $S$ be a maximal monomial of $p$. For any $x\in\cube{n}$, there exists a vector $y\in \cube{n}$ with $\dhamming(x,y)\leq |S|$ such that $|p(y)|\geq |\widehat{p}(S)|$.
\end{lemma}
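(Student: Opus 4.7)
My plan is to prove the lemma via a Fourier-analytic averaging argument restricted to the coordinates in $S$. Fix $x\in\cube{n}$ and consider the set of $y\in\cube{n}$ that agree with $x$ on $[n]\setminus S$ while $y_S$ ranges over $\cube{|S|}$. Every such $y$ satisfies $\dhamming(x,y)\leq |S|$ automatically, so it suffices to find one $y$ in this set with $|p(y)|\geq|\widehat{p}(S)|$.

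The key trick is to consider the twisted function $\parity{S}(y)\cdot p(y)$ and average it over $y_S\sim\cube{|S|}$ (uniform), with $y_{[n]\setminus S}=x_{[n]\setminus S}$ held fixed. Expanding in the Fourier basis,
\[
\parity{S}(y)\cdot p(y)=\sum_{T\subseteq[n]}\widehat{p}(T)\,\parity{T\triangle S}(y),
\]
and the expectation over $y_S$ of $\parity{T\triangle S}(y)$ vanishes unless $(T\triangle S)\cap S=\emptyset$, i.e., unless $S\subseteq T$. In that case $\parity{T\triangle S}(y)=\parity{T\setminus S}(x)$ depends only on the frozen coordinates. Hence
\[
\E_{y_S}\bigl[\parity{S}(y)\,p(y)\bigr]=\sum_{T\supseteq S}\widehat{p}(T)\,\parity{T\setminus S}(x).
\]

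Now I invoke the maximality of $S$: by hypothesis $\widehat{p}(T)=0$ for every $T\supsetneq S$, so only the term $T=S$ survives and the expectation equals $\widehat{p}(S)$ exactly. Since $|\parity{S}(y)|=1$, taking absolute values gives
\[
\E_{y_S}\bigl[|p(y)|\bigr]\ \geq\ \Bigl|\E_{y_S}\bigl[\parity{S}(y)\,p(y)\bigr]\Bigr|\ =\ |\widehat{p}(S)|,
\]
so by the averaging (pigeonhole) principle there must exist some $y_S\in\cube{|S|}$ for which $|p(y)|\geq|\widehat{p}(S)|$; the corresponding $y$ lies within Hamming distance $|S|$ of $x$ by construction.

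There is no real obstacle here; the only subtle point is verifying which Fourier coefficients survive after multiplication by $\parity{S}$ and restriction of the randomness to coordinates in $S$, and then correctly invoking maximality to kill all terms except $T=S$. The argument is a direct analogue of the standard fact that the Fourier coefficient of a leading monomial can be read off by a suitable average, adapted to the partial average over $S$-coordinates while freezing the rest to match $x$.
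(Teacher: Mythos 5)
Your proof is correct, and it takes a genuinely different route from the paper. The paper proves the lemma by induction on $|S|$: it peels off one coordinate $i\in S$ at a time, writes $p(x)=x_i\,\partial_i p(x)+p^{-i}(x)$, applies the inductive hypothesis to $\partial_i p$ (for which $S\setminus\{i\}$ is still maximal with the same coefficient), and then greedily chooses the sign of $y_i$ so the two terms reinforce. You instead fix $y_{[n]\setminus S}=x_{[n]\setminus S}$, average the twisted function $\parity{S}(y)\,p(y)$ over $y_S\sim\cube{|S|}$, observe that only coefficients with $T\supseteq S$ survive and maximality kills all but $T=S$, and conclude by pigeonhole from $\E_{y_S}[|p(y)|]\geq |\widehat p(S)|$. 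Both arguments ultimately examine the same $2^{|S|}$ candidates; the paper's version is constructive (a greedy sign choice coordinate by coordinate) and closely tracks the derivative structure that is used elsewhere in that section, while your Fourier-averaging version is non-inductive, shorter, and in fact proves the slightly stronger statement that the average of $|p(y)|$ over the sub-cube already exceeds $|\widehat p(S)|$, not merely that some point does. Either would slot into the rest of the argument without changes.
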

\begin{proof}
    We prove the lemma by induction on $|S|$. We first consider the base case $|S|=1$. Let $S=\{i\}$. We have that 
   \[p(x)=\widehat{p}\left(\{i\}\right)x_i+p^{-i}(x)\] where $p^{-i}(x)$ does not depend on $x_i$. Choosing $y$ such that $y_i=\sign\left(\widehat{p}\left({\{1\}}\right)\cdot p^{-1}(x)\right)$ and $y_j=x_j$ for $j\neq i$, we have that $|p(y)|\geq |\widehat{p}\left(\{1\}\right)|$. Clearly, $\dhamming(x,y)\leq 1$. 

    Say the claim is true for $|S|-1$. Consider a maximal monomial $S$ with $i\in S$ for some index $i$. We have that $p(x)=x_i\cdot \partial_{i}p(x)+p^{-i}(x)$ where $p^{-i}(x)$ and $\di p$ do not depend on $x_i$. Observe that $S\setminus\{i\}$ is a maximal monomial for $\di p$ with coefficient equal to $\widehat{p}(S)$. Thus, we have that there exists a vector $z$ with $\dhamming(x,z)\leq |S|-1$ and $z_i=x_i$ such that $|\di p(z)|\geq |\widehat{p}(S)|$. Construct $y$ such that $y_i=\sign(\di p(z)\cdot p^{-1}(z))$ and $y_j=z_j$ for $j\neq i$. Clearly, we have that $|p(y)|\geq |\di p(z)|\geq |\widehat{p}(S)|$ we also have that $\dhamming(x,y)\leq |S|$ as $y$ and $z$ differ in at most one coordinate. This completes the proof. 
\end{proof}

The following lemma proves that for for any $p$ such that $\E_{X\sim \dmrf}[\left(\sigma(p(X))-\sigma(2\di \psi(X))\right)^2]$ is small, it is possible to estimate the coefficient $\widehat{\di\psi}(S)$ using $p$ for any maximal monomial $S$.
\begin{lemma}
\label{lem:struct_recovery_polynomial}
    For $C>0$, let $\dmrf$ be a \csmooth $t$-MRF. Let $p$ be a polynomial and $i\in [n]$ such that
$\E_{X\sim \dmrf}[\left(\sigma(p(X))-\sigma(2\di \psi(X))\right)^2]\leq \epsilon$ for $\epsilon>0$. Then, for any maximal monomial $S$ of $\di\psi$ of size at most $t-1$, we have that 
\[
  \Pr_{X\sim \dmrf}\left[|\widehat{2\di\psi}({S})-\partial_{S}p(X)|>\delta|\right]\leq \frac{2^{t-1}\exp(10C+6)\cdot \epsilon}{\delta^2}  +\frac{1}{8}  
\]
\end{lemma}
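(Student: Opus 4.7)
The plan is to adapt the proof of Lemma~6.2 in \cite{sparsitron}, replacing their worst-case $\delta$-unbiasedness hypothesis with the average-case $C$-smoothness of $\dmrf$. The first step is to convert the sigmoid squared loss bound into a clipped polynomial squared loss bound. On $\cevent$, specializing \Cref{defn:csmooth} to the single-coordinate flip $y = x^{\{i\}}$ forces $|2\di\psi(X)| \leq C$, so \Cref{fact:antilipschitz_sigmoid} applied with $a = 2\di\psi(X)$ gives, writing $q = p - 2\di\psi$,
\[
\bigl(\sigma(p(X)) - \sigma(2\di\psi(X))\bigr)^{2} \;\geq\; \exp(-2C-6)\,\min\bigl(1,\,q(X)^{2}\bigr) \quad\text{on } \cevent.
\]
Taking expectations and applying Markov then yields $\Pr[|q(X)|>\delta \wedge X\in \cevent]\leq \exp(2C+6)\,\epsilon/\delta^{2}$ for $\delta\in(0,1]$.

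Next, because $S$ is a maximal monomial of $\di\psi$, only the $T=S$ term survives in $\partial_S(2\di\psi)$, so $\partial_S(2\di\psi)(X) = \widehat{2\di\psi}(S)\,\chi_S(X)$. Thus the absolute value we want to control equals $|\chi_S(X)\,\partial_S q(X)|$, and a short Fourier computation rewrites this as an average over flips in the $S$-coordinates:
\[
\chi_S(X)\,\partial_S q(X) \;=\; 2^{-|S|}\sum_{y_S\in\{\pm1\}^{|S|}} \chi_S(y_S)\,q\bigl(y_S,\,X_{[n]\setminus S}\bigr).
\]
In particular, the event $\{|\chi_S(X)\,\partial_S q(X)|>\delta\}$ is contained in the union $\bigcup_{y_S}\{|q(y_S,X_{[n]\setminus S})|>\delta\}$, setting up a union bound over the $2^{|S|}\leq 2^{t-1}$ choices of $y_S$.

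The heart of the argument is a change of measure powered by $C$-smoothness. Introduce the $S$-invariant event $\scevent$ on $X_{[n]\setminus S}$ defined by $|\psi(y_S,X_{[n]\setminus S})-\psi(y'_S,X_{[n]\setminus S})|\leq 2C$ for every pair $y_S,y'_S\in\{\pm1\}^{|S|}$. Because $|S|\leq t-1\leq t$, the triangle inequality gives $\{X\in\cevent\}\subseteq\{X_{[n]\setminus S}\in\scevent\}$, so $\Pr[X_{[n]\setminus S}\in\scevent]\geq 7/8$. On $\scevent$, the conditional distribution of $X_S$ given $X_{[n]\setminus S}$ is within a factor $\exp(2C)$ of uniform on $\{\pm1\}^{|S|}$, yielding the pointwise reweighting
\[
\Pr[X_{[n]\setminus S}=w] \;\leq\; 2^{|S|}\exp(2C)\cdot\Pr[X=(y_S,w)] \quad\text{for every fixed } y_S \text{ and } w\in\scevent.
\]
Applying this bound to the shifted event $\{|q(y_S,X_{[n]\setminus S})|>\delta\}$, summing over $y_S$, and invoking the Step~1 estimate together with $\Pr[X\notin\cevent]\leq 1/8$ produces the claimed bound $2^{t-1}\exp(10C+6)\,\epsilon/\delta^{2}+1/8$, where the constant $10C$ absorbs the $\exp(2C+6)$ from Step~1, the $\exp(2C)$ from reweighting, and the slack needed to dominate the residual $\Pr[X\notin\cevent]$ term.

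The main obstacle is precisely this mismatch: $\cevent$ is an event on the unshifted variable $X$, but the estimator evaluates $q$ at shifted points $(y_S,X_{[n]\setminus S})$. The $S$-invariant event $\scevent$ is engineered to bridge this gap, and the reweighting factor $2^{|S|}\exp(2C)$ plays exactly the role of the $\delta^{-|S|}$ unbiasedness factor from Sparsitron's worst-case argument. This substitution — an $S$-local average-case anticoncentration in place of global worst-case unbiasedness — is what lets the proof go through without the $\exp(-\Omega(\lambda))$ unbiasedness of \cite{sparsitron}.
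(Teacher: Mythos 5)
Your high-level route matches the paper's: both convert the sigmoid loss to a polynomial loss via anti-Lipschitzness on a constant-mass event, and both replace unbiasedness with a change-of-measure argument localized to the $S$-coordinates. Your Fourier averaging identity $\chi_S(X)\partial_S q(X) = 2^{-|S|}\sum_{y_S}\chi_S(y_S)q(y_S,X_{[n]\setminus S})$ is a clean and genuinely different substitute for the paper's \Cref{lem:anticonc_dist} (which existentially produces a nearby flip $y$ with $|p(y)|\geq|\widehat{p}(S)|$ by induction on $|S|$); this part of your argument is correct.

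However, there is a real gap where Step~1 and Step~4 meet. Your Step~1 bounds $\Pr[|q(X)|>\delta,\ X\in\cevent]$, since the anti-Lipschitz inequality requires $|2\di\psi(X)|\leq C$ and you only establish this on $\cevent$. Your Step~4 change of measure on $\scevent$ then reduces the target probability to $2^{|S|}\exp(2C)\cdot\Pr[|q(X)|>\delta,\ X_{[n]\setminus S}\in\scevent]$. But $\{X_{[n]\setminus S}\in\scevent\}\supsetneq\cevent$, so decomposing gives a residual term $2^{|S|}\exp(2C)\cdot\Pr[X\notin\cevent,\ X_{[n]\setminus S}\in\scevent]$, which can be as large as $2^{t-1}\exp(2C)/8$; this does not shrink with $\epsilon$ and is not bounded by $1/8$, so it cannot be ``absorbed'' into the stated bound. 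The deeper issue is that your $\scevent$, defined only by bounding $\psi$-variation over $S$-flips, does \emph{not} control $|2\di\psi(y)|=|\psi(y)-\psi(y^{\{i\}})|$ because $i\notin S$ (it is the node whose derivative we are learning, and $S$ is a monomial of $\di\psi$). Hence you cannot run the anti-Lipschitz step directly on $\{X_{[n]\setminus S}\in\scevent\}$ with your definition. The paper closes exactly this gap by defining $\scevent$ as the \emph{image} of $\cevent$ under the projection to $X_{[n]\setminus S}$, and proving \Cref{claim: scevent_bound_factorization}, which shows $|\di\psi(y)|\leq 2C$ whenever $y_{[n]\setminus S}\in\scevent$ by triangle-inequality hopping through the witness $x\in\cevent$. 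With that stronger definition, the anti-Lipschitz step and the change of measure live on the same event and the residual disappears. To fix your proof, replace your $\scevent$ with the image-of-$\cevent$ definition, prove the analogue of \Cref{claim: scevent_bound_factorization}, and run the anti-Lipschitz conversion on $\{X_{[n]\setminus S}\in\scevent\}$ rather than on $\cevent$; your Fourier-averaging step can then be retained verbatim.
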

\begin{proof}
    The proof follows the same structure as the proof of Lemma~{6.2} in \cite{sparsitron}. We make appropriate changes to account for the fact that we do not have unbiasedness anymore.  
    Recall the definition of $\cevent$ from \Cref{defn:csmooth}. We define the set \[\scevent\coloneq \{y\mid \text{ there exists $x\in \cevent$ with $x_{[n]\setminus S}=y$}\}.\]
    We first prove the following claim that $|\di\psi(x)|\leq 2C$ for $x\in \cube{n}$ with $x_{[n]\setminus S}\in \scevent$.
    \begin{claim}
    \label{claim: scevent_bound_factorization}
        For $y\in \cube{n}$ 
        with $y_{[n]\setminus S}\in \scevent$, it holds that $|\di\psi(y)|\leq 2C$.
    \end{claim}
\begin{proof}
    Since $y_{[n]\setminus S}\in \scevent$, we have that $y_{[n]\setminus S}=x_{[n]\setminus S}$ for $x\in \cevent$.  
    For $x\in \cevent$, we have that 
    \[
        |\di\psi(x)|=\frac{1}{2}|\psi^{\{i\}}(x)|\leq \frac{C}{2}
    \] from the fact that $\psi^{\{i\}}(x)=2x_i\cdot \di\psi(x)$. Thus, we have that 
    \begin{align*}
        |\di\psi(x)-\di\psi(y)|&=\frac{1}{2}|x_i\psi^{\{i\}}(x)-y_i\psi^{\{i\}}(y)|
        \leq \frac{1}{2}|\psi(x)-\psi(x^{\{i\}})|+\frac{1}{2}|\psi(y)-\psi(y^{\{i\}})|\\
        &\leq \frac{1}{2}\left(|\psi(x)-\psi(x^{\{i\}})|+|\psi(y)-\psi(x)|+|\psi(y^{\{i\}})-\psi(x)|\right)
        \leq \frac{3C}{2}.
    \end{align*} 
    The first inequality follows from the definition of $\psi^{\{i\}}$ and a triangle inequality . The second inequality again follows from a triangle inequality. The last inequality follows from the facts that $x\in \cevent$ and $\max\left(\dhamming(x,x^{\{i\}}),\dhamming(x,y),\dhamming(x,y^{\{i\}})\right)\leq t$. Thus, we obtain that $|\di\psi(y)|\leq 2C$.
\end{proof}
From \Cref{fact:antilipschitz_sigmoid}, we have that for any $y$ with $y_{[n]\setminus S}\in \scevent$,
\[
|\sigma(p(y))-\sigma(2\di \psi(y))|\geq \exp(-4C-3)\cdot\min(1,|p(X)-2\di \psi(X)|)   . 
\] since  $|\di\psi(y)|\leq 2C$ for all $y$ with $y_{[n]\setminus S}\in \scevent$. Squaring and taking expectation, we obtain 
\begin{align*}
    \E_{X\sim \dmrf}&\left[\min(1,\diffpoly^2)\cdot \1\{X_{[n]\setminus S}\in \scevent\}\right]\\
    &\leq \exp(8C+6) \E_{X\sim \dmrf}\left[\diffsigmoid\cdot \1\{X_{[n]\setminus  S}\in \scevent\}\right]\leq \exp(8C+6)\cdot\epsilon
\end{align*}
From Markov's inequality, for any $0<\delta<1$, we have that 
\begin{align*}
    \Pr_{X\sim \dmrf}\left[\diffpoly\geq \delta, X_{[n]\setminus S}\in \scevent\right]\leq \frac{\exp(8C+6)\cdot \epsilon}{\delta^2}.
\end{align*}
Similar to the proof of Lemma~{6.2} in \cite{sparsitron}, for any fixing $z$ of variables not in $S$, let $r_z(x_S)$ be the polynomial obtained from $p(x)-2\di\psi(x)$ by fixing the variables outside $S$ to $z$. Note that $\widehat{r_z}(S)=\widehat{2\di\psi}(S)-\partial_Sp(X)$ where $X_{[n]\setminus S}=z$ as $S$ is maximal in $\di\psi$. 

For any $x_{[n]\setminus S}\in \scevent$ with $x_{[n]\setminus S}=z$, \Cref{lem:anticonc_dist} implies that there exists $y$ with $\dhamming(x,y)\leq t-1$ and $y_{[n]\setminus S}=x_{[n]\setminus S}$ such that $|r_z(y_S)|\geq |\widehat{r_z}(S)|$. Let $w\in \cube{n}$ such that $\Pr_{X\sim \dmrf}\left[X=w\mid X_{[n]\setminus s}=z\right] \geq 2^{-t+1}$. There always exists one such vector as it is fixed in all but $t-1$ indices. We have that for $z\in \scevent$,
\begin{align}
\label{eqn: polynomial_anticonc}
    \Pr_{X\sim \dmrf}\left[|r_{z}(X_S)|\geq |\widehat{r_z}(S)|,\mid X_{[n]\setminus S}=z\right]&=
    \Pr_{X\sim \dmrf}\left[X=y\mid X_{[n]\setminus S}=z\right]\nonumber\\
    &=\frac{\Pr_{X\sim \dmrf}[X=y]}{\Pr_{X\sim \dmrf}[X=w] }\cdot \frac{\Pr_{X\sim \dmrf}\left[X=w\right]}{\Pr_{X\sim \dmrf}\left[X_{[n]\setminus s}=z\right]}\nonumber\\
    &\geq \exp(\psi(y)-\psi(w))\cdot 2^{-t+1}
    \geq \exp(-2C)\cdot 2^{-t+1}
\end{align}
where the first equality follows from the definition of conditional probability. The first inequality follows from the definition of $w$ and the last inequality follows from the definition of $\scevent$.

We now have that 
\begin{align*}
   &\Pr_{X\sim \dmrf}\left[\diffpoly\geq \delta,X_{[n]\setminus S}\in \scevent\right]\\
    &\geq\sum_{z\in \scevent}\Pr_{X\sim \dmrf}\left[|p(X)-2\di\psi(X)|\geq \delta\mid |\widehat{2\di\psi}(S)-\partial_Sp(X)|\geq \delta, X_{[n]\setminus S}=z\right] \\
    &\cdot \Pr_{X\sim \dmrf}\left[|\widehat{2\di\psi}(S)-\partial_Sp(X)|\geq \delta, X_{[n]\setminus S}=z\right]\\
    &\geq \sum_{z\in \scevent}\exp(-2C)\cdot 2^{-t+1}\cdot \Pr_{X\sim \dmrf}\left[|\widehat{2\di\psi}(S)-\partial_Sp(X)|\geq \delta, X_{[n]\setminus S}=z\right]\\
    &\geq \exp(-2C)\cdot 2^{-t+1}\Pr_{X\sim \dmrf}\left[|\widehat{2\di\psi}(S)-\partial_Sp(X)|\geq \delta, X_{[n]\setminus S}\in \scevent\right]
\end{align*}

Putting things together, we obtain that 
\[
    \Pr_{X\sim \dmrf}\left[|\widehat{2\di\psi}(S)-\partial_Sp(X)|\geq \delta, X_{[n]\setminus S}\in \scevent\right]\leq \frac{2^{t-1}\exp(10C+6)\cdot \epsilon}{\delta^2}    
\]

Now, we have that 
\[
    \Pr_{X\sim \dmrf}\left[|\widehat{2\di\psi}(S)-\partial_Sp(X)|\geq \delta\right]\leq \Pr_{X\sim \dmrf}\left[|\widehat{2\di\psi}(S)-\partial_Sp(X)|\geq \delta, X_{[n]\setminus S}\in \scevent\right]+\frac{1}{8}    
\] where we used the fact that $\Pr_{X\sim \dmrf}[X_{[n]\setminus S}\in \scevent]\geq \Pr_{X\sim \dmrf}[X\in \cevent]\geq \frac{7}{8}$. Combining the two equations above completes the proof.
\end{proof}

The following lemma asserts that for a \csmooth MRF, the error $\norm{p-2\di \psi}_1\leq O\left(e^{O(C)}\binom{n}{t}\epsilon\right)$ whenever $\E_{X\sim \dmrf}\left[(\sigmoid(p(X))-\sigmoid(2\di\psi(X)))^2\right]\leq \epsilon$. This lemma generalizes \Cref{lem:gaitonde_recovery} from \cite{gaitonde2024unified}. The proof closely follows the proof of Lemma~{6.4} in \cite{sparsitron}. Their lemma also recovers the external fields, contrary to a remark in \cite{gaitonde2024unified}.
\begin{lemma}
\label{lem:polynomial_recovery}
    For $C>0$, let $\dmrf$ be a \csmooth $t$-MRF. Let $p$ be a polynomial and $i\in [n]$ such that
$\E_{X\sim \dmrf}[\left(\sigma(p(X))-\sigma(2\di \psi(X))\right)^2]\leq \epsilon$ for $0<\epsilon\leq \exp(-2C)\cdot 2^{-t}$. Then, we have that \[
\norm{p-2\di\psi}_1\leq  O(1)(4t)^t\binom{n}{t}\exp(5C)\sqrt{\epsilon}.\]
\end{lemma}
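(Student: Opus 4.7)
\medskip

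\noindent\textbf{Proof proposal.} The plan is to follow the template of Lemma~6.4 of Klivans--Meka, replacing their global $\delta$-unbiasedness by our local property of $C$-smoothness. Set $q(x) \coloneq p(x) - 2\di\psi(x)$, a polynomial of degree at most $t-1$; the goal is to bound $\|q\|_1$ coefficient by coefficient. The first step is to pass from squared error on the sigmoid to squared error on $q$ restricted to the good event. By the claim already established in the proof of the structure-recovery lemma, for every $x$ with $x_{[n]\setminus S}\in \scevent$ we have $|2\di\psi(x)|\leq 4C$. Combining with \Cref{fact:antilipschitz_sigmoid} and squaring gives, for any maximal monomial $S$ of the remaining polynomial and for every $x$ with $x_{[n]\setminus S}\in \scevent$,
\[
\bigl(\sigma(p(x))-\sigma(2\di\psi(x))\bigr)^2 \;\geq\; \exp(-8C-6)\cdot \min\bigl(1,\,q(x)^2\bigr).
\]
Integrating against $\dmrf$ and applying Markov's inequality then yields, for every $\delta\in(0,1]$,
\[
\Pr_{X\sim \dmrf}\bigl[\,|q(X)|\geq \delta\text{ and } X\in \cevent\,\bigr] \;\leq\; \frac{\exp(8C+6)\,\epsilon}{\delta^2}.
\]

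Next I would run a downward induction on $|S|$ from $t-1$ down to $0$, peeling off one maximal coefficient at a time. At the top level, fix a maximal monomial $S$ of $q$; by \Cref{lem:anticonc_dist} every $x$ has a neighbor $y$ at Hamming distance at most $|S|\leq t-1$ with $|q(y)|\geq |\widehat{q}(S)|$. To lower bound $\Pr[|q(X)|\geq |\widehat{q}(S)|]$ using only the mass on $\cevent$, I would replicate the argument in the proof of \Cref{lem:struct_recovery_polynomial}: if $X_{[n]\setminus S}=z\in \scevent$ then conditioning on $X_{[n]\setminus S}=z$ assigns to the particular neighbor $y$ probability at least $2^{-t+1}\exp(-2C)$, using $C$-smoothness to compare $\psi(y)$ to the most likely completion (compare \Cref{eqn: polynomial_anticonc} in the proof of \Cref{lem:struct_recovery_polynomial}). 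Summing over $z\in \scevent$ and using $\Pr[X_{[n]\setminus S}\in \scevent]\geq 7/8$ gives
\[
\Pr\bigl[\,|q(X)|\geq |\widehat{q}(S)|\text{ and }X\in \cevent\,\bigr] \;\geq\; 2^{-t}\exp(-2C).
\]
Comparing with the Markov bound above (applied at $\delta=|\widehat{q}(S)|$, valid as long as $|\widehat{q}(S)|\leq 1$, which is guaranteed by the assumption $\epsilon\leq \exp(-2C)2^{-t}$), I get the per-coefficient bound $|\widehat{q}(S)|\leq 2^{t/2}\exp(5C)\sqrt{\epsilon}$ up to constants.

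The inductive step applies the same recipe after replacing $q$ by $q' = q - \widehat{q}(S)\chi_S$; this preserves the squared-error bound up to an additive error of the previously extracted coefficient, so after peeling $k$ coefficients the tolerated error inflates by a factor $\exp(O(C))\cdot k$. Iterating through all at most $\binom{n}{t}$ monomials of $q$ in decreasing order of size and tracking error propagation through the $t$ levels of induction accumulates a factor of $(4t)^t$, and summing the per-coefficient bounds yields $\|q\|_1\leq O(1)(4t)^t\binom{n}{t}\exp(5C)\sqrt{\epsilon}$, as claimed.

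The main obstacle is bookkeeping the error propagation across the induction: at each level I extract a maximal monomial, modifying the polynomial pointwise by at most $|\widehat{q}(S)|$, and the residual squared-error bound degrades. Making sure the resulting geometric blowup stays within $(4t)^t$ (rather than exploding) requires choosing the induction order carefully—by decreasing $|S|$, with all monomials of a fixed size handled together—and using the fact that only monomials containing the previously peeled $S$ are affected, which keeps the number of affected coefficients at each step bounded by $\binom{n}{t-|S|}$. This is a direct adaptation of the corresponding bookkeeping in Klivans--Meka; the only novelty is replacing their uniform lower bound on conditional probabilities with the local one given by $C$-smoothness, for which the previous step already paid the required $\exp(O(C))$ factor.
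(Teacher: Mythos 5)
Your base-case step is correct and matches the paper's \Cref{claim:rec_max_monomial}: on the event $X_{[n]\setminus S}\in\scevent$ you have $|2\di\psi(X)|\leq 4C$ by \Cref{claim: scevent_bound_factorization}, anti-Lipschitzness plus Markov gives the $\exp(O(C))\epsilon/\delta^2$ tail bound, and the anticoncentration via \Cref{lem:anticonc_dist} together with the condition $\epsilon\leq\exp(-2C)2^{-t}$ yields $|\widehat{q}(S)|\leq 2^{t/2}\exp(5C+3)\sqrt{\epsilon}$ for maximal $S$.

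The gap is in your inductive step. You propose to replace $q$ by $q'=q-\widehat{q}(S)\chi_S$ and re-run the sigmoid argument on the modified polynomial, asserting that after peeling $k$ coefficients the tolerated squared error inflates by a factor $\exp(O(C))\cdot k$. That inflation claim is not justified. Peeling a term of magnitude $\alpha$ shifts $\sigma(p(\cdot))$ pointwise by up to $\alpha/4$, so after peeling $k$ terms each of size $\alpha\approx 2^{t/2}\exp(5C)\sqrt{\epsilon}$ the new squared error is $\bigl(\sqrt{\epsilon}+k\alpha/4\bigr)^2\approx k^2\,2^t\exp(10C)\,\epsilon$, which grows \emph{quadratically} in $k$ with an extra $\exp(O(C))$ each time; and since this enlarged error then feeds into the bound on the next peeled coefficient, the recursion $\alpha_{k+1}\lesssim\exp(O(C))\bigl(\sqrt{\epsilon}+\sum_{j\leq k}\alpha_j\bigr)$ blows up geometrically in $k$, destroying the $(4t)^t\binom{n}{t}$ bound once $k$ reaches $\binom{n}{t}$. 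The paper's induction avoids any modification of $p$ or re-entry into the sigmoid analysis: for \emph{every} $I$ with $|I|=\ell\leq t-1$ the averaging argument on the \emph{original} squared-error bound $\epsilon$ supplies some $z\in\scevent$ with $\E\bigl[(\sigma(p(X))-\sigma(2\di\psi(X)))^2\mid X_{[n]\setminus I}=z\bigr]\leq 2\epsilon$, yielding $|\widehat{r_z}(I)|\leq\exp(5C+3)2^{t/2}\sqrt{\epsilon}$ with the \emph{same} $\epsilon$ for all $I$. These bounds on the restricted coefficients $\widehat{r_z}(I)=\sum_{J\supseteq I}\widehat{r}(J)\chi_{J\setminus I}(z)$ (which are not the raw $\widehat{r}(I)$) are then combined via the purely arithmetic recurrence of Klivans--Meka on the level sums $\rho_\ell=\|r_{=\ell}\|_1$, and that recurrence only compounds by $2^t t^t$ overall, with no dependence on how many coefficients have already been processed. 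You should replace your peeling mechanism with this restriction-based argument; the key idea you are missing is that you can always invoke the averaging/anticoncentration machinery against the original $\epsilon$, leaving all the compounding to a loss-free arithmetic step.
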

\begin{proof}
    This proof follows closely the proof of Lemma~6.4 in \cite{sparsitron}. Most of the details are the same, except that we make appropriate changes to handle the fact that we do not have unbiasedness, similar to \Cref{lem:struct_recovery_polynomial}. We borrow notation from the proof of the aforementioned lemma in \cite{sparsitron}. We highlight the parts where we make changes and do not re-derive the steps that are exactly the same. Let $r=p-2\di\psi$ be the difference polynomial. For $\ell\leq t-1$, let $r_{=\ell}$ be the polynomial obtained from $r$ by only considering monomials of size exactly equal to $\ell$. For $\ell\leq t-1$, let $\rho_{\ell}=\norm{r_{=\ell}}_1$. Clearly, the quantity we want to bound is $\norm{r}_1=\sum_{i=0}^{t-1}\rho_{i}$. We bound $\rho_0,\ldots \rho_{t-1}$ inductively, starting with $\rho_{t-1}$. We first, bound $\rho_{t-1}$.
    \begin{claim}
    \label{claim:rec_max_monomial}
    Consider any maximal monomial $S$ of $r=p-2\di\psi$ for $p$ satisfying the assumptions of \Cref{lem:polynomial_recovery}. Then, it holds that $|\widehat{r}(S)|\leq \exp(5C+3)\cdot 2^{t/2}\sqrt{\epsilon}$.
    \end{claim}
    \begin{proof}
Recall the definition of $\scevent$. From the fact that $\Pr_{X\sim \dmrf}[X_{[n]\setminus S}\in \scevent]\geq \frac{7}{8}$ and the assumption of the lemma, we obtain (using an averaging argument) that there exists a vector $z\in \scevent$ such that 
    \[
    \E_{X\sim \dmrf}\left[(\sigma(p(X))-\sigma(2\di\psi(X)))^2\mid X_{[n]\setminus S}=z\right]\leq 2\epsilon.
    \]

    Similar to the proof of \Cref{lem:struct_recovery_polynomial}, let $r_{z}$ be the polynomial obtained from $r$ by fixing the variables in $[n]\setminus S$ to $z$. Note that $\widehat{r_z}(S)=\widehat{r}(S)$. From the argument preceding \Cref{eqn: polynomial_anticonc}, we have that \[\Pr_{X\sim \dmrf}\left[|r_z(X)\geq |\widehat{r_{z}}(S)|\mid X_{[n]\setminus S}=z\right]\geq \exp(-2C)\cdot 2^{-t+1}.\]
    Recall from \Cref{claim: scevent_bound_factorization} that $|\di\psi(y)|\leq 4C$ for any $y$ with $y_{[n]\setminus S}\in \scevent$. This fact, together with \Cref{fact:antilipschitz_sigmoid}, implies that
    \[
    2\epsilon\geq \E_{X\sim \dmrf}\left[\left(\sigma(p(X))-\sigma(2\di\psi(X))\right)^2\mid X_{[n]\setminus S}=z\right]
    \geq \exp(-10C-6)\cdot 2^{-t+1}\cdot \min\left(1, \widehat{r}(S)\right)^2\] 
    From the bound $\epsilon\leq \frac{1}{2}\exp(-5C-3)\cdot 2^{-t+1}$, we have that $\widehat{r}(S)\leq 1$ which implies $|\widehat{r}(S)|\leq \exp(5C+3)\cdot 2^{t/2}\sqrt{\epsilon}$. 
    \end{proof}
    Since all degree $t-1$ terms are maximal, the above claim implies that $\rho_{t-1}\leq \binom{n}{t-1}\exp(5C+3)\cdot 2^{t/2}\sqrt{\epsilon}$. We now do the inductive step. For $|I|=\ell<t-1$, again by an averaging argument, we have $z$ such that $\E_{X\sim \dmrf}\left[\left(\sigma(p(X))-\sigma(2\di\psi(X))\right)^2\mid X_{[n]\setminus I}=z\right]\leq 2\epsilon$. Let $r_{z}$ be the polynomial obtained from $r$ by setting these variables to $z$. Again, repeating the same steps as before, we obtain that $\widehat{r_z}(I)\leq \exp(5C+3)2^{t/2}\sqrt{\epsilon}$. From here, the proof is exactly identical to the proof in \cite{sparsitron} except we set $\epsilon_0\equiv \exp(5C+3)2^{t/2}\sqrt{\epsilon}\cdot \binom{n}{t-1}$. Their analysis yields that 
    \[
    \norm{r}_{1}\leq 2^{t}t^{t}\cdot \epsilon_0\leq O(1)(4t)^t\binom{n}{t}\exp(5C)\sqrt{\epsilon}.
    \]
\end{proof}

\section{Parameter Recovery vs TV Distance}
We now argue that an MRF $\widehat{D}$ that has parameters close to $\dmrf$ has low TV distance (in fact KL divergence) with respect to $\dmrf$.
First we define {KL} divergence and {TV} distance.
\begin{definition}[KL Divergence]
Let $P$ and $Q$ be two distributions. The \textit{KL}-divergence between $P$ and $Q$ is defined as 
\[
\kl(P,Q)\coloneq\E_{X\sim P}[\log(P(X)/Q(X))]
\]
\end{definition}
We now define the Total Variation distance. 
\begin{definition}[TV Distance]
Let $P$ and $Q$ be two distributions. The \textit{KL}-divergence between $P$ and $Q$ is defined as 
\[
\tv(P,Q)\coloneq\frac{1}{2}\sum_{x\in \cube{n}}|P(x)-Q(x)|.
\]
\end{definition}
We are now ready to argue that parameter recovery in $t$-MRFs implies closeness in KL divergence/TV distance. The proof is almost the same as Lemma~3.6 in \cite{gaitonde2024unified}.
\begin{lemma}
\label{lem:param_rec_tv_distance}
    Let $D_{\psi}$ and $D_{\tilde{\psi}}$ be $t$-MRFs with factorization polynomials $\psi$ and $\tilde{\psi}$ respectively such that $\norm{\psi-\tilde{\psi}}_1\leq \epsilon$. Then, (1) $\kl(D_{\psi},D_{\tilde{\psi}})\leq 2\epsilon$, and (2) $\tv(D_{\psi},D_{\tilde{\psi}})\leq \sqrt{\epsilon}$.
\end{lemma}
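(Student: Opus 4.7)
The plan is to expand the KL divergence directly using the exponential family structure of MRFs, then bound each piece using the pointwise fact that $|\psi(x)-\tilde\psi(x)| \le \|\psi-\tilde\psi\|_1 \le \epsilon$ for every $x\in\{\pm1\}^n$ (since every monomial $\chi_S$ evaluates to $\pm 1$). The TV bound will then follow from Pinsker's inequality.

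Concretely, I would first write
\[
\kl(D_\psi, D_{\tilde\psi}) = \E_{X\sim D_\psi}[\psi(X)-\tilde\psi(X)] + \log(Z_{\tilde\psi}/Z_\psi),
\]
using $D_\psi(x) = \exp(\psi(x))/Z_\psi$ and similarly for $D_{\tilde\psi}$. The first term is bounded in absolute value by $\epsilon$, since $|\psi(x)-\tilde\psi(x)|\le \|\psi-\tilde\psi\|_1$ pointwise on $\{\pm1\}^n$. For the second term, I would rewrite the partition function ratio as an expectation under $D_\psi$:
\[
\frac{Z_{\tilde\psi}}{Z_\psi} = \E_{X\sim D_\psi}\bigl[\exp(\tilde\psi(X) - \psi(X))\bigr] \le \exp(\epsilon),
\]
again using the pointwise $\epsilon$ bound. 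Taking logs yields $\log(Z_{\tilde\psi}/Z_\psi) \le \epsilon$, and combining with the first term gives $\kl(D_\psi,D_{\tilde\psi}) \le 2\epsilon$, which is claim (1).

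For claim (2), I would simply invoke Pinsker's inequality $\tv(P,Q) \le \sqrt{\kl(P,Q)/2}$, which with the bound $\kl \le 2\epsilon$ gives $\tv(D_\psi,D_{\tilde\psi}) \le \sqrt{\epsilon}$.

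There is no real obstacle here; the only subtlety is noting the pointwise bound $|\psi(x)-\tilde\psi(x)| \le \|\psi-\tilde\psi\|_1$ on the hypercube, which follows because $\psi - \tilde\psi$ is a multilinear polynomial whose monomials take values in $\{\pm 1\}$ on $\{\pm 1\}^n$. Everything else is a direct manipulation of the exponential family definition plus a standard application of Pinsker.
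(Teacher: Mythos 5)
Your proof is correct and takes essentially the same route as the paper: both rely on the pointwise bound $|\psi(x)-\tilde\psi(x)|\le\|\psi-\tilde\psi\|_1\le\epsilon$ on the hypercube, bound the partition-function ratio by $e^{\epsilon}$, and combine to get $\kl\le 2\epsilon$ before invoking Pinsker. The only cosmetic difference is that the paper folds the two pieces into a single pointwise bound $D_\psi(x)/D_{\tilde\psi}(x)\le e^{2\epsilon}$ while you keep the ``energy'' term and the $\log(Z_{\tilde\psi}/Z_\psi)$ term separate in the KL expansion.
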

\begin{proof}
    We prove (1). (2) follows from Pinsker's inequality. Let $Z_\psi$ and $Z_{\tilde{\psi}}$ are the partition functions of $D_{\psi}, D_{\tilde{\psi}}$ respectively. For any $x\in \cube{n}$, we have that 
    $\exp(\psi(x))\leq \exp(\epsilon)\cdot \exp(\tilde{\psi}(x))$ and  $\exp(\tilde{\psi}(x))\leq \exp(\epsilon)\cdot \exp({\psi}(x))$. From the definition of the partition function, we have that $Z_{\psi}\leq \exp(\epsilon)\cdot Z_{\tilde{\psi}}$ and $Z_{\tilde{\psi}}\leq \exp(\epsilon)\cdot Z_{\psi}$. For any $x\in \cube{n}$, we thus have that $\frac{D_{\psi}(x)}{D_{\tilde{\psi}(x)}}\leq \exp(2\epsilon)$. Now, from the definition of KL divergence, we get that $\kl(D_{\psi},D_{\tilde{\psi}})\leq 2\epsilon$.
\end{proof}
\section{Parameter Recovery in the Pure $t$-spin Model}
\begin{definition}[Pure $t$-spin model]
    A pure $t$-spin model is the distribution of $t$-MRFs such that that factorization polynomials $\psi$ is a random variable of the form 
    \[
    \psi(x)=\frac{\beta}{n^{(t-1)/2}}\sum_{(i_1,\ldots, i_t)\in [n]^{t}}\Gauss(0,1)\cdot \prod_{j=1}^{t}x_{i_j}.
    \]
\end{definition}

We now prove that pure $t$-spin models satisfy the properties required for parameter recovery in \Cref{thm:mrf_param_recovery}.
\begin{lemma}
    Let $\dpsi$ be the pure $t$-spin model with inverse temperature $\beta$. Then, it holds with probability at least $1-O(1/n^t)$ over $\dmrf\sim \dpsi$ that 
    \begin{enumerate}
        \item $\norm{\di\psi}_1\leq \beta\sqrt{t n^{t+1}\log n}$ for all $i\in [n]$,
        \item $\dmrf$ is $O(\beta^2t^2+\beta t\sqrt{t\log n})$-smooth.
    \end{enumerate}
\end{lemma}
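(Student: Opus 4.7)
The statement splits naturally into the two sub-claims, which are handled independently. Part~(1) is a pure Gaussian concentration computation, and Part~(2) reduces directly to \Cref{thm:subgauss_to_smooth} once we check that the pure $t$-spin distribution has suitably subgaussian increments. So the plan is to prove each part in turn and then apply \Cref{thm:subgauss_to_smooth} as a black box for smoothness.

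\emph{Width bound on $\norm{\di\psi}_1$.} View $\psi$ through its multilinear expansion on $\cube{n}$; after collapsing $x_i^2 = 1$, each Fourier coefficient $\widehat\psi(T)$ is a linear combination of the iid Gaussians $\{\widehat\psi_\alpha\}_{\alpha\in [n]^t}$ and hence is itself Gaussian, with variance $\tfrac{\beta^2 N_T}{n^{t-1}}$ where $N_T$ is the number of tuples $\alpha\in [n]^t$ whose odd-multiplicity support equals $T$. The identity $\sum_T N_T = n^t$ and the trivial bound $N_T \le n^t$ give $\mathrm{Var}(\widehat\psi(T)) \le \beta^2 n$ for every $T$. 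A standard Gaussian tail inequality combined with a union bound over the at most $n^t$ pairs $(i,T)$ with $T\ni i$ and $|T|\le t$ then gives
\[
\max_{i,T\ni i}\, |\widehat\psi(T)| \;\le\; O\bigl(\beta\sqrt{n\,t\log n}\bigr)
\]
simultaneously, with probability at least $1 - O(1/n^t)$. Summing over the at most $n^{t-1}$ sets $T\ni i$ of size $\le t$ yields a polynomial bound on $\norm{\di\psi}_1$ of the stated form.

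\emph{Smoothness via \Cref{thm:subgauss_to_smooth}.} By \Cref{thm:subgauss_to_smooth} it suffices to show that the pure $t$-spin distribution over $\psi$ has $O(\beta t)$-subgaussian derivatives in the sense of \Cref{def:subgauss_derivatives}, for then smoothness with parameter $O((\beta t)^2 + \beta t\sqrt{t\log n}) = O(\beta^2 t^2 + \beta t\sqrt{t\log n})$ follows with probability at least $1 - 1/(n+1)^t$. Fix $x,y\in\cube{n}$ with $d_H(x,y)\le t$ and let $S = \{i : x_i\ne y_i\}$. Then
\[
\psi(x) - \psi(y) \;=\; \frac{\beta}{n^{(t-1)/2}}\sum_{\alpha\in [n]^t} \widehat\psi_\alpha \Bigl(\textstyle\prod_{j} x_{\alpha_j} - \prod_{j} y_{\alpha_j}\Bigr)
\]
is a mean-zero Gaussian whose variance equals $\tfrac{4\beta^2}{n^{t-1}}\cdot N_{\mathrm{odd}}$, where $N_{\mathrm{odd}}$ is the number of $\alpha$ having an odd number of coordinates in $S$. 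A direct inclusion-exclusion gives $N_{\mathrm{odd}} = (n^t - (n-2|S|)^t)/2$, and Bernoulli's inequality then yields $n^t - (n-2|S|)^t \le 2t|S|\, n^{t-1} \le 2t^2 n^{t-1}$. Hence the variance is $O(\beta^2 t^2)$ and $\psi(x)-\psi(y)$ is $O(\beta t)$-subgaussian, giving exactly the input needed to apply \Cref{thm:subgauss_to_smooth}.

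The main obstacle is the combinatorial bound $N_{\mathrm{odd}} = O(t^2 n^{t-1})$: the naive estimate $N_{\mathrm{odd}} \le n^t$ produces a weaker $O(\beta\sqrt{n})$-subgaussianity that would blow up the smoothness parameter to something $\sqrt{n}$-scale and ruin the downstream sample complexity. The key observation that saves us is that ``distinguishing'' multi-indices $\alpha$ — those with odd overlap with the small set $S$, $|S|\le t$ — constitute only an $O(t/n)$ fraction of all $n^t$ tuples, so the derivative increment concentrates at scale $\beta t$ rather than $\beta\sqrt{n}$. Once this combinatorial observation is in place, both parts collapse to Gaussian concentration plus a union bound, and smoothness is immediate from the general \Cref{thm:subgauss_to_smooth}.
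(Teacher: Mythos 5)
Part (2) of your proposal is essentially the paper's argument: you verify that the pure $t$-spin distribution has $O(\beta t)$-subgaussian derivatives and invoke \Cref{thm:subgauss_to_smooth}. Your exact count $N_{\mathrm{odd}}=\bigl(n^t-(n-2|S|)^t\bigr)/2$ combined with Bernoulli's inequality is a slightly slicker route to the same $O(t^2 n^{t-1})$ bound that the paper obtains by a union bound over the $t$ coordinate positions; the resulting $O(\beta t)$-subgaussianity and the appeal to \Cref{thm:subgauss_to_smooth} then match the paper exactly.

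Part (1), however, takes a genuinely different route and in doing so establishes a weaker bound than the one stated. The paper works directly with the $n^t$ raw Gaussian coefficients: each $\frac{\beta}{n^{(t-1)/2}}\widehat{\psi}_\alpha$ has magnitude $O\bigl(\frac{\beta}{n^{(t-1)/2}}\sqrt{t\log n}\bigr)$ simultaneously w.h.p., and a triangle inequality over at most $n^t$ such terms immediately gives $\norm{\di\psi}_1\le\beta\sqrt{tn^{t+1}\log n}$. You instead pass to the collapsed multilinear coefficients $\widehat{\psi}(T)$ and bound each via the very loose $N_T\le n^t$, giving $\mathrm{Var}\bigl(\widehat{\psi}(T)\bigr)\le\beta^2 n$, $\max_{T\ni i}|\widehat{\psi}(T)|=O(\beta\sqrt{nt\log n})$, and after summing over the $n^{t-1}$ sets $T\ni i$, the bound $\norm{\di\psi}_1=O\bigl(\beta n^{t-1/2}\sqrt{t\log n}\bigr)$. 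This exceeds the target $\beta n^{(t+1)/2}\sqrt{t\log n}$ by a factor of $n^{(t-2)/2}$ once $t\ge 3$, so the phrase ``of the stated form'' glosses over a genuine quantitative loss and the lemma as written is not quite recovered. The discrepancy is harmless for \Cref{thm:pure_tspin_recovery}, which already carries an $n^{O(t)}$ factor, but to get the stated constant you should skip the multilinear decomposition altogether and bound $\max_\alpha|\widehat{\psi}_\alpha|$ over the $n^t$ iid Gaussians directly, as the paper does; that argument is both shorter and tight.
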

\begin{proof}
    We first prove (1). The number of tuples $(i_1,i_2,\ldots i_t)$ is at most $n^{t}$. By standard gaussian tails, the maximum absolute values of all these coefficients is at most $\frac{\beta}{n^{(t-1)/2}}\sqrt{t\log n}$ with probability at least $1-O(1/n^t)$. Thus, we obtain $\norm{\di\psi}_1\leq \beta\sqrt{t n^{t+1}\log n}$. To prove (2), we argue that $\dpsi$ has subgaussian derivatives and then use \Cref{thm:subgauss_to_smooth}. For any multiset $\alpha\in [n]^t$ and set $S\subseteq[n]$, we say that $|S\cap \alpha|$ is odd if the number of common elements between $S$ and $t$ (counting repetitions) is odd. Observe that for any set $S$ with $|S|\leq t$ and any $x,y\in \cube{n}$ with $x$ and $y$ differing in the set $S$, we have that 
    \[
    \psi(x)-\psi(y)=\frac{2\beta}{n^{(t-1)/2}}\sum_{\substack{\alpha\in [n]^t\\ |S\cap \alpha| \text{ is odd} }}\Gauss(0,1)\cdot \prod_{i}^{t}x_{\alpha_i}.
    \]
    We now count the number of terms in the above expression as that determines the subgaussianity of $\psi(x)-\psi(y)$. Since $S$ has at most $t$ terms and the intersection is at least $1$, the number of terms is upper bounded by $|\bigcup_{i=1}^{t}\{\alpha \mid \alpha_i\in S, \alpha\in [n]^t\}|\leq t^2\cdot n^{t-1}$. Thus, we have that $\psi(x)-\psi(y)$ is $O(\beta t)$-subgaussian. Now, from \Cref{thm:subgauss_to_smooth}, we have that $\dmrf$ is $O(\beta^2t^2+\beta t\sqrt{t\log n})$-smooth. 
\end{proof}
Note that the smoothness is a factor of $t$ worse than what was obtained in \Cref{lem:random_mrf_prop}. This is because in the definition of the pure $t$-spin model, the same set can be counted multiple times whereas this was not allowed in the definition of the random MRF (\Cref{defn:random_MRF}). The following theorem on parameter recovery of these models is now immediate from \Cref{thm:mrf_param_recovery}.
\begin{theorem}
\label{thm:pure_tspin_recovery}
    Let $\dpsi$ be a pure $t$-spin model with inverse temperature $\beta$. With probability at least $1-O(1/n^t)$ over $\dmrf\sim \dpsi$, there exists an algorithm that draws $N=\frac{\exp(O(\beta^2t^2+\beta t\sqrt{t\log n}))\cdot n^{O(t)}\cdot \log(1/\delta\epsilon)}{\epsilon^8}$ samples and runs in time $O(N\cdot n^t)$ that outputs a $t$-MRF $D_{\tilde{\psi}}$ such that (1) $\norm{\psi-\tilde{\psi}}_1\leq \epsilon^2$, (2) $\kl(D_{\psi},D_{\tilde{\psi}})\leq 2\epsilon^2$, and (3) $\tv(D_{\psi},D_{\tilde{\psi}})\leq {\epsilon}$.
\end{theorem}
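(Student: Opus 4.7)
The plan is to derive Theorem \ref{thm:pure_tspin_recovery} as a direct corollary of Theorem \ref{thm:mrf_param_recovery} (the general parameter-recovery guarantee for $C$-smooth MRFs with bounded width) by instantiating it with the two structural facts established in the preceding lemma about the pure $t$-spin model.

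First I would invoke the preceding lemma to fix, on a single high-probability event, two properties of a draw $D_\psi \sim \dpsi$: (i) the width bound $\lVert \di \psi \rVert_1 \leq \lambda$ for all $i \in [n]$ with $\lambda = \beta\sqrt{t n^{t+1}\log n}$, and (ii) $C$-smoothness with $C = O(\beta^2 t^2 + \beta t \sqrt{t \log n})$. Each holds with probability at least $1 - O(1/n^t)$, so a union bound gives both simultaneously at the claimed failure rate. This is the step that uses the heart of our technical contribution (subgaussian-derivatives $\Rightarrow$ $C$-smoothness via Theorem \ref{thm:subgauss_to_smooth}); everything downstream is plug-and-chug.

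Next I would apply Theorem \ref{thm:mrf_param_recovery} with the above $C$ and $\lambda$ and target error $\epsilon^2$ in the $\ell_1$ sense. Its sample complexity $N = \lambda^2 (nt)^{O(t)} \exp(O(C)) \log(n/\delta \epsilon)/\epsilon^8$ becomes
\[
N = \exp\bigl(O(\beta^2 t^2 + \beta t\sqrt{t \log n})\bigr) \cdot n^{O(t)} \cdot \log(1/\delta \epsilon)/\epsilon^8,
\]
after absorbing the polynomial factors $\lambda^2 = O(\beta^2 t n^{t+1} \log n)$ and $(nt)^{O(t)}$ into the $n^{O(t)}$ term (using $t \leq n$). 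The running time of $O(N \cdot n^t)$ is inherited directly. The output $\tilde\psi$ satisfies (1) by Theorem \ref{thm:mrf_param_recovery}, and (2), (3) follow from Lemma \ref{lem:param_rec_tv_distance} applied with the $\ell_1$ bound $\epsilon^2$ (which yields $\kl \leq 2\epsilon^2$ and, via Pinsker, $\tv \leq \epsilon$).

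There is no real obstacle: the only non-trivial pieces, namely $C$-smoothness of random MRFs and the reduction from parameter recovery to smooth-MRF learning, have already been proved. The mild subtlety is that the pure $t$-spin factorization polynomial allows repeated indices in the tensor, which increases the effective number of coefficients touching any small coordinate set from $O(t d^{t-1})$ (as in Lemma \ref{lem:random_mrf_prop}) to $O(t^2 n^{t-1})$; this is precisely the reason the smoothness parameter $C$ carries an extra factor of $t$ compared to the random-MRF case, and it is the only place one must be careful when invoking Theorem \ref{thm:subgauss_to_smooth}.
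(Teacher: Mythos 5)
Your proposal is correct and takes essentially the same route as the paper: it invokes the preceding lemma to establish the width bound $\lambda=\beta\sqrt{tn^{t+1}\log n}$ and the $O(\beta^2t^2+\beta t\sqrt{t\log n})$-smoothness on a $1-O(1/n^t)$ event, then plugs these into \Cref{thm:mrf_param_recovery} and absorbs the polynomial factors into $n^{O(t)}$. Your observation about repeated indices in the tensor inflating the number of contributing terms from $O(td^{t-1})$ to $O(t^2 n^{t-1})$, and hence the smoothness parameter picking up an extra factor of $t$, is exactly the subtlety the paper flags.
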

\section{Improved bounds for Rademacher Random MRFs}
\label{sec:random_mrf_rademacher}
For our algorithm with improved sample complexity to exactly learn the random MRF with Rademacher weights, we require a slightly modified version of \Cref{thm:sparsitron}.
\begin{theorem}
\label{thm:sparsitron_known_params}
    Let $\lambda,\epsilon,\delta>0$. Let $D$ be a distribution on $\cube{n}\times \cube{}$ where $\Pr[Y=+1|X]=\sigmoid(w\cdot X+g(X))$ for $(X,Y)\sim D$ where  $w\in \R^{n}$ is an unknown vector with $\norm{w}_1\leq \lambda$ and $g:\cube{n}\to \R$ is a known function.  There exists an algorithm that takes $N=O\left(\lambda^2(\ln(n/\delta\epsilon))/\epsilon^2\right)$ independent samples from $D$, runs in time $O(nN)$, and outputs a vector $\widehat{w}$ such that
    \[
    \E_{(X,Y)\sim D}\left[(\sigmoid(w\cdot X+g(X))-\sigmoid(\widehat{w}\cdot X+g(X))^2\right]\leq \epsilon
    \] with probability at least $1-\delta$.
\end{theorem}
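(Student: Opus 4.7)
The plan is to establish the result via a minimal modification of the Sparsitron algorithm and analysis from \cite{sparsitron}, exploiting the fact that $g$ is known to the learner and therefore can be evaluated exactly on every sample. Concretely, I would run the same multiplicative-weight update procedure used in the proof of \Cref{thm:sparsitron}, but with the prediction on sample $X_t$ replaced by $\sigmoid(\widehat{w}^{(t)}\cdot X_t + g(X_t))$ instead of $\sigmoid(\widehat{w}^{(t)}\cdot X_t)$; the coordinate-wise update on the weight distribution is left unchanged. Intuitively, since $g(X)$ is a data-dependent but parameter-independent shift, it affects neither the parameter geometry nor the gradient structure of the learning problem.

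The analysis then mirrors the two-part structure of the original Sparsitron argument. First I would verify the online regret bound for the surrogate loss $\ell(w;X,Y)=(\sigmoid(w\cdot X + g(X)) - (Y+1)/2)^2$: it is convex in $w$, and its gradient $(\sigmoid(w\cdot X + g(X))-(Y+1)/2)\cdot \sigmoid'(w\cdot X + g(X))\cdot X$ has $L_\infty$ norm at most $\tfrac{1}{4}\norm{X}_\infty \le \tfrac{1}{4}$, which is precisely the bound used in \cite{sparsitron}. Second I would confirm that the population-to-empirical concentration step (Azuma/Bernstein on the martingale of per-round losses) goes through unchanged, since the range of $\sigmoid$ and hence of $\ell$ remains in $[0,1]$ after the shift. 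These two pieces combine exactly as in the Klivans-Meka proof to produce an iterate $\widehat{w}$ whose expected squared loss against the shifted sigmoid is at most $\epsilon$ after $N=O(\lambda^2\log(n/\delta\epsilon)/\epsilon^2)$ rounds, with success probability $1-\delta$.

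The main obstacle I expect is one of careful bookkeeping rather than a new conceptual idea: one must retrace the KL-divergence potential argument in \cite{sparsitron} while carrying the offset $g(X_t)$ through every step, and verify that no constant implicitly depends on the sigmoid's input being centered at $0$. A cleaner alternative framing, which I would use if the direct modification becomes unwieldy, is to observe that Sparsitron is an instance of online mirror descent with the entropic mirror map on the $\ell_1$ ball; its generic regret guarantee applies to any convex loss with the stated $L_\infty$ gradient control, and our shifted squared loss satisfies these hypotheses since only the evaluation point of $\sigmoid$ is translated while the Lipschitz and smoothness constants of $\sigmoid$ remain unchanged.
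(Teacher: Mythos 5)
Your high-level plan — treat $g$ as a known, parameter-independent offset, evaluate it exactly on each sample, and rerun the Sparsitron argument with $\sigmoid(\cdot + g(X))$ in place of $\sigmoid(\cdot)$ — is exactly the route the paper takes, and it is the right one. The paper changes precisely two lines of Algorithm~2 from \cite{sparsitron} (the loss vector and the empirical-risk evaluation each acquire a $g(x^t)$ inside the link function) and then reverifies the single inequality $\E[Q^t\mid\text{past}]\geq (1/2\lambda)\,\varepsilon(\lambda p^t)$ by adding and subtracting $g(x^t)$.

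However, the justification you give for why the analysis survives the modification contains a genuine error. You assert that the surrogate loss $(\sigmoid(w\cdot X + g(X)) - (Y+1)/2)^2$ is convex in $w$ and that Sparsitron's update uses its gradient, which you compute as $(\sigmoid(\cdot)-\tfrac{Y+1}{2})\sigmoid'(\cdot)X$ with $L_\infty$ norm $\leq 1/4$. Both claims are false: the sigmoid squared loss is \emph{not} convex in $w$ (its second derivative in the margin direction changes sign), and the Hedge loss vector used by Sparsitron is $\ell^t = \tfrac12\bigl(\boldone + (u(\lambda p^t\cdot x^t + g(x^t)) - y^t)x^t\bigr)$, which carries \emph{no} $\sigmoid'$ factor. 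This matters because your fallback argument — ``Sparsitron is OMD with entropy mirror; any convex loss with bounded $L_\infty$ gradient works'' — cannot be applied to the non-convex squared loss; the regret bound of Hedge applies to the \emph{linear} loss $\langle p^t,\ell^t\rangle$, and the conversion of that regret into a bound on the squared-loss risk $\varepsilon(\lambda p^t)$ is a separate GLMtron-style step that uses only the monotonicity of $\sigmoid$, not convexity. Concretely, the inequality that must be checked is
\[
\E_{x^t}\bigl[(\lambda p^t\cdot x^t - w\cdot x^t)\bigl(u(\lambda p^t\cdot x^t + g(x^t)) - u(w\cdot x^t + g(x^t))\bigr)\bigr] \;\geq\; \E_{x^t}\bigl[\bigl(u(\lambda p^t\cdot x^t + g(x^t)) - u(w\cdot x^t + g(x^t))\bigr)^2\bigr],
\]
and verifying it after the $g$-shift is exactly the ``add and subtract $g(x^t)$'' step the paper highlights. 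Your proposal reaches the correct destination, but you should replace the convexity/gradient reasoning with the monotonicity-based GLMtron inequality above; otherwise the stated argument does not actually go through.
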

Note that the only difference between the above theorem and \Cref{thm:sparsitron} is the addition of the known function $g(X)$ to the conditional probability. The proof of the above theorem is almost identical to that of \Cref{thm:sparsitron} (also Theorem~3.1 in \cite{sparsitron}) with very few additional changes (we change one line in their algorithm). We descrive the change below. We borrow the notation from the proof of \cite{sparsitron} and only highlight key changes. 
\begin{proof}[Proof of \Cref{thm:sparsitron_known_params}]
We now use $u$ instead of $\sigma$ to refer to the link function. We apply the transformation $(x,y)\rightarrow (x,\frac{y+1}{2})$ so that the $+1$ labels are mapped to $+1$ and the $-1$ labels are mapped to $0$. From now on, we assume that $D$ is the distribution of inputs after this transformation. Thus, we have that $\E_{(X,Y)\sim D}[Y\mid X=x]=u(w\cdot X+g(X))$. 

The only changes we make to their algorithm are the following: (1) in line 4 of Algorithm~2 of \cite{sparsitron}, we redefine the loss vector $\ell^t$ to now be $\ell^t\coloneq (1/2)(\boldone+(u(\lambda p^t\cdot x^t+g(x^t))-y^t)x^t)$, and (2) in line $7$ we compute the empirical risk as $\hat{\varepsilon}(\lambda p^t)=(1/M)\sum_{j=1}^{M}(u(\lambda p^t\cdot a^t+g(a^t))-b^j)^2$. Note the addition of the term $g(x^t)$ in both the steps. We can do this as we know the function $g$. 

Now, we highlight the changes in the analysis. The steps of the argument until Equation~3.3 of the proof of \cite{sparsitron} are identical as the new loss vector $\ell^t$ is still a vector in $[0,1]^n$. The only change is in how we bound $\E_{(x^t,y^t)}[Q^t\mid (x^1,y^1),\ldots, (x^{t-1},y^{t-1})]$. We have that 
\begin{align*}
    \E_{(x^t,y^t)}&[Q^t\mid (x^1,y^1),\ldots, (x^{t-1},y^{t-1})]=\E_{(x^t,y^t)}[(p^t-(1/\lambda)w)\cdot \ell^t]\\
    &=(1/2)\E_{(x^t,y^t)}[(p^t-(1/\lambda)w)\cdot (u(\lambda p^t\cdot x^t+g(x^t))-y^t)\cdot x^t]\\
    &=(1/2\lambda)\E_{x^t}[(\lambda p^t\cdot x^t+g(x^t)-w\cdot x^t-g(x^t))(u(\lambda p^t\cdot x^t+g(x^t))-u(w\cdot x^t+g(x^t)))]\\
    &\geq (1/2\lambda)\E_{x^t}[(u(\lambda p^t\cdot x^t+g(x^t))-u(w\cdot x^t+g(x^t)))^2]=(1/2\lambda)\varepsilon(\lambda p^t)
\end{align*}
where $\varepsilon(v)\coloneq \E_{(X,Y)\sim D}[(u(v\cdot X+g(X))-u(w\cdot X+g(X)))^2]$ is the risk. The main difference from the proof of \cite{sparsitron} is the third equation where we add and subtract $g(x^t)$ and then use the lipschitzness property. The rest of the proof is exactly identical.
\end{proof}

We are now ready to prove \Cref{thm:random_mrf_param_rademacher}. 
\begin{proof}
    \label{proof:random_mrf_param_rademacher}
    Recall that $\dmrf\sim \dpsi_{G,\beta,t}$ (with rademacher weights) is $C$-smooth with $C=O(\beta^2t+\beta t\sqrt{\log n})$ with probability at least $1-O(1/n^t)$. We henceforth assume $\dmrf$ is $C$-smooth.
    We recover the coefficients of $\psi$ iteratively, starting with the degree $t$ terms and proceeding downwards. We use fresh samples per iteration. We first show the base case of recovering degree $t$ terms. For each $i\in [n]$, we use \Cref{thm:sparsitron} to find polynomials $\{p^{t}_i\}_{i\in [n]}$ such that 
    \[
    \E_{X\sim D_{\psi}}[(\sigma(p^{t}_i(X))-\sigma(2\di\psi(X)))^2]\leq \exp(-10C-6)\cdot(\beta^2/(16(2d)^t))
    \] 

    Now, from \Cref{claim:rec_max_monomial}, we have that $|\widehat{p^t_i}(S)-\widehat{2\di\psi}(S)|< \frac{\beta}{4d^{(t-1)/2}}$ for any maximal monomial $S$ of $p^{t}_i-2\di\psi$. To recover the coefficient $\hat{\psi}(S)$ for $|S|=t$, we consider any $i\in S$. Now, we have that $\widehat{\di\psi(X)}(S\setminus\{i\})=\widehat{\psi}(S)$. Note that $S\setminus \{i\}$ is a maximal monomial of $p^{t}_i-2\di\psi$ as it has degree $t-1$ which is the degree of the polynomial. Thus, it holds that $|\widehat{p^t_{i}}(S\setminus \{i\})/2-\widehat{\psi}(S)|\leq \frac{\beta}{2d^{(t-1)/2}}$. To  obtain $\widehat{\psi}(S)$ exactly, we round $\widehat{p^t_{i}}(S\setminus \{i\})/2$ to the nearest multiple of $\beta/d^{(t-1)/2}$. Thus, we have obtained all the degree $t$ coefficients of $\psi$ exactly. The sample complexity of this step is $N^{t}=\exp(O(\beta^2t+\beta t\sqrt{\log n}))\cdot O(d^{t}\log (n/\delta\beta)/\beta^2)$ and follows from \Cref{thm:sparsitron}.

    We now describe how to obtain coefficients of degree $j$ if we know $\widehat{\psi}(S)$ exactly for all $|S|>j$. Construct a function $g^{j}:\cube{n}\to\cube{}$ such that $g^j(x)=\sum_{|T|>j}\widehat{\psi}(T)\parity{T}(x)$. Let the polynomial $g^j_i$ be defined as $g^j_i=2\di g^j$. We can construct these polynomials as we know all coefficients of size greater than $j$. Now, for each $i\in [n]$, we find polynomials $\{p^{j}_i\}_{i\in [n]}$ of degree at most $j$ such that 
    \[
    \E_{X\sim D_{\psi}}[(\sigma(p^{j}_i(X)+g^{j}_i(X))-\sigma(2\di\psi(X)))^2]\leq \exp(-10C-6)\cdot(\beta^2/(16(2d)^t)).
    \] 
We note that $2\di\psi(X)=2\sum_{|S|\leq j}\widehat{\di\psi}(S)\parity{S}(X)+g^j_i(X)$. Thus, we can find the above polynomials by $\{p^j_i\}_{i\in [n]}$ by running the modified Sparsitron algorithm from \Cref{thm:sparsitron_known_params} (with known function $g^j_i$) after expanding the features to contain all monomials of degree at most $j$. Observe that the degree $j$ monomials in the polynomial $r=p^{j}_i+g^{j}_i-2\di\psi$ are maximal as all higher degree monomials are $0$. Thus, again, we use \Cref{claim:rec_max_monomial} and repeat the argument from the base case to obtain that $|\widehat{p^j_{i}}(S\setminus \{i\})/2-\widehat{\psi}(S)|\leq \frac{\beta}{2d^{(t-1)/2}}$ for all $|S|=j$. To obtain $\widehat{\psi}(S)$ exactly, we again round $\widehat{p^j_{i}}(S\setminus \{i\})/2$ to the nearest multiple of $\beta/d^{(t-1)/2}$. In this way iterating $j=t,t-1,\ldots 1$, we obtain, all the coefficients of $\psi$ exactly. Since we use fresh samples in each iteration, we pay a multiplicative factor of $t$ in the final sample complexity
    
\end{proof}
\end{document}